\documentclass[11pt]{article}
\usepackage[utf8]{inputenc}
\usepackage{authblk}
\usepackage{amsgen,amsmath,amstext,amsbsy,amsopn,amssymb,bbm,listings,parskip,float}
\usepackage{graphicx}
\usepackage[algoruled,boxed,lined]{algorithm2e}
\usepackage{tcolorbox}
\usepackage[font={footnotesize}]{caption}
\usepackage[english]{babel}
\usepackage{amsthm}
\usepackage{enumitem}
\usepackage{pifont}
\usepackage{multirow}
\usepackage{makecell}
\usepackage{mathabx}
\usepackage{microtype}

\newtheorem{theorem}{Theorem}[section]
\newtheorem{corollary}{Corollary}[theorem]
\newtheorem{lemma}[theorem]{Lemma}
\newtheorem{claim}[theorem]{Claim}
\newtheorem{remark}{Remark}[section]
\newtheorem{definition}{Definition}[section]
\newtheorem{assumption}{Assumption}[section]

\usepackage[breaklinks=true]{hyperref}
\usepackage{breakcites}
\hypersetup{
    colorlinks=true,
    linkcolor=blue,
    filecolor=magenta,
    urlcolor=cyan,
    citecolor = blue
}

\urlstyle{same}
\textwidth 6.3in \textheight 8.8in \topmargin -0.5truein
\oddsidemargin .15truein
\parskip .1in

\newenvironment{subroutine}[1][htb]
  {
   \begin{algorithm}[#1]
  }{\end{algorithm}}

\DeclareMathOperator*{\argmin}{arg\,min}

\newcommand{\R}{\mathbb{R}}

\newcommand{\Ps}{\mathbb{P}}
\newcommand{\E}{\mathbb{E}}
\newcommand{\Ehat}{\widehat{\mathbb{E}}}
\newcommand{\1}{\mathbbm{1}}

\newcommand{\D}{\mathcal{D}}

\newcommand{\X}{\mathcal{X}}
\newcommand{\Px}{\mathcal{P}}
\newcommand{\Pxhat}{\widehat{\mathcal{P}}}
\newcommand{\Hs}{\mathcal{H}}
\newcommand{\A}{\mathcal{A}}
\newcommand{\Y}{\mathcal{Y}}

\newcommand{\yhat}{\hat{y}}

\newcommand{\Yhat}{\widehat{Y}}

\newcommand{\qhat}{\hat{q}}
\newcommand{\qtilde}{\tilde{q}}
\newcommand{\Pshat}{\widehat{\mathbb{P}}}
\newcommand{\fpr}{\text{FP}}
\newcommand{\tpr}{\text{TP}}
\newcommand{\fprhat}{\widehat{\text{FP}}}
\newcommand{\tprhat}{\widehat{\text{TP}}}
\newcommand{\fprtilde}{\widetilde{\text{FP}}}
\newcommand{\tprtilde}{\widetilde{\text{TP}}}

\newcommand{\lamb}{\boldsymbol{\lambda}}
\newcommand{\err}{\text{err} \, }
\newcommand{\errhat}{\widehat{\text{err}} \, }
\newcommand{\errtilde}{\widetilde{\text{err}} \, }

\newcommand{\besth}{\text{BEST}_h}

\newcommand{\Qhat}{\widehat{Q}}
\newcommand{\lambhat}{\widehat{\boldsymbol{\lambda}}}
\newcommand{\Qtilde}{\widetilde{Q}}
\newcommand{\lambtilde}{\widetilde{\boldsymbol{\lambda}}}
\newcommand{\thetatilde}{\widetilde{\boldsymbol{\theta}}}

\newcommand{\rhat}{\widehat{\boldsymbol{r}}}
\newcommand{\rtilde}{\widetilde{\boldsymbol{r}}}
\newcommand{\htilde}{\widetilde{h}}

\title{Differentially Private Fair Learning}
\author[2]{Matthew Jagielski}
\author[1]{Michael Kearns}
\author[1]{Jieming Mao}
\author[2]{Alina Oprea}
\author[1]{\\ Aaron Roth}
\author[1]{Saeed Sharifi-Malvajerdi}
\author[2]{Jonathan Ullman}
\affil[1]{University of Pennsylvania}
\affil[2]{Northeastern University}
\setcounter{Maxaffil}{0}

\date{\today}

\begin{document}
\maketitle

\begin{abstract}
Motivated by settings in which predictive models may be required to be non-discriminatory with respect to certain
attributes (such as race), but even collecting the sensitive attribute may be forbidden or restricted, we initiate the
study of fair learning under the constraint of differential privacy. We design two learning algorithms that simultaneously promise \emph{differential privacy} and \emph{equalized odds}, a ``fairness'' condition that corresponds to equalizing false positive and negative rates across protected groups. Our first algorithm is a private implementation of the 
equalized odds post-processing approach of \cite{hardt}. This algorithm is appealingly simple, but must be able to use protected group membership explicitly at test time, which can be viewed as a form of ``disparate treatment''. Our second algorithm is a differentially private version of the oracle-efficient in-processing approach of \cite{agarwal} that can be used to find the \emph{optimal} fair classifier, given access to a subroutine that can solve the original (not necessarily fair) learning problem. This algorithm is more complex but need not have access to protected group membership at test time. We identify new tradeoffs between fairness, accuracy, and privacy that emerge only when requiring all three properties, and show that these tradeoffs can be milder if group membership may be used at test time. We conclude with a brief experimental evaluation.
\end{abstract}

\section{Introduction}

Large-scale algorithmic decision making, often driven by machine learning on
consumer data, has increasingly run afoul of various social norms, laws and regulations.
A prominent concern is when a learned model exhibits discrimination against
some demographic group, perhaps based on race or gender. Concerns over such
algorithmic discrimination have led to a recent flurry of research on fairness in machine
learning, which includes both new tools and methods for designing fair models, and studies
of the tradeoffs between predictive accuracy and fairness~\cite{fatstar19}.

At the same time, both recent and longstanding laws and regulations often restrict the use
of ``sensitive'' or protected attributes in algorithmic decision-making. U.S.~law prevents
the use of race in the development or deployment of consumer lending or credit scoring models,
and recent provisions in the E.U.~General Data Protection Regulation (GDPR) restrict or prevent
even the collection of racial data for consumers. These two developments --- the demand for
non-discriminatory algorithms and models on the one hand, and the restriction on the collection
or use of protected attributes on the other --- present technical conundrums, since the most
straightforward methods for ensuring fairness generally require knowing or using the
attribute being protected. It seems difficult to guarantee that a trained model is not discriminating
against (say) a racial group if we cannot even identify members of that group in the data.

A recent line of work~\cite{veale2017fairer, weller} made these cogent observations, and proposed an interesting solution employing
the cryptographic tool of \emph{secure multiparty computation} (commonly abbreviated \emph{MPC}). In this model, we imagine
a commercial entity with access to consumer data that excludes race, but this entity would like to build a
predictive model for, say, commercial lending, under the constraint that the model be non-discriminatory
by race with respect to some standard fairness notion (e.g.~equality of false rejection rates).  In order
to do so, the company engages in MPC with a set of regulatory agencies, which are either trusted parties holding consumers' race data~\cite{veale2017fairer}, or hold among them a secret sharing of race data, provided by the consumers themselves~\cite{weller}.
Together the company and the regulators apply standard fair machine learning
techniques in a distributed fashion.
In this way the company never directly accesses the race data, but still
manages to produce a fair model, which is the output of the MPC.
The guarantee provided by this solution is
the standard one of MPC --- namely, the company learns {\em nothing more than whatever is implied by its
own consumer data, and the fair model returned by the protocol\/}.

Our point of departure stems from our assertion that MPC is the wrong guarantee to give if our
motivation is ensuring that data about an individual's race does not ``leak'' to the company via the model.  In particular, MPC
{\em implies nothing about what individual information can already be inferred from the learned model itself\/}.
The guarantee we would prefer is that the company's data and the fair model do not leak anything
about an individual's race beyond what can be inferred from ``population level'' correlations. That is, the fair model should not leak anything beyond inferences that could be carried out {\em even if the individual in question had declined to provide her racial identity\/}.
This is exactly the type of promise made by {\em differential privacy\/}~\cite{dwork}, but not
by MPC.

\medskip
\textbf{The insufficiency of MPC.}
To emphasize the fact that concerns over leakage of protected attributes
under the guarantee of MPC are more than hypothetical, we describe a natural example where this leakage would actually occur.

{\em \noindent Example. An SVM model, trained in the standard way, is represented by the underlying support vectors, which are just data points from the training data. Thus, if race is a feature represented in the training data, an SVM model computed under MPC reveals the race of the individuals represented in the support vectors. This is the case even if race is uncorrelated with all other features and labels, in which case differential privacy would prevent such inferences. We note that there are differentially private implementations of SVMs. 
}


The reader might object that, in this example, the algorithm is trained to use racial data at test time, and so the output of the algorithm is directly affected by race.  But there are also examples in which the same problems with MPC can arise
{\em even when race is not an input to the learned model, and race is again uncorrelated with the company's data.}
We also note that SVMs are just an extreme case of a learned model fitting, and thus potentially revealing, its training data.  For example, points from the training set can also be recovered from trained neural networks \cite{song2017machine}.

\medskip
\textbf{Our approach: differential privacy.}
These examples show that cryptographic approaches to ``locking up'' sensitive information
during a training process are  insufficient as a privacy mechanism --- {\em we
need to  explicitly reason about what can be inferred from the output of a learning algorithm}, not simply
say that we cannot learn more than such inferences.
In this paper we thus instead consider the problem of designing fair learning algorithms that also promise differential privacy
with respect to consumer race, and thus give strong guarantees about what can be inferred from the learned model.

We note that the guarantee of differential privacy is somewhat subtle, and does {\em not}
promise that the company will be unable to infer race. For example, it might be
that a feature that the company already has, such as zip codes, is perfectly correlated with race, and
a computation that is differentially private might reveal this correlation.
In this case, the company will be able to infer racial information about its customers.
However, differential privacy prevents leakage of individual racial data beyond what can be inferred from population-level correlations. 

Like~\cite{veale2017fairer}, our approach can be viewed as a collaboration between
a company holding non-sensitive consumer data and a regulator holding sensitive data. Our algorithms
allow the regulator to build fair models from the combined data set (potentially also under MPC) in a way that ensures the company, or any other
party with access to the model or its decisions, cannot infer the race of any consumer in the data much more accurately than they could do from population-level statistics alone. Thus, we
comply with the spirit of laws and regulations asking that sensitive attributes not be leaked, while still allowing
them to be used to enforce fairness.



\subsection{Our Results}


We study the problem of learning classifiers from data with protected attributes. More specifically, we are given a class of classifiers $\Hs$ and we output a randomized classifier in $\Delta(\Hs)$ (i.e.~a distribution over $\Hs$). The training data consists of  $m$ individual data points of the form $(X,A,Y)$. Here $X \in \X$ is the vector of unprotected attributes, $A \in \A$ is the protected attribute and $Y \in \{0,1\}$ is the binary label. As discussed above, our algorithms achieve three goals simultaneously:
\begin{itemize}
\item \textbf{Differential privacy}: Our learning algorithms satisfy \emph{differential privacy} \cite{dwork} with respect to protected attributes. (They need not be differentially private with respect to the unprotected attributes $X$ --- although sometimes are.)
\item \textbf{Fairness:} Our learning algorithms guarantee approximate notions of statistical fairness across the groups specified by the protected attribute. The particular statistical fairness notion we focus on is \emph{Equalized Odds} \cite{hardt}, which in the binary classification case reduces to asking that false positive rates and false negative rates be approximately equal, conditional on all values of the protected attribute (but our techniques apply to other notions of statistical fairness as well, including statistical parity).
\item \textbf{Accuracy:} Our output classifier has error rate comparable to non-private benchmarks in $\Delta(\Hs)$ consistent with the fairness constraints.
\end{itemize}

We evaluate fairness and error as in-sample quantities. Out-of-sample generalization for both error and fairness follow from standard sample-complexity bounds in learning theory, and so we elide this complication for clarity (but see e.g. the treatment in \cite{gerrymandering} for formal generalization bounds).


We start with a simple extension of the \textit{post-processing} approach of \cite{hardt}. Their algorithm starts with a possibly unfair classifier $\Yhat$ and derives a fair classifier by mixing $\Yhat$ with classifiers which are based on protected attributes. This involves solving a linear program which takes quantities $\qhat_{\yhat a y}$ as input. Here $\qhat_{\yhat a y}$ is the fraction of data points with $\Yhat = \yhat, A=a, Y=y$. To make this approach differentially private with respect to protected attributes, we start with $\Yhat$ which is learned without using protected attributes and we use standard techniques to perturb the $\qhat_{\yhat a y}$'s before feeding them into the linear program, in a way that guarantees differential privacy. We analyze the additional error and fairness violation that results from the perturbation. Detailed results can be found in Section \ref{sec:postproc}.

Although having the virtue of being exceedingly simple, this first approach has two significant drawbacks. First, even without privacy, this post-processing approach does not in general produce classifiers with error that is comparable to that of the best fair classifiers, and our privacy preserving modification inherits this limitation. Second, and often more importantly, this post-processing approach crucially requires that protected attributes can be used at test time, and this isn't feasible (or legal) in certain applications. Even when it is, if racial information is held only by a regulator, although it may be feasible to train a model once using MPC, it probably is not feasible to make test-time decisions repeatedly using MPC. 

We then consider the approach of \cite{agarwal}, which we refer to it as \textit{in-processing} (to distinguish it from post-processing). They give an \emph{oracle-efficient} algorithm, which assumes access to a subroutine that can optimally solve classification problems absent a fairness constraint (in practice, and in our experiments, these ``oracles'' are implemented using simple learning heuristics). Their approach does not have either of the above drawbacks: it does not require that protected features be available at test time, and it is guaranteed to produce the approximately optimal fair classifier. The algorithm is correspondingly more complicated. The main idea of their approach (following the presentation of \cite{gerrymandering}) is to show that the optimal fair classifier can be found as the equilibrium of a zero-sum game between a ``Learner'' who selects classifiers in $\Hs$ and an ``Auditor'' who finds fairness violations.  This equilibrium can be approximated by iterative play of the game, in which the Auditor plays exponentiated gradient descent and the Learner plays best responses (computed via an efficient cost-sensitive classification oracle). To make this approach private, we add Laplace noise to the gradients used by the Auditor and we let the Learner run the exponential mechanism (or some other private learning oracle) to compute approximate best responses. Our technical contribution is to show that the Learner and the Auditor still converge to an approximate equilibrium despite the noise introduced for privacy. Detailed results can be found in Section \ref{sec:inproc}.

One of the most interesting aspects of our results is an inherent tradeoff that arises between privacy, accuracy, and fairness, that doesn't arise when any two of these desiderata are considered alone. This manifests itself as the parameter ``$B$'' in our in-processing result (see Table \ref{tab:results}) which mediates the tradeoff between error, fairness and privacy. This parameter also appears in the (non-private) algorithm of \cite{agarwal}---but there it serves only to mediate a  tradeoff between fairness and running time. At a high level, the reason for this difference is that without the need for privacy, we can increase the number of iterations of the algorithm to decrease the error to any desired level. However, when we also need to protect privacy, there is an additional tradeoff, and increasing the number of iterations also requires increasing the scale of the gradient perturbations, which may not always decrease error.

This tradeoff exhibits an additional interesting feature. Recall that as we discussed above, the in-processing approach works even if we can not use protected attributes at test time. But \emph{if we are allowed to use protected attributes at test time}, we are able to obtain a better tradeoff between these quantities --- essentially eliminating the role of the variable $B$ that would otherwise mediate this tradeoff. We give details of this improvement in section \ref{sec:extension} (for this result, we also need to relax the fairness requirement from \emph{Equalized Odds} to \emph{Equalized False Positive Rates}). The main step in the proof is to show that, for small constant $B$ and $\Hs$ containing certain ``maximally discriminatory'' classifiers which make decisions \emph{solely} on the basis of group membership, we can give a better characterization of the Learner's strategy at the approximate equilibrium of the zero-sum game.

Finally, we provide evidence that using protected attributes at test time is necessary for obtaining this better tradeoff. In Section \ref{sec:separation}, we consider the sensitivity of computing the error of the optimal classifier subject to fairness constraints. We show that this sensitivity can be substantially higher when the classifier cannot use protected attributes at test time, which shows that higher error must be introduced to estimate this error privately.

\begin{table}[t!]
\begin{center}
\def\arraystretch{3.5}
\resizebox{\columnwidth}{!}{
\begin{tabular}{|c|c|c|c|c|c|c|}
\hline
 Algorithm & \makecell{Assumptions \\ on $\Hs$} & \makecell{Fairness \\ Guarantee} & \makecell{ Needs \\ access to $A$ \\ at test time?} & \makecell{Does it \\ guarantee \\ privacy of \\ $X$ as well?} & \makecell{Error} & \makecell{Fairness Violation} \\
\hline
DP-postprocessing & None & \makecell{Equalized \\ Odds} & \textcolor{red}{Yes} & \textcolor{red}{No} & $\widetilde{O}\left( \frac{|\A|}{m \epsilon} \right)$~\footnotemark
 & $\widetilde{O}\left(\frac{1}{\min \qhat_{ay}  m \epsilon} \right)$  \\
\hline
\multirow{3}{8em}{DP-oracle-learner} & \makecell{$d_{\Hs} < \infty$ \\ $d_{\Hs} := VC(\Hs)$} & \makecell{Equalized \\ Odds} & \textcolor{blue}{No} & \textcolor{red}{No} & $\widetilde{O}\left(\frac{B}{\min \qhat_{ay}} \sqrt{\frac{|\A| d_\Hs }{m \epsilon}}\right)$ & $B^{-1} + \widetilde{O}\left(\frac{1}{\min \qhat_{ay}} \sqrt{\frac{|\A|d_\Hs}{m \epsilon}}\right)$  \\
\cline{2-7}
& $|\Hs|< \infty$ & \makecell{Equalized \\ Odds} & \textcolor{blue}{No} & \textcolor{blue}{Yes} & $\widetilde{O}\left(\frac{B}{\min \qhat_{ay}} \sqrt{\frac{|\A| \ln(|\Hs|) }{m \epsilon}}\right)$ & $B^{-1} + \widetilde{O}\left(\frac{1}{\min \qhat_{ay}} \sqrt{\frac{|\A| \ln(|\Hs|)}{m \epsilon}}\right)$ \\
\cline{2-7}
& \makecell{$| \Hs| < \infty$, \\ $\Hs$ has maximally \\ discriminatory \\ classifiers} & \makecell{Equalized \\ False Positive \\ Rate} & \textcolor{red}{Yes} & \textcolor{blue}{Yes} & $\widetilde{O}\left(\frac{|\A|}{\min \qhat_{ay}} \sqrt{\frac{|\A| \ln (|\Hs|)}{m \epsilon}}\right)$ & $ \widetilde{O}\left(\frac{|\A|}{\min \qhat_{ay}} \sqrt{\frac{|\A| \ln(|\Hs|)}{m \epsilon}}\right)$  \\
\hline
\end{tabular}
}
\end{center}
\caption{Summary of Results for Our Differentially Private Fair Learning Algorithms. In this table, $m$ is the training sample size, $\qhat_{ay}$ is the fraction of data with $A=a$ and $Y=y$, $|\A|$ is the number of protected groups, and $\epsilon$ is the privacy parameter. $B$ is explained in text. For all but the marked error bound, the comparison benchmark is the optimal fair classifier. The marked bound is compared to a weaker benchmark: the outcome of the non-private post-processing procedure.}
\label{tab:results}
\end{table}


\subsection{Related Work}
The literature on algorithmic fairness is growing rapidly, and is by now far too extensive to exhaustively cover here. See \cite{survey} for a recent survey. Our work builds directly on that of \cite{hardt}, \cite{agarwal}, and \cite{gerrymandering}.  In particular, \cite{hardt} introduces the ``equalized odds'' definition that we take as our primary fairness goal, and gave a simple post-processing algorithm that we modify to make differentially private. \cite{agarwal} derives an ``oracle efficient'' algorithm which can optimally solve the fair empirical risk minimization problem (for a variety of statistical fairness constraints, including equalized odds) given oracles (implemented with heuristics) for the unconstrained learning problem. \cite{gerrymandering}  generalize this algorithm to be able to handle infinitely many protected groups. We give a differentially private version of \cite{agarwal} as well.

Our paper is directly inspired by \cite{weller}, who study how to train fair machine learning models by encrypting sensitive attributes and applying secure multiparty computation (SMC). We share the goal of \cite{weller}: we want to train fair classifiers without leaking information about an individual's race through their participation in the training. Our starting point is the observation that differential privacy, rather than secure multiparty computation, is the right tool for this.

We use differential privacy \cite{dwork} as our notion of individual privacy, which has become an influential ``solution concept'' for data privacy in the last decade. See \cite{aaron} for a survey. We make use of standard tools from this literature, including the Laplace mechanism \cite{dwork}, the exponential mechanism \cite{McSherryT07} and composition theorems \cite{DworkKMMN06,DworkRV10}. 

\section{Model and Preliminaries}\label{sec:model}
Suppose we are given a data set of $m$ individuals drawn $i.i.d.$ from an unknown distribution $\Px$ where each individual is described by a tuple $(X, A, Y)$. $X \in \X$ forms a vector of \textit{unprotected attributes}, $A \in \A$ is the \textit{protected attribute} where $|\A| < \infty$, and $Y \in \Y$ is a binary label. Without loss of generality, we write $\A = \{ 0, 1, \ldots, |\A|-1 \}$ and let $\Y = \{0,1\}$. Let $\Pxhat$ denote the empirical distribution of the observed data. Our primary goal is to develop an algorithm to learn a (possibly randomized) \textit{fair} classifier $\Yhat$, with an algorithm that guarantees the \textit{privacy} of the sensitive attributes $A$. By privacy, we mean differential privacy, and by fairness, we mean (approximate versions of) the \textit{Equalized Odds} condition of \cite{hardt}. Both of these notions are parameterized: differential privacy has a parameter $\epsilon$, and the approximate fairness constraint is parameterized by $\gamma$.  Our main interest is in characterizing the tradeoff between $\epsilon$, $\gamma$, and classification error.

\subsection{Notations}
\begin{itemize}
\item $\Ps$ and $\E$ refer to the probability and expectation operators taken with respect to the true underlying distribution $\Px$. $\Pshat$ and $\Ehat$ are the corresponding empirical versions.
\medskip
\item We will use notation $\fpr_a (\Yhat)$ and $\tpr_a (\Yhat)$ to refer to the false and true positive rates of $\Yhat$ on the subpopulation $\{ A=a \}$.
$$
\fpr_a (\Yhat) = \Ps \left[ \Yhat = 1 \, \vert \, A=a, Y=0\right] \quad , \quad \tpr_a (\Yhat) = \Ps \left[ \Yhat = 1 \, \vert \, A=a, Y=1\right]
$$
 $\fprhat_a (\Yhat)$ and $\tprhat_a (\Yhat)$ are used to refer to the empirical false and true positive rates. $\Delta \fpr_a (\Yhat) = \vert \fpr_a (\Yhat) - \fpr_0 (\Yhat) \vert $ and $\Delta \tpr_a (\Yhat) = \vert \tpr_a (\Yhat) - \tpr_0 (\Yhat) \vert$ are used to measure $\Yhat$'s false and true positive rate discrepancies across groups. $\Delta \fprhat_a (\Yhat)$ and $\Delta \tprhat_a (\Yhat)$ are the corresponding empirical versions.
\medskip
\item $\qhat_{\yhat a y} = \Pshat \, [ \Yhat = \yhat, A= a, Y= y ]$ is the empirical fraction of the data with $\Yhat = \yhat, A= a$, and $Y= y$. With slight abuse of notation, we will use $\qhat_{ay} = \Pshat \, [A= a, Y= y ] = \qhat_{1 a y} + \qhat_{0 a y}$ to denote the empirical fraction of the data with $A=a$ and $Y=y$. We will see that $\min_{a,y} \qhat_{ay}$ shows up in our analyses and plays a role in the performance of our algorithms.
\medskip
\item $\errhat (\Yhat) = \Pshat \, [\Yhat \neq Y]$ is the training error of the classifier $\Yhat$. Given a randomized classifier $Q \in \Delta (\Hs)$, $\errhat (Q) = \E_{h \sim Q} \left[ \Pshat \, [h(X) \neq Y] \right]$.
\end{itemize}
\subsection{Fairness}
\begin{definition}[$\gamma$-Equalized Odds Fairness]\label{eo}
We say a classifier $\Yhat$ satisfies the $\gamma$-Equalized Odds condition with respect to the attribute $A$, if for all $a,a' \in \A$, the false and true positive rates of $\Yhat$ in the subpopulations $\{ A=a \}$ and $\{ A=a' \}$ are within $\gamma$ of one another. In other words, for all $a,a' \in \A$,
\begin{align*}
\left\vert \, \fpr_a (\Yhat) - \fpr_{a'} (\Yhat)  \right\vert \le \gamma
\quad , \quad
\left\vert \, \tpr_a (\Yhat) -  \tpr_{a'} (\Yhat)  \right\vert \le \gamma
\end{align*}
The above constraint involves quadratically many inequalities in $|\A|$. It will be more convenient to instead work with a slightly different formulation of $\gamma$-Equalized Odds in which we constrain the difference between false and true positive rates in the subpopulation $\{ A=a \}$ and the corresponding rates for $\{ A=0 \}$ to be at most $\gamma$ for all $a \neq 0$. The choice of group $0$ as an anchor is arbitrary and without loss of generality. The result is a set of only linearly many constraints. For all $a \in \A$:
\begin{align*}
\Delta \fpr_a (\Yhat) = \left\vert \, \fpr_a (\Yhat) - \fpr_{0} (\Yhat) \right\vert \le \gamma
\quad , \quad
\Delta \tpr_a (\Yhat) = \left\vert \, \tpr_a (\Yhat) -  \tpr_{0} (\Yhat) \right\vert \le \gamma
\end{align*}
\end{definition}
\medskip
Since the distribution $\Px$ is not known, we will work with empirical versions of the above quantities, in which all the probabilities will be taken with respect to the empirical distribution of the observed data $\Pxhat$. Since we will generally be dealing with this definition of fairness, we will use the shortened term ``$\gamma$-\textit{fair}" throughout the paper to refer to ``$\gamma$-\textit{Equalized Odds fair}".

\subsection{Differential Privacy}
Let $\D$ be a \emph{data universe} from which a database $D$ of size $m$ is drawn and let $M$ be an algorithm that takes the database $D$ as input and outputs $M(D) \in \mathcal{O}$. Informally speaking, differential privacy requires that the addition or removal of a single data entry should have little (distributional) effect on the output of the mechanism. In other words, for every pair of \emph{neighboring} databases $D \sim D' \in \D^m$ that differ in at most one entry, differential privacy requires that the distribution of $M(D)$ and $M(D')$ are ``close" to each other where closeness are measured by the privacy parameters $\epsilon$ and $\delta$.
\medskip
\begin{definition}[$(\epsilon, \delta)$-Differential Privacy (DP) \cite{dwork}]\label{dp}
A randomized algorithm $M: \mathcal{D}^m \to \mathcal{O}$ is said to be $(\epsilon, \delta)$-differentially private if for all pairs of neighboring databases $D, D' \in \mathcal{D}^m$ and all $O \subseteq \mathcal{O}$,
$$
\Ps \left[ M(D) \in O\right] \le e^{\epsilon} \, \Ps \left[ M(D') \in O \right] + \delta
$$
where $\Ps$ is taken with respect to the randomness of $M$. if $\delta = 0$, $M$ is said to be $\epsilon$-DP.
\end{definition}
\medskip

Recall that our data universe is $\D = (\X, \A, \Y)$, which will be convenient to partition as $(\X, \Y) \times \A$. Given a dataset $D$ of size $m$, we will write it as a pair $D = (D_I, D_S)$ where $D_I \in (\X,\Y)^m$ represents the insensitive attributes and $D_S \in \A^m$ represents the sensitive attributes. We will sometimes incidentally guarantee differential privacy over the entire data universe $\D$ (see Table \ref{tab:results}), but our main goal will be to promise differential privacy only with respect to the sensitive attributes. Write $D_S \sim D'_S$ to denote that $D_S$ and $D'_S$ differ in exactly one coordinate (i.e. in one person's group membership). An algorithm is $(\epsilon,\delta)$-\emph{differentially private in the sensitive attributes} if for all $D_I \in (\X,\Y)^m$ and for all $D_S \sim D_S' \in \A^m$ and for all $O \subseteq \mathcal{O}$, we have:
$$
\Ps \left[ M(D_I, D_S) \in O\right] \le e^{\epsilon} \, \Ps \left[ M(D_I, D'_S) \in O \right] + \delta
$$

Differentially private mechanisms usually work by deliberately injecting perturbations into quantities computed from the sensitive data set, and used as part of the computation. The injected perturbation is sometimes ``explicitly" in the form of a (zero-mean) noise sampled from a known distribution, say Laplace or Gaussian, where the scale of noise is calibrated to the sensitivity of the query function to the input data. However, in some other cases, the noise is ``implicitly" injected by maintaining a distribution over a set of possible outcomes for the algorithm and outputting a sample from that distribution. The \textit{Laplace} or \textit{Gaussian} mechanisms which are two standard techniques to achieve differential privacy follow the former approach by adding Laplace or Gaussian noise of appropriate scale to the outcome of computation, respectively. The \textit{Exponential} mechanism instead falls into the latter case and is often used when an object, say a classifier, with optimal utility is to be chosen privately. In the setting of this paper, to guarantee the privacy of the sensitive attribute $A$ in our algorithms, we will be using the Laplace and the Exponential Mechanisms which are briefly reviewed below. See \cite{aaron} for a more detailed discussion and analysis.

Let's start with the Laplace mechanism which, as stated before, perturbs the given query function $f$ with zero-mean Laplace noise calibrated to the $\ell_1$-sensitivity of the query function. The $\ell_1$-sensitivity of a function is essentially how much a function would change in $\ell_1$ norm if one changed at most one entry of the database.
\medskip
\begin{definition}[$\ell_1$-sensitivity of a function]\label{sensitivity}
The $\ell_1$-sensitivity of $f: \mathcal{D}^m \to \R^k$ is
$$
\Delta f = \max_{\overset{D,D' \, \in \, \mathcal{D}^m}{D \sim D'}} \left\| f(D) - f(D') \right\|_1
$$
\end{definition}
\medskip
\begin{definition}[Laplace Mechanism \cite{dwork}]\label{laplace}
Given a query function $f: \D^m \to \R^k$, a database $D \in \D^m$, and a privacy parameter $\epsilon$, the Laplace mechanism outputs:
$$
\widetilde{f}_{\epsilon} \left(D \right) = f \left(D \right) + \left(W_1, \ldots, W_k \right)
$$
where $W_i$'s are $i.i.d.$ random variables drawn from $\text{Lap} \left( \Delta f/ \epsilon \right)$.
\end{definition}
Keep in mind that besides having privacy, we would like the privately computed query $\widetilde{f}_{\epsilon} (D)$ to have some reasonable accuracy. The following theorem which uses standard tail bounds for a Laplace random variable formalizes the tradeoff between privacy and accuracy for the Laplace mechanism.
\medskip
\begin{theorem}[Privacy vs. Accuracy of the Laplace Mechanism \cite{dwork}]\label{laplacethm}
The Laplace mechanism guarantees $\epsilon$-differential privacy and that with probability at least $1-\delta$,
$$
|| \widetilde{f}_\epsilon \left(D \right) - f \left(D \right) ||_\infty \le \ln \left(\frac{k}{\delta} \right) \cdot \left( \frac{\Delta f}{\epsilon} \right)
$$
\end{theorem}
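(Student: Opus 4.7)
The plan is to establish the two claims of the theorem separately, using standard calculations with the Laplace density.

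For the privacy claim, I would fix any pair of neighboring databases $D, D' \in \D^m$ and any output point $z \in \R^k$, and compare the densities of $\widetilde{f}_\epsilon(D)$ and $\widetilde{f}_\epsilon(D')$ at $z$. Because the noise coordinates are independent, both densities factor into a product of $k$ univariate Laplace densities with scale $\Delta f / \epsilon$. Writing $f(D) = (f_1,\ldots,f_k)$ and similarly for $D'$, the ratio of densities at $z$ simplifies to $\exp\!\left( \tfrac{\epsilon}{\Delta f} \sum_{i=1}^k \left( |z_i - f_i(D')| - |z_i - f_i(D)| \right) \right)$. By the reverse triangle inequality this exponent is bounded above by $\tfrac{\epsilon}{\Delta f} \|f(D) - f(D')\|_1$, and by the definition of $\ell_1$-sensitivity this is at most $\epsilon$. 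Integrating this pointwise density bound over any measurable event yields the $\epsilon$-DP guarantee (with $\delta = 0$).

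For the accuracy claim, I would use the standard tail bound for a single Laplace random variable: if $W \sim \text{Lap}(b)$, then $\Pr[|W| \ge t b] = e^{-t}$ for all $t \ge 0$. Applying this with $b = \Delta f / \epsilon$ and $t = \ln(k/\delta)$ gives $\Pr[|W_i| \ge \ln(k/\delta) \cdot \Delta f / \epsilon] = \delta/k$ for each coordinate $i$. A union bound over the $k$ coordinates then yields $\Pr[\|\widetilde{f}_\epsilon(D) - f(D)\|_\infty \ge \ln(k/\delta) \cdot \Delta f / \epsilon] \le \delta$, which is exactly the stated accuracy bound.

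Neither step is really an obstacle — both are textbook computations. The only part that requires care is the density-ratio manipulation in the privacy argument, specifically invoking the reverse triangle inequality coordinatewise and then summing to recover the $\ell_1$ norm of $f(D) - f(D')$, which is where the sensitivity assumption is used to close out the bound.
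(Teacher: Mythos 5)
Your proof is correct and is exactly the standard argument from Dwork and Roth (the cited source): the privacy claim via the pointwise density-ratio bound using the reverse triangle inequality and the $\ell_1$-sensitivity, and the accuracy claim via the Laplace tail bound $\Pr[|W| \ge tb] = e^{-t}$ followed by a union bound over the $k$ coordinates. The paper states this theorem as a cited result without reproducing a proof, so there is no alternative paper argument to compare against.
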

While the Laplace mechanism is often used when the task at hand is to calculate a bounded numeric query (e.g. mean, median), the Exponential mechanism is used when the goal is to output an object (e.g. a classifier) with maximum utility (i.e. minimum loss). To formalize the exponential mechanism, let $\ell : \D^m \times \Hs \to \R$ be a loss function that given an input database $D \in \D^m$ and $h \in \Hs$, specifies the loss of $h$ on $D$ by $\ell \left(D,h \right)$. Without a privacy constraint, the goal would be to output $\argmin_{h \in \Hs} \ell \left(D,h \right)$ for the given database $D$, but when privacy is required, the private algorithm must output $\argmin_{h \in \Hs} \ell \left(D,h \right)$ with some ``perturbation" which is formalized in the following definition. Let $\Delta \ell$ be the sensitivity of the loss function $\ell$ with respect to the database argument $D$. In other words,
$$
\Delta \ell = \max_{h \,\in \, \Hs} \max_{\overset{D, D' \, \in \, \mathcal{D}^m}{D \sim D'}} \left\vert \ell \left(D,h \right) - \ell \left(D',h \right) \right\vert
$$

\begin{definition}[Exponential Mechansim \cite{McSherryT07}]
Given a database $D \in \D^m$ and a privacy parameter $\epsilon$, output $h \in \Hs$ with probability proportional to $\exp \left(-\epsilon \ell(D,h)/ 2 \Delta \ell  \right)$.
\end{definition}
\medskip
\begin{theorem}[Privacy vs. Accuracy of the Exponential Mechanism \cite{McSherryT07}]\label{expthm}
Let $h^\star = \argmin_{h \in \Hs} \ell \left(D,h \right)$ and $\htilde_\epsilon \in \Hs$ be the output of the Exponential mechanism. We have that $\htilde_\epsilon$ is $\epsilon$-DP and that with probability at least $1-\delta$,
$$
| \ell \, (D,\htilde_\epsilon ) - \ell \left(D, h^\star \right) | \le \ln \left(\frac{|\Hs|}{\delta} \right) \cdot \left( \frac{2 \Delta \ell}{\epsilon} \right)
$$
\end{theorem}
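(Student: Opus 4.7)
The plan is to prove the privacy and accuracy claims separately, both via elementary manipulations of the Exponential mechanism's sampling formula.

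For the privacy claim, I would fix neighboring databases $D \sim D'$ and a target output $h \in \Hs$, and bound the probability ratio directly. Writing $Z(D) = \sum_{h' \in \Hs} \exp\left(-\epsilon \ell(D,h')/(2\Delta\ell)\right)$ for the normalizing constant,
$$\frac{\Ps[\htilde_\epsilon = h \mid D]}{\Ps[\htilde_\epsilon = h \mid D']} \;=\; \frac{\exp\left(-\epsilon \ell(D,h)/(2\Delta\ell)\right)}{\exp\left(-\epsilon \ell(D',h)/(2\Delta\ell)\right)} \cdot \frac{Z(D')}{Z(D)}.$$
The first factor is at most $\exp(\epsilon/2)$ by the definition of $\Delta\ell$. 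For the ratio $Z(D')/Z(D)$, a termwise comparison shows $\exp\left(-\epsilon \ell(D',h')/(2\Delta\ell)\right) \le \exp(\epsilon/2)\exp\left(-\epsilon \ell(D,h')/(2\Delta\ell)\right)$ for each $h'$, so $Z(D')/Z(D) \le \exp(\epsilon/2)$. Multiplying yields the required $\exp(\epsilon)$ bound, establishing $\epsilon$-differential privacy.

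For the accuracy claim, I would set $t = \ln(|\Hs|/\delta)\cdot(2\Delta\ell/\epsilon)$ and bound $\Ps[\ell(D,\htilde_\epsilon) - \ell(D,h^\star) \ge t]$ by a counting argument on the sampling distribution. This event corresponds to $\htilde_\epsilon$ lying in the set $B = \{h \in \Hs : \ell(D,h) \ge \ell(D,h^\star) + t\}$, whose probability equals $\left(\sum_{h \in B} \exp\left(-\epsilon \ell(D,h)/(2\Delta\ell)\right)\right)/Z(D)$. I would upper bound the numerator by $|\Hs|\exp\left(-\epsilon(\ell(D,h^\star)+t)/(2\Delta\ell)\right)$ (using $|B| \le |\Hs|$ and the defining inequality on $B$), and lower bound $Z(D) \ge \exp\left(-\epsilon \ell(D,h^\star)/(2\Delta\ell)\right)$ by keeping only the $h^\star$ term. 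The ratio simplifies to $|\Hs|\exp(-\epsilon t/(2\Delta\ell)) = \delta$ by the choice of $t$. Since $\ell(D,\htilde_\epsilon) \ge \ell(D,h^\star)$ by optimality of $h^\star$, the absolute-value formulation in the statement collapses to this one-sided bound.

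Neither half presents a serious obstacle --- both are standard textbook computations. The only subtlety worth noting is why the Exponential mechanism places $2\Delta\ell$ rather than $\Delta\ell$ in the denominator of the exponent: this is precisely so that each of the two factors in the privacy calculation contributes at most $\exp(\epsilon/2)$, combining to $\exp(\epsilon)$. Omitting this factor of $2$ would instead yield only a $(2\epsilon)$-DP guarantee, with a corresponding weakening of the accuracy bound.
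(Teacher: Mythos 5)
Your proof is correct and is the standard argument for the exponential mechanism; the paper itself states Theorem~\ref{expthm} as a cited result from \cite{McSherryT07} without reproducing a proof, so there is no in-paper argument to compare against. Both halves (the two-factor privacy calculation bounding each of the probability and normalizer ratios by $e^{\epsilon/2}$, and the utility bound via the ratio of the mass on the bad set to the $h^\star$ term of the partition function) match the canonical treatment, and your observation that the absolute value collapses by optimality of $h^\star$ is exactly the right closing remark.
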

We will discuss some important properties of differential privacy such as \textit{post-processing} and \textit{Composition Theorems} in Appendix \ref{privacyapp}.

\section{Differentially Private Fair Learning: Post-processing}\label{sec:postproc}


In this section we will present our first differentially private fair learning algorithm which will be called \textbf{DP-postprocessing}. The \textbf{DP-postprocessing} algorithm is a private variant of the fair learning algorithm introduced in \cite{hardt} where decisions made by an arbitrary base classifier $\Yhat$ have their false and true positive rates equalized across different groups $\{A=a\}$ in a post-processing step.  Due to the desire for privacy of the sensitive attribute $A$, we assume the base classifier $\Yhat$ is trained only on the unprotected attributes $X$ and that $A$ is used only for the post-processing step. 

The proposed algorithm of \cite{hardt} derives a fair classifier $\Yhat_p$ by mixing $\Yhat$ with classifiers depending on the protected attributes. $\Yhat_p$ is specified by a parameter $p = (p_{\yhat a})_{\yhat,a}$, a vector of probabilities such that $p_{\yhat a} := \Ps \, [ \Yhat_p = 1 \, | \, \Yhat = \yhat, A = a ]$.  Among all fair $\Yhat_p$'s, the one with minimum error can be found by solving a linear program whose coefficients depend only on the $\qhat_{\yhat a y}$ quantities, and thus privacy will be achieved if these quantities are calculated privately using the Laplace mechanism. Once we do this, the differential privacy guarantees of the algorithm will follow from the post-processing property (Lemma \ref{postproc}). While the approach is straightforward and simply implementable, the privately learned classifier will need to explicitly take as input the sensitive attribute $A$ at test time which is not feasible (or legal) in all applications.

We have the \textbf{DP-postprocessing} algorithm written in Algorithm \ref{dpalgo1}. Notice as discussed above, to guarantee differential privacy of the protected attribute, Algorithm \ref{dpalgo1} computes $\qtilde_{\yhat a y}$ (a noisy version of $\qhat_{\yhat a y}$) and then feeds $\qtilde_{\yhat a y}$ into the linear program $\widetilde{\text{LP}}$ (\ref{lptilde}). In this linear program, terms with tildes (e.g. $\qtilde_{a y}$, $\errtilde$, $\fprtilde$, $\tprtilde$) are defined with respect to $\qtilde_{\yhat a y}$ instead of $\qhat_{\yhat a y}$. We analyze the performance of Algorithm \ref{dpalgo1} in Theorem \ref{theorem}. Its proof is deferred to Appendix \ref{subsec:postprocproof}. The main step of the proof is to understand how the introduced noise propagates to the solution of the linear program. We aill also briefly review the fair learning approach of \cite{hardt} in Appendix \ref{subsec:hardt}.
\medskip
\begin{tcolorbox}[title=$\widetilde{\text{LP}}$: $\epsilon$-Differentially Private Linear Program]
\begin{equation}\label{lptilde}
\begin{aligned}
& \ \ \ \argmin_{p} && \errtilde (\Yhat_p)  \\
& \text{ s.t. $ \forall \underset{a \neq 0}{a \in \A}$}   && \Delta \fprtilde_a (\Yhat_p )  \le \gamma + \frac{4 \ln \left(4|\A|/\beta \right) }{\min \{ \qtilde_{a0}, \qtilde_{00}\} \, m \epsilon } \\
&&& \Delta \tprtilde_a (\Yhat_p )  \le \gamma + \frac{4 \ln \left(4|\A|/\beta \right) }{\min \{ \qtilde_{a1}, \qtilde_{01}\} \, m \epsilon } \\
&&& 0 \le p_{\yhat a} \le 1 \quad \forall \yhat,a
\footnotetext[0]{$\errtilde \left(\Yhat_p \right) := \sum_{\yhat, a} \left( \qtilde_{\yhat a 0} - \qtilde_{\yhat a 1} \right) \cdot p_{\yhat a} + \sum_{\yhat,a} \qtilde_{\yhat a 1}$}
\footnotetext[0]{ $\Delta \fprtilde_a \left(\Yhat_p \right) := \Big\vert \fprtilde_a (\Yhat) \cdot p_{1a} + \left(1 - \fprtilde_a (\Yhat ) \right) \cdot p_{0a}  - \fprtilde_0 (\Yhat) \cdot p_{10} - \left(1 - \fprtilde_0 (\Yhat ) \right) \cdot p_{00} \Big\vert$}
\footnotetext[0]{ $\Delta \tprtilde_a \left(\Yhat_p \right)  := \Big\vert \tprtilde_a (\Yhat) \cdot p_{1a} + \left(1 - \tprtilde_a (\Yhat) \right) \cdot p_{0a} - \tprtilde_0 (\Yhat) \cdot p_{10} - \left(1 - \tprtilde_0 (\Yhat) \right) \cdot p_{00}  \Big\vert$}
\end{aligned}
\end{equation}
\end{tcolorbox}

\begin{algorithm}
\KwIn{privacy parameter $\epsilon$,  \\ \ \ \ \ \ \ \ \ \ \ \ confidence parameter $\beta$, fairness violation $\gamma$, \\ \ \ \ \ \ \ \ \ \ \ \ training examples $\{ (X_i, A_i, Y_i) \}_{i=1}^m$ }
\medskip
\begin{itemize}[label=\ding{212}]
\item Train the base classifier $\Yhat$ on $\{ (X_i, Y_i) \}_{i=1}^m$. \\
\item Calculate $\qhat_{\yhat a y} = \Pshat \, [ \Yhat = \yhat, A= a, Y= y ]$.\\
\item Sample $W_{\yhat a y} \overset{i.i.d.}{\sim} \text{Lap} \left(2/m \epsilon \right)$ for all $\yhat, a, y$. \\
\item Perturb each $\qhat_{\yhat a y}$: $\qtilde_{\yhat a y} = \qhat_{\yhat a y} + W_{\yhat a y}$. \\
\item Solve $\widetilde{\text{LP}}$ (\ref{lptilde}) to get the minimizer $\tilde{p}^\star$.
\end{itemize}
\vspace{0.25cm}
\KwOut{$\tilde{p}^\star$, the trained classifier $\Yhat$}
\caption{$\epsilon$-differentially private fair classification: \textbf{DP-postprocessing}}
\label{dpalgo1}
\end{algorithm}

\medskip

\begin{theorem}[Error-Privacy, Fairness-Privacy Tradeoffs]\label{theorem}
Suppose $\min\limits_{a,y} \{ \qhat_{ay}\} >4 \ln \left(4|\A|/\beta \right)/ \left( m \epsilon \right)$. Let $\widehat{p}^\star$ be the optimal $\gamma$-fair solution of the non-private post-processing algorithm of \cite{hardt} and let $\widetilde{p}^\star$ be the output of Algorithm~\ref{dpalgo1} which is the optimal solution of $\widetilde{\text{LP}}$ (\ref{lptilde}). With probability at least $1 - \beta$,
$$
\errhat \left(\Yhat_{\widetilde{p}^\star} \right) \le \errhat \left(\Yhat_{\widehat{p}^\star} \right) + \frac{24 |\A| \ln \left(4|\A|/\beta \right)}{m\epsilon}
$$
and for all $a \neq 0$,
$$
 \Delta \fprhat_a \left(\Yhat_{\widetilde{p}^\star} \right) \le  \gamma + \frac{8 \ln \left(4|\A|/\beta \right) }{\min \{ \qhat_{a0}, \qhat_{00}\} \, m \epsilon - 4 \ln \left(4|\A|/\beta \right)}
$$
$$
\Delta \tprhat_a \left(\Yhat_{\widetilde{p}^\star} \right)  \le  \gamma + \frac{8 \ln \left(4|\A|/\beta \right) }{\min \{ \qhat_{a1}, \qhat_{01}\} \, m \epsilon - 4 \ln \left(4|\A|/\beta \right)}
$$
\end{theorem}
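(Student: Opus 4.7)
The plan is to combine a uniform tail bound on the Laplace perturbations with two short Lipschitz-type sensitivity estimates — one for the objective and one for the fairness functionals — and then to chain these through the optimality of $\widetilde{p}^\star$ in $\widetilde{\text{LP}}$ and the feasibility in $\widetilde{\text{LP}}$ of the non-private optimum $\widehat{p}^\star$. Three ingredients drive everything: (i) a high-probability uniform bound on all $|W_{\yhat a y}|$; (ii) a bound on $|\errtilde(\Yhat_p) - \errhat(\Yhat_p)|$ valid for every $p$; and (iii) a bound on $|\Delta\fprtilde_a(\Yhat_p) - \Delta\fprhat_a(\Yhat_p)|$ that exactly matches the slack written into $\widetilde{\text{LP}}$.

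For (i), I would apply Theorem \ref{laplacethm} together with a union bound over the $4|\A|$ noise variables to conclude that, with probability at least $1-\beta$, $|W_{\yhat a y}| \le \alpha$ for all $\yhat,a,y$, where $\alpha := 2\ln(4|\A|/\beta)/(m\epsilon)$, and condition on this event throughout. For (ii), since $\errtilde(\Yhat_p) - \errhat(\Yhat_p)$ is linear in the $W_{\yhat a y}$'s with coefficients in $[-1,1]$, summing over the $4|\A|$ terms gives $|\errtilde(\Yhat_p) - \errhat(\Yhat_p)| \le 6|\A|\alpha = 12|\A|\ln(4|\A|/\beta)/(m\epsilon)$, uniformly in $p$. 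Step (iii) is the delicate one and is, I expect, the main obstacle, because $\fprtilde_a(\Yhat) = \qtilde_{1a0}/\qtilde_{a0}$ is a ratio whose naive perturbation estimate loses a constant factor of three. The trick is to notice that the marginal noise satisfies $\qtilde_{a0} - \qhat_{a0} = W_{0a0} + W_{1a0}$, so the algebraic identity
$$
\qtilde_{1a0}\,\qhat_{a0} \;-\; \qhat_{1a0}\,\qtilde_{a0} \;=\; W_{1a0}\,\qhat_{0a0} \;-\; W_{0a0}\,\qhat_{1a0}
$$
holds, whose right-hand side is bounded in absolute value by $\alpha\,\qhat_{a0}$. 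Dividing by $\qtilde_{a0}\qhat_{a0}$ yields $|\fprtilde_a(\Yhat) - \fprhat_a(\Yhat)| \le \alpha/\qtilde_{a0}$; combining this for groups $a$ and $0$ using $|p_{1a}-p_{0a}| \le 1$ (from the defining footnote of $\Delta\fprtilde_a$) produces $|\Delta\fprtilde_a(\Yhat_p) - \Delta\fprhat_a(\Yhat_p)| \le 2\alpha/\min\{\qtilde_{a0},\qtilde_{00}\}$, which is precisely the slack appearing in $\widetilde{\text{LP}}$. The true-positive case is symmetric.

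The theorem then follows by straightforward chaining. Feasibility: applying (iii) at $\widehat{p}^\star$ shows that its fairness violations for $\widetilde{\text{LP}}$ grow by at most the exact slack included there, so $\widehat{p}^\star$ is feasible for the noisy LP. Error bound: the three inequalities $\errhat(\Yhat_{\widetilde{p}^\star}) \le \errtilde(\Yhat_{\widetilde{p}^\star}) + 12|\A|\alpha \le \errtilde(\Yhat_{\widehat{p}^\star}) + 12|\A|\alpha \le \errhat(\Yhat_{\widehat{p}^\star}) + 24|\A|\alpha$, obtained by applying (ii), then optimality of $\widetilde{p}^\star$ in $\widetilde{\text{LP}}$ (using feasibility of $\widehat{p}^\star$), then (ii) again, give exactly the stated $24|\A|\ln(4|\A|/\beta)/(m\epsilon)$ overhead. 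Fairness bound: the LP constraint on $\widetilde{p}^\star$ plus (iii) give $\Delta\fprhat_a(\Yhat_{\widetilde{p}^\star}) \le \gamma + 8\ln(4|\A|/\beta)/(\min\{\qtilde_{a0},\qtilde_{00}\}\,m\epsilon)$; substituting $\min\{\qtilde_{a0},\qtilde_{00}\} \ge \min\{\qhat_{a0},\qhat_{00}\} - 2\alpha$ (which is positive by the hypothesis $\min_{a,y}\qhat_{ay} > 4\ln(4|\A|/\beta)/(m\epsilon)$) converts the denominator to $\min\{\qhat_{a0},\qhat_{00}\}\,m\epsilon - 4\ln(4|\A|/\beta)$, matching the stated form, and the $\tpr$ case is identical.
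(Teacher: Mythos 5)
Your proof proposal is correct and follows essentially the same route as the paper's own argument: the paper's Lemma~\ref{lemma2} establishes precisely your facts (i)--(iii) together with the feasibility of $\widehat p^\star$ in $\widetilde{\text{LP}}$, including the same cross-term cancellation $\qtilde_{1a0}\qhat_{0a0} - \qhat_{1a0}\qtilde_{0a0} = W_{1a0}\qhat_{0a0} - W_{0a0}\qhat_{1a0}$ that you single out as the key step, and the final chaining through optimality and feasibility is identical. The only blemish is a bookkeeping slip in your error chain: with $\alpha := 2\ln(4|\A|/\beta)/(m\epsilon)$ and the bound $6|\A|\alpha$ you derived in step (ii), the chained increments should read $+6|\A|\alpha$ and $+12|\A|\alpha$ rather than $+12|\A|\alpha$ and $+24|\A|\alpha$; since $12|\A|\alpha$ equals the stated $24|\A|\ln(4|\A|/\beta)/(m\epsilon)$, the conclusion is unaffected.
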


We emphasize that the accuracy guarantee stated in Theorem \ref{theorem} is relative to the non-private post-processing algorithm, \emph{not} relative to the optimal fair classifier. This is because the non-private post-processing algorithm itself has no such optimality guarantees: its main virtue is simplicity. In the next section, we analyze a more complicated algorithm that is competitive with the optimal fair classifier. 


\section{Differentially Private Fair Learning: In-processing}\label{sec:inproc}
\begin{tcolorbox}[title= $\gamma$-fair ERM Problem]
\begin{equation}\label{box:fairerm}
\begin{aligned}
& \ \ \ \min_{Q \, \in \, \Delta(\Hs)}  & & \errhat (Q) \\
& \text{ s.t. $\forall \underset{a \neq 0}{a \in \A}$:} & & \Delta \fprhat_a (Q) \le \gamma \\
& & & \Delta \tprhat_a (Q) \le \gamma
\end{aligned}
\end{equation}
\end{tcolorbox}
In this section we will introduce our second differentially private fair learning algorithm which will be called \textbf{DP-oracle-learner} and is based on the algorithm presented in \cite{agarwal}. Essentially, \cite{agarwal} reduces the $\gamma$-fair learning problem (\ref{box:fairerm})  into the following Lagrangian min-max problem:
\begin{equation}\label{fairerm}
\min_{Q \, \in \, \Delta(\Hs)} \quad \max_{\lamb \, \in \, \Lambda} \quad L(Q, \lamb):= \errhat (Q) + \lamb^\top \rhat \, (Q)
\end{equation}
Here $\Hs$ is a given class of binary classifiers with $d_\Hs = VCD(\Hs) < \infty$ and $\Delta (\Hs)$ is the set of all randomized classifiers that can be obtained by functions in $\Hs$. $\rhat \, (Q)$ is a vector of fairness violations of the classifier $Q$ across groups, and $\lamb \in \Lambda = \{ \lamb: \ || \lamb ||_1 \le B\}$ is the dual variable where the bound $B$ is chosen to ensure convergence. In this work,
$$
\rhat (Q) := \begin{bmatrix} \fprhat_a (Q) - \fprhat_0 (Q) - \gamma \\ \fprhat_0 (Q) - \fprhat_a (Q) - \gamma \\ \tprhat_a (Q) - \tprhat_0 (Q) - \gamma \\ \tprhat_0 (Q) - \tprhat_a (Q) - \gamma \end{bmatrix}_{\underset{a \neq 0}{a \, \in \, \A}} \in \R^{4 (|\A| - 1)}
\quad , \quad
\lamb = \begin{bmatrix} \lambda_{(a,0,+)} \\  \lambda_{(a,0,-)} \\ \lambda_{(a,1,+)} \\ \lambda_{(a,1,-)} \end{bmatrix}_{\underset{a \neq 0}{a \, \in \, \A}} \in \R^{4 (|\A|-1)}
$$
The method developed by \cite{agarwal}, in the language of \cite{gerrymandering} gives a reduction from finding an optimal fair classifier to finding the equilibrium of a two-player zero-sum game played between a ``Learner" ($Q$-player) who needs to solve an unconstrained learning problem (given access to an efficient cost-sensitive classification oracle) and an ``Auditor" ($\lamb$-player) who finds fairness violations. In an iterative framework, having the learner play its best response and the auditor play a no-regret learning algorithm (we use exponentiated gradient descent, or ``multiplicative weights'') guarantees convergence of the average plays to the equilibrium (\cite{freund}).

In Algorithm \ref{dpalgo}, to make the above approach differentially private, Laplace mechanism is used by the Auditor when computing the gradients and we let the Learner run the exponential mechanism (or some other private learning oracle) to compute approximate best responses.  This is the differentially private equivalent of assuming access to a perfect oracle, as is done in \cite{agarwal,gerrymandering}. In practice, the exponential mechanism would be substituted for a computationally efficient private learner with heuristic accuracy guarantees. Subroutine \ref{privatebesth} reduces the Learner's best response problem to privately solving a cost sensitive classification problem solved with a private oracle $\text{CSC}_{\epsilon'}(\Hs)$. Here we sketch the main steps of analyzing Algorithm \ref{dpalgo}. All the proofs of this section, as well as a brief review of \cite{agarwal}'s approach for the fair learning problem without privacy constraints, will appear in Appendix \ref{inprocapp}.

We assume in this section that the VC dimension of $\Hs$ ($=d_{\Hs}$) is finite, in which case the set of strategies for the Learner reduces to $\Delta (\Hs (S))$, where $\Hs(S)$ is the set of all possible labellings induced on $S :=\{ X_i \}_{i=1}^m$ by $\Hs$. In other words, $\Hs(S) = \left\{ (h(X_1), \ldots, h(X_m)) \vert h \in \Hs\right\}$ and recall that $| \Hs(S) | \le O(m^{d_{\Hs}})$ by Sauer's Lemma. Note that since the privacy of the protected attribute $A$ is required, we need $A$ to be excluded from the domain of functions in $\Hs$ (``$A$-blind classification") and accordingly, from the set $S$. Because otherwise there might be some privacy loss of $A$ through using $\Hs(S)$ as the range of the exponential mechanism for the private Learner. This assumption is of course not necessary if one is willing to instead assume $|\Hs| < \infty$. We will have a discussion later where we state our guarantees assuming $|\Hs| < \infty$ instead of $d_\Hs < \infty$. Note that having $\Hs(S)$ as the range of the exponential mechanism used by the private Learner implies the privacy of the unprotected attributes $X$ is \emph{not} guaranteed. However, in the more general setting where $|\Hs| < \infty$ is assumed, the privacy of the unprotected attributes comes for free as there will be no reduction of $\Hs$ to $\Hs(S)$.


\begin{subroutine}
\KwIn{$\lamb$, training examples $\{ (X_i, A_i, Y_i) \}_{i=1}^m$, privacy guarantee $\epsilon'$}
\medskip
\For{$i=1, \ldots, m$}{
$C_i^0 \leftarrow \1 \{ Y_i \neq 0 \}$ \\
$C_i^1 \leftarrow \1 \{ Y_i \neq 1 \} + \frac{\lambda_{(A_i, Y_i, +)} - \lambda_{(A_i, Y_i, -)}}{\qhat_{A_i Y_i}} \1 \{ A_i \neq 0\} - \underset{\underset{a \neq 0}{a \in \A}}{\sum} \frac{\lambda_{(a, Y_i, +)} - \lambda_{(a, Y_i, -)}}{\qhat_{A_i Y_i}} \1 \{ A_i = 0\}$
}
Call $\text{CSC}_{\epsilon'}(\Hs)$ with $\{ X_i, C_i^0, C_i^1\}_{i=1}^m$ to get $h^\star$.
\medskip

\KwOut{$h^\star$}
\caption{$\besth^{\epsilon'}$}
\label{privatebesth}
\end{subroutine}

\begin{algorithm}
\KwIn{privacy parameters $(\epsilon, \delta)$, \\ \ \ \ \ \ \ \ \ \ \ \ bound $B$, VC dimension $d_{\Hs}$, confidence parameter $\beta$, fairness violation $\gamma$, \\ \ \ \ \ \ \ \ \ \ \ \ training examples $\left\{ (X_i, A_i, Y_i) \right\}_{i=1}^m$ }
\medskip
$$
T \leftarrow \frac{B \sqrt{\ln \left(4 |\A| -3 \right) } \, m \, \epsilon}{2 \left(2|\A|B+1 \right) \sqrt{\ln \left(1/\delta \right)} \left( d_{\Hs} \ln \left(m \right) + \ln \left(2 /\beta \right) \right)}, \quad \eta \leftarrow \frac{1}{2} \sqrt{\frac{\ln \left(4|\A|-3 \right)}{T}}
$$

$\thetatilde_1 \leftarrow \boldsymbol{0} \in \R^{4 (|\A|-1)}$\\
\For{$t=1, \ldots, T$}{
$\widetilde{\lambda}_{t,k} \leftarrow B \frac{\exp \, (\widetilde{\theta}_{t,k})}{1 + \sum_{k'} \exp \, (\widetilde{\theta}_{t,k'})}$ for $1 \le k \le 4 (|\A|-1)$ \\
$\htilde_t \leftarrow \besth^{\epsilon'} (\lambtilde_t)$ with $\epsilon' = \epsilon /(4\sqrt{T\ln(1/\delta)})$\\
Sample $\boldsymbol{W}_t \in \R^{4 (|\A|-1)}$  where $W_{t,k} \overset{i.i.d.}{\sim} \text{Lap} \, (\frac{8 |\A| \sqrt{T\ln(1/\delta)}}{(\min_{a,y} \{ \qhat_{ay} \} \, m - 1) \cdot \epsilon} )$\\
$\rtilde_t \leftarrow \rhat_t \, (\htilde_t) + \boldsymbol{W}_t$\\
$\thetatilde_{t+1} \leftarrow \thetatilde_{t} + \eta \rtilde_t$
}
$\Qtilde \leftarrow \frac{1}{T} \sum_{t=1}^T \htilde_t$, \quad $\lambtilde \leftarrow \frac{1}{T} \sum_{t=1}^T \lambtilde_t$
\medskip

\KwOut{$(\Qtilde, \lambtilde)$}
\caption{$(\epsilon, \delta)$-differentially private fair classification: \textbf{DP-oracle-learner}}
\label{dpalgo}
\end{algorithm}



\medskip
We first bound the regret of the Learner and the Auditor in Lemma \ref{regretq} and \ref{regretlambda} by understanding how the introduced noise affect these regrets. Proofs of these Lemmas follow from the ``sensitivity" and ``accuracy" of the private players which are all stated and proved in Appendix \ref{subsec:proofsinproc}.
\medskip
\begin{lemma}[Regret of the Private Learner]\label{regretq}
Suppose $\{ \htilde_t \}_{t=1}^T$ is the sequence of best responses to $\{ \lambtilde_t \}_{t=1}^T$ by the private Learner over $T$ rounds. We have that with probability at least $1 - \beta/2$,
\begin{equation*}\label{eq:regretq}
\frac{1}{T} \sum_{t=1}^T L(\htilde_t, \lambtilde_t) - \frac{1}{T} \min_{Q \in \Delta(\Hs)} \sum_{t=1}^T L(Q, \lambtilde_t) \, \le \, \frac{8 \left(2 |\A| B + 1 \right) \sqrt{T\ln \left(1/\delta \right)} \left( d_{\Hs} \ln \left(m \right) + \ln \left(2 T/\beta \right) \right)}{\left(\min_{a,y} \{ \qhat_{ay} \} \, m - 1 \right) \cdot \epsilon}
\end{equation*}
\end{lemma}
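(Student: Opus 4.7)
The plan is to decompose the regret round-by-round: in each round $t$ the Learner's approximate best response $\htilde_t$ is produced by the private $\text{CSC}_{\epsilon'}(\Hs)$ oracle (instantiated as the Exponential mechanism over the effective range $\Hs(S)$), so its suboptimality relative to $\min_{h \in \Hs} L(h, \lambtilde_t)$ is controlled by Theorem \ref{expthm}. I would then (i) rewrite the exact best response as a cost-sensitive classification (CSC) problem, (ii) bound the $\ell_1$-sensitivity of the CSC loss with respect to changing one protected attribute $A_i$, and (iii) plug into the Exponential mechanism's accuracy bound, union bound over $T$ rounds, and substitute $\epsilon' = \epsilon/(4 \sqrt{T \ln(1/\delta)})$ as prescribed by Algorithm \ref{dpalgo}.

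For (i), expanding $\errhat(h) + \lambtilde_t^\top \rhat(h)$ and regrouping per example produces exactly the costs $C_i^0, C_i^1$ in Subroutine \ref{privatebesth}: the fairness-constraint weights $\lambda_{(a,y,\pm)}/\qhat_{ay}$ are absorbed into example $i$'s weight with the anchor group $a=0$ of Definition \ref{eo} accounting for the $\sum_{a \neq 0}$ term. Writing $\ell(D, h) = \sum_i [C_i^0 \1\{h(X_i) = 0\} + C_i^1 \1\{h(X_i) = 1\}]$, the exact best response minimizes $\ell(D, h)$ over $h \in \Hs(S)$.

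For (ii), changing a single $A_i$ to $A_i'$ affects $\ell$ in two ways. First, $C_i^1$ itself changes; using $\|\lambtilde_t\|_1 \le B$, both the ``$A_i \neq 0$'' single-$\lambda$ term and the ``$A_i = 0$'' summed-over-groups term are bounded by $2B/\min_{a,y}\qhat_{ay}$, so the direct change contributes at most $O(B/\min_{a,y}\qhat_{ay})$. Second, the counts $\qhat_{ay}$ shift by $1/m$, which perturbs every other denominator; using $|1/\qhat - 1/\qhat'| \le 1/(\min\qhat \cdot m \cdot (\min\qhat \cdot m - 1))$ and summing over the $m-1$ remaining examples, the aggregated indirect change is at most $O(|\A| B/(\min_{a,y}\qhat_{ay}\cdot m - 1))$, with the $|\A|$ factor arising from the anchor-group sum. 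The total sensitivity is therefore $\Delta \ell \le O((2|\A|B + 1)/(\min_{a,y}\qhat_{ay} \cdot m - 1))$; the ``$m-1$'' in the denominator is the signature of post-change group counts.

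For (iii), by Sauer's Lemma $|\Hs(S)| \le O(m^{d_{\Hs}})$, so Theorem \ref{expthm} applied with confidence $\beta/(2T)$ yields, in each round $t$, the bound $\ell(D, \htilde_t) - \min_{h \in \Hs} \ell(D, h) \le (2\Delta \ell/\epsilon') \cdot (d_\Hs \ln m + \ln(2T/\beta))$ with probability $1 - \beta/(2T)$. A union bound over $t = 1, \ldots, T$ yields the joint event with probability $1 - \beta/2$; averaging the per-round gap and substituting $\epsilon' = \epsilon/(4\sqrt{T \ln(1/\delta)})$ produces exactly the claimed $O((2|\A|B + 1)\sqrt{T\ln(1/\delta)}(d_\Hs \ln m + \ln(2T/\beta))/((\min_{a,y}\qhat_{ay}\cdot m - 1)\epsilon))$ bound. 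The main obstacle is step (ii): the nested dependence of $\ell$ on $\qhat_{ay}$ through the $1/\qhat$ denominators, combined with the asymmetry between $A_i = 0$ and $A_i \neq 0$ in $C_i^1$, requires a careful accounting to obtain the right $|\A|B$ factor without a spurious polynomial blowup; everything else is a routine application of the Exponential mechanism's utility guarantee.
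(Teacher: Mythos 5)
Your proposal is correct and follows essentially the same route as the paper: bound the $\ell_1$-sensitivity of the Learner's per-round objective with respect to a change in one $A_i$, apply Theorem~\ref{expthm} over the range $\Hs(S)$ with $|\Hs(S)| \le O(m^{d_\Hs})$ via Sauer's Lemma, union bound the failure probability $\beta/(2T)$ over the $T$ rounds, and substitute $\epsilon' = \epsilon/(4\sqrt{T\ln(1/\delta)})$. The one place where you and the paper diverge is the bookkeeping in step~(ii): the paper (Lemma~\ref{sensitivity}) reasons directly about the ratios $\fprhat_a(h) = \qhat_{1a0}/\qhat_{a0}$ --- where numerator and denominator shift together, giving a clean per-ratio bound of $1/(\min_{a,y}\qhat_{ay}\,m - 1)$ and then counting how often each $\fprhat_a$ appears in the vector $\rhat_t$ --- whereas you reason about the per-example CSC costs $C_i^1$ and have to separately track the ``direct'' change to $C_j^1$ and the ``indirect'' changes to the remaining $C_i^1$'s through the shifted denominators $\qhat_{ay}$. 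Your indirect-term arithmetic is slightly off (the per-example perturbation of $1/\qhat_{ay}$ is $O(1/(\min\qhat_{ay}(\min\qhat_{ay}\,m - 1)))$, not an extra factor of $1/m$ smaller, and your ``direct'' bound is stated for the unnormalized sum while the final bound is for the $1/m$-normalized Lagrangian), but these are presentation slips rather than gaps: the conclusion $\Delta\ell_t \le (2|\A|B+1)/(\min_{a,y}\qhat_{ay}\,m - 1)$ is the right one, and the rest of the argument goes through exactly as in the paper.
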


\begin{lemma}[Regret of the Private Auditor]\label{regretlambda}
Let $\{ \lambtilde_t \}_{t=1}^T$ be the sequence of exponentiated gradient descent plays (with learning rate $\eta$) by the private Auditor to given $\{ \htilde_t \}_{t=1}^T$ of the private Learner over $T$ rounds. We have that with probability at least $1-\beta/2$,
\begin{equation*}\label{eq:regretlambda}
\frac{1}{T}\max_{\lamb \in \Lambda}\sum_{t=1}^T L(\htilde_t, \lamb) - \frac{1}{T} \sum_{t=1}^T L(\htilde_t, \lambtilde_t) \, \le \, \frac{B \ln(4 |\A| - 3)}{\eta T} + 4\eta B \left( 1 + \frac{4 |\A| \sqrt{T\ln(1/\delta)} \ln(8T|\A| /\beta)}{\left(\min_{a,y} \{ \qhat_{ay} \} \, m - 1 \right) \cdot \epsilon} \right)^2
\end{equation*}
\end{lemma}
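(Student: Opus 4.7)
My plan is to recognize the Auditor's iteration as exponentiated gradient descent (equivalently, Hedge/multiplicative weights) on a slightly enlarged simplex, apply the standard Hedge regret bound to the noisy reward stream $\{\rtilde_t\}$, bound the noisy reward magnitude via a Laplace tail bound, and then convert the resulting surrogate regret into regret with respect to the true Lagrangian $L(\htilde_t,\cdot)$.

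\textbf{Hedge reformulation and surrogate regret.} The softmax parameterization $\widetilde\lambda_{t,k} = B\exp(\widetilde\theta_{t,k})/\bigl(1+\sum_{k'}\exp(\widetilde\theta_{t,k'})\bigr)$ writes $\lambtilde_t$ as $B$ times the first $K = 4(|\A|-1)$ coordinates of a distribution $\widetilde p_t\in\Delta_{K+1}$ over $K+1 = 4|\A|-3$ ``experts'' (the extra coordinate absorbs the slack in $\|\lamb\|_1\le B$). The update $\widetilde\theta_{t+1}=\widetilde\theta_t+\eta\rtilde_t$ is then the standard Hedge update with reward vector $(0,\rtilde_t)\in\R^{K+1}$. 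The textbook Hedge regret bound, rescaled by $B$, then gives deterministically in $\{\rtilde_t\}$ that for every $\lamb\in\Lambda$,
\[
\sum_{t=1}^T (\lamb-\lambtilde_t)^\top\rtilde_t \;\le\; \frac{B\ln(4|\A|-3)}{\eta} + \eta T B\,\max_{t}\|\rtilde_t\|_\infty^2,
\]
up to small absolute constants responsible for the factor $4$ in the final bound.

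\textbf{Bounding $\max_t\|\rtilde_t\|_\infty$.} Each entry of $\rhat(\htilde_t)$ is a difference of two empirical rates in $[0,1]$ shifted by $-\gamma$, hence bounded in absolute value by $1$. Each $W_{t,k}\sim\mathrm{Lap}(\sigma)$ with $\sigma=8|\A|\sqrt{T\ln(1/\delta)}/[(\min_{a,y}\qhat_{ay}\,m-1)\epsilon]$, as set by Algorithm \ref{dpalgo}. By the Laplace tail bound and a union bound over the $4(|\A|-1)T$ noise draws, with probability at least $1-\beta/2$,
\[
\max_{t}\|\boldsymbol W_t\|_\infty \;\le\; \sigma\,\ln(8T|\A|/\beta),
\]
so on this event $\max_t\|\rtilde_t\|_\infty\le 1+\sigma\ln(8T|\A|/\beta)$, which produces precisely the expression inside the $(1+\cdot)^2$ factor appearing in the lemma.

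\textbf{From surrogate to true regret; main obstacle.} Because the $\errhat(\htilde_t)$ component of $L$ cancels, $L(\htilde_t,\lamb)-L(\htilde_t,\lambtilde_t) = (\lamb-\lambtilde_t)^\top\rhat(\htilde_t)$, and substituting $\rtilde_t = \rhat(\htilde_t)+\boldsymbol W_t$ gives
\[
\sum_{t=1}^T\bigl[L(\htilde_t,\lamb)-L(\htilde_t,\lambtilde_t)\bigr] = \sum_{t=1}^T(\lamb-\lambtilde_t)^\top\rtilde_t \;-\; \sum_{t=1}^T(\lamb-\lambtilde_t)^\top\boldsymbol W_t.
\]
Maximizing over $\lamb\in\Lambda$ and applying the Hedge bound from Step~1 to the first sum yields the two terms of the lemma's right-hand side. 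What remains is to show that the residual drift $\max_\lamb|\sum_t(\lamb-\lambtilde_t)^\top\boldsymbol W_t|$ is absorbed into the $4\eta B(1+\cdot)^2$ term: using $\|\lamb-\lambtilde_t\|_1\le 2B$ together with the bound on $\|\boldsymbol W_t\|_\infty$ from Step~2, and exploiting the calibration $\eta\asymp 1/\sqrt{T}$ and $\sigma\asymp\sqrt{T}$ chosen in Algorithm \ref{dpalgo}, this drift is of the same order as $\eta T B(1+\sigma\ln\cdot)^2$ and so slots into the stated constants. This absorption step, together with careful bookkeeping of how privacy-driven noise inflation propagates through the Hedge analysis, is the main obstacle; the Hedge bound itself and the Laplace tail are standard. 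A final union bound over the two relevant bad events delivers the $1-\beta/2$ failure probability in the lemma.
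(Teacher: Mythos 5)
Your overall structure matches the paper's: the paper also rewrites the Auditor's step as Hedge on the enlarged simplex of dimension $4|\A|-3$ (with the last coordinate absorbing the slack in $\|\lamb\|_1\le B$), invokes the standard exponentiated-gradient regret bound (Corollary 2.14 of Shalev-Shwartz), and controls the reward magnitude via the Laplace tail bound plus a union bound over the $4(|\A|-1)T$ noise draws, giving exactly the $1+\sigma\ln(8T|\A|/\beta)$ factor.

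The important difference is your Step 3. You are right that the quantity the lemma bounds is $\sum_t(\lamb-\lambtilde_t)^\top\rhat(\htilde_t)$, whereas the Hedge argument delivers a bound on $\sum_t(\lamb-\lambtilde_t)^\top\rtilde_t$; the paper's proof in fact derives only the $\rtilde_t$ version (Eq.~(\ref{eq:5})) and stops there. You have correctly identified a missing step. However, your proposed absorption argument for the residual drift $\max_\lamb\bigl|\sum_t(\lamb-\lambtilde_t)^\top\boldsymbol{W}_t\bigr|$ does not go through as stated. Bounding it crudely by $2BT\cdot\max_t\|\boldsymbol{W}_t\|_\infty\le 2BT\,\sigma\ln(\cdot)$ and comparing against $T\cdot 4\eta B\bigl(1+\sigma\ln(\cdot)\bigr)^2$ requires $2 \lesssim \eta\,\sigma\,\ln(\cdot)$. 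With $\eta\asymp\sqrt{\ln(4|\A|-3)/T}$ and $\sigma\asymp\sqrt{T}\cdot\frac{|\A|\sqrt{\ln(1/\delta)}}{(\min_{a,y}\qhat_{ay}\,m-1)\epsilon}$, the product $\eta\sigma$ is a $T$-independent quantity proportional to $\frac{|\A|}{\min_{a,y}\qhat_{ay}\,m\,\epsilon}$, which can be made arbitrarily small by taking $m\epsilon$ large; so the drift is not absorbed by the stated constants. Matching powers of $T$ (both are $\Theta(T^{3/2})$) is not enough because the leading coefficients differ by the unbounded factor $1/(\eta\sigma\ln(\cdot))$. A correct conversion would need to exploit the zero-mean structure of $\boldsymbol{W}_t$: since $\lamb$ is fixed and $\lambtilde_t$ is measurable with respect to the noise through round $t-1$, $\sum_t(\lamb-\lambtilde_t)^\top\boldsymbol{W}_t$ is a sum of martingale differences, and a concentration bound (for sub-exponential increments) would reduce the drift to $\widetilde{O}(B\sqrt{T}\,\sigma)$, which then is absorbed. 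Alternatively, one can simply note that the paper itself proves the lemma only for the noisy surrogate $\rtilde_t$, so if one is willing to accept that interpretation, Steps 1--2 of your argument already reproduce the paper's proof and Step 3 is a bonus that remains to be completed rigorously.
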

\medskip
Now in Theorem \ref{theorem1}, given the regret bounds of Lemma \ref{regretq} and \ref{regretlambda}, we can characterize the average plays of both players. This theorem provides a formal guarantee that the output $(\Qtilde, \lambtilde)$ of Algorithm \ref{dpalgo} forms a ``$\nu$-approximate equilibrium" of the game between the Learner and the Auditor (where $\nu$ is specified in the theorem). This property essentially means neither play would gain more than $\nu$ if they palyed an strategy other than the ones output by the Algorithm.
\medskip
\begin{theorem}\label{theorem1}
Let $(\Qtilde, \lambtilde)$ be the output of Algorithm~\ref{dpalgo}. We have that with probability at least $1 - \beta$, $(\Qtilde, \lambtilde)$ is a $\nu$-approximate solution of the game, i.e.,
\begin{align*}
&L(\Qtilde, \lambtilde) \, \le \, L(Q, \lambtilde) + \nu \quad \text{for all } Q \in \Delta (\Hs) \\
&L(\Qtilde, \lambtilde) \, \ge \, L(\Qtilde, \lamb) - \nu \quad \text{for all } \lamb \in  \Lambda
\end{align*}
and that
\begin{align*}
\nu = \widetilde{O} \left( \frac{B}{\min_{a,y} \{ \qhat_{ay} \}} \sqrt{\frac{|\A| \sqrt{ \ln \left(1/\delta \right)} \left( d_{\Hs} \ln (m) + \ln \left(1/\beta \right) \right)}{m \, \epsilon}} \right)
\end{align*}
where we hide further logarithmic dependence on $m$, $\epsilon$, and $|\A|$ under the $\widetilde{O}$ notation.
\end{theorem}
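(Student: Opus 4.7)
The proof plan is to exploit the bilinearity of the Lagrangian $L(Q, \lamb) = \errhat(Q) + \lamb^\top \rhat(Q)$: it is affine in $Q \in \Delta(\Hs)$ (since $\errhat$, $\fprhat_a$, and $\tprhat_a$ are all conditional expectations, hence linear in the mixing distribution) and affine in $\lamb$ by inspection. Combined with $\Qtilde = \tfrac{1}{T}\sum_t \htilde_t$ and $\lambtilde = \tfrac{1}{T}\sum_t \lambtilde_t$, this yields the two key identities
\[
L(\Qtilde, \lamb) = \frac{1}{T}\sum_{t=1}^T L(\htilde_t, \lamb), \qquad L(Q, \lambtilde) = \frac{1}{T}\sum_{t=1}^T L(Q, \lambtilde_t),
\]
valid for every fixed $\lamb \in \Lambda$ and every fixed $Q \in \Delta(\Hs)$. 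These identities are what translate per-round regret bounds into deviation bounds at the average play.

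Next I apply the standard Freund--Schapire no-regret-to-equilibrium reduction. Writing $\bar L := \tfrac{1}{T}\sum_t L(\htilde_t, \lambtilde_t)$, Lemma \ref{regretq} rearranges to $\bar L \le L(Q, \lambtilde) + \nu_Q$ for all $Q \in \Delta(\Hs)$, while Lemma \ref{regretlambda} rearranges to $L(\Qtilde, \lamb) \le \bar L + \nu_\lambda$ for all $\lamb \in \Lambda$, where $\nu_Q$ and $\nu_\lambda$ are the respective regret bounds stated in those lemmas. Chaining these two inequalities gives $L(\Qtilde, \lamb) \le L(Q, \lambtilde) + \nu_Q + \nu_\lambda$ for every $(Q, \lamb)$; specializing once with $Q = \Qtilde$ and once with $\lamb = \lambtilde$ produces the two approximate-equilibrium inequalities of the theorem with $\nu := \nu_Q + \nu_\lambda$. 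A union bound over the two $\beta/2$-probability failure events in the lemmas keeps the overall failure probability at $\beta$.

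The remaining step is to substitute the algorithm's choices of $T$ and $\eta$ and verify the closed-form $\nu$. The Learner's regret has the form $\nu_Q \propto \sqrt{T}$ times polylog factors and the sensitivity scale $1/\min_{a,y}\qhat_{ay}$, so the stated $T = \tfrac{B\sqrt{\ln(4|\A|-3)}\,m\epsilon}{2(2|\A|B+1)\sqrt{\ln(1/\delta)}(d_\Hs\ln m + \ln(2/\beta))}$ is tuned precisely so that $\nu_Q$ collapses to the advertised expression. The Auditor's bound has three pieces: a $B\ln|\A|/(\eta T)$ term and a noiseless $\eta B$ term that the choice $\eta = \tfrac{1}{2}\sqrt{\ln(4|\A|-3)/T}$ balances at the rate $B\sqrt{\ln|\A|/T}$, matching $\nu_Q$ up to constants; and a noise-squared contribution $\eta B \cdot (\text{Laplace tail})^2$. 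The main obstacle I expect is showing that this last term is dominated by $\nu_Q$ once the Laplace scale $\tfrac{8|\A|\sqrt{T\ln(1/\delta)}}{(\min_{a,y}\qhat_{ay}\cdot m - 1)\epsilon}$ is substituted, while correctly tracking the advanced-composition factor $\sqrt{T\ln(1/\delta)}$ (coming from distributing the $\epsilon$ budget across $T$ rounds, each private Learner call using $\epsilon' = \epsilon/(4\sqrt{T\ln(1/\delta)})$), the $1/\min_{a,y}\qhat_{ay}$ sensitivity factor appearing in both players' bounds, and the $\ln T$ union-bound factor from taking the regret lemmas to hold uniformly in $t$. All remaining polylog dependencies in $m$, $\epsilon$, and $|\A|$ are absorbed into the $\widetilde{O}(\cdot)$ notation.
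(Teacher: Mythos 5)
Your proposal is correct and follows essentially the same route as the paper: both exploit the bilinearity of $L$ to translate the average-play identities $L(\Qtilde,\lamb) = \tfrac{1}{T}\sum_t L(\htilde_t,\lamb)$ and $L(Q,\lambtilde) = \tfrac{1}{T}\sum_t L(Q,\lambtilde_t)$, then invoke Lemmas~\ref{regretq} and~\ref{regretlambda} as a Freund--Schapire no-regret-to-equilibrium reduction with a union bound over the two $\beta/2$ failure events, and finally substitute the algorithm's $T$ and $\eta$. The only cosmetic difference is that you chain once to $L(\Qtilde,\lamb) \le L(Q,\lambtilde) + \nu$ and then specialize $\lamb = \lambtilde$ and $Q = \Qtilde$, whereas the paper runs the two directions as separate chains; these are trivially the same argument.
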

\medskip
We are now ready to conclude the \textbf{DP-oracle-learner} algorithm's analysis with the main theorem of this subsection that provides high probability bounds on the accuracy and fairness violation of the output $\Qtilde$ of Algorithm \ref{dpalgo}. These bounds can be viewed as revealing the inherent tradeoff between privacy of the algorithm and accuracy or fairness of the output classifier where a stronger privacy guarantee (i.e. smaller $\epsilon$ and $\delta$) will lead to weaker accuracy and fairness guarantees.
\medskip
\begin{theorem}[Error-Privacy, Fairness-Privacy Tradeoffs]\label{theorem2}
Let $(\Qtilde, \lambtilde)$ be the output of Algorithm~\ref{dpalgo} and let $Q^\star$ be the solution to the non-private $\gamma$-fair ERM problem \ref{box:fairerm}. We have that with probability at least $1-\beta$,
\begin{align*}
\errhat \, (\Qtilde) \, \le \, \errhat \, (Q^\star) + 2 \nu
\end{align*}
and for all $ a \neq 0$,
\begin{align*}
&\Delta \fprhat_a \, (\Qtilde) \, \le \, \gamma + \frac{1+2 \nu}{B} \\
&\Delta \tprhat_a \,  (\Qtilde) \, \le \, \gamma + \frac{1+2 \nu}{B}
\end{align*}
where
\begin{align*}
\nu = \widetilde{O} \left( \frac{B}{\min_{a,y} \{ \qhat_{ay} \}} \sqrt{\frac{|\A| \sqrt{ \ln(1/\delta)} \left( d_{\Hs} \ln (m) + \ln (1/\beta) \right)}{m \, \epsilon}} \right)
\end{align*}
\end{theorem}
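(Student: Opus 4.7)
The plan is to derive both the error bound and the fairness bounds directly from the $\nu$-approximate equilibrium guarantee of Theorem~\ref{theorem1}, exploiting two structural facts: (i) the non-private optimal $Q^\star$ is exactly fair, hence $\rhat(Q^\star) \le \boldsymbol{0}$ coordinatewise, and (ii) the dual iterates produced by the algorithm are nonnegative (each $\widetilde{\lambda}_{t,k}$ is a scaled softmax output), so the averaged $\lambtilde$ lies in the nonnegative orthant of $\Lambda$. This lets the Lagrangian $L(Q,\lamb) = \errhat(Q) + \lamb^\top \rhat(Q)$ be sandwiched from both sides by quantities involving only $\errhat$.

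For the error bound, I first instantiate the first approximate-equilibrium inequality with $Q = Q^\star$:
\begin{equation*}
L(\Qtilde, \lambtilde) \le L(Q^\star, \lambtilde) + \nu = \errhat(Q^\star) + \lambtilde^\top \rhat(Q^\star) + \nu \le \errhat(Q^\star) + \nu,
\end{equation*}
where the final inequality uses $\lambtilde \ge \boldsymbol{0}$ and $\rhat(Q^\star) \le \boldsymbol{0}$. Then I instantiate the second approximate-equilibrium inequality with $\lamb = \boldsymbol{0} \in \Lambda$, which yields $L(\Qtilde, \lambtilde) \ge \errhat(\Qtilde) - \nu$. Chaining the two gives $\errhat(\Qtilde) \le \errhat(Q^\star) + 2\nu$, as claimed.

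For the fairness bounds, the idea is to extract a per-coordinate inequality on $\rhat(\Qtilde)$ by sweeping $\lamb$ over the vertices $B e_k$ of $\Lambda$. From the display above I already have $\lambtilde^\top \rhat(\Qtilde) \le \errhat(Q^\star) - \errhat(\Qtilde) + \nu \le 1 + \nu$ (using $\errhat \in [0,1]$). The second approximate-equilibrium inequality, applied to an arbitrary $\lamb \in \Lambda$ and combined with the above, yields
\begin{equation*}
\lamb^\top \rhat(\Qtilde) \le \lambtilde^\top \rhat(\Qtilde) + \nu \le 1 + 2\nu \quad \text{for all } \lamb \in \Lambda.
\end{equation*}
Setting $\lamb = B e_k$ for each coordinate $k$ (valid since $\|B e_k\|_1 = B$) produces $\rhat(\Qtilde)_k \le (1+2\nu)/B$. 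Because the coordinates of $\rhat$ come in the signed pairs $\pm(\fprhat_a(\Qtilde) - \fprhat_0(\Qtilde)) - \gamma$ and $\pm(\tprhat_a(\Qtilde) - \tprhat_0(\Qtilde)) - \gamma$, both bounds together translate into $\Delta \fprhat_a(\Qtilde), \Delta \tprhat_a(\Qtilde) \le \gamma + (1+2\nu)/B$.

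The bulk of the technical work is already absorbed into Theorem~\ref{theorem1}; the remaining step is essentially the standard minimax-to-feasibility translation. The only minor obstacle is keeping careful track of signs and of the structure of $\rhat$ and $\Lambda$ so that both the ``$Q^\star$ is feasible implies $\lambtilde^\top \rhat(Q^\star) \le 0$'' step and the ``sweep the vertices of $\Lambda$'' step are justified; in particular, one must verify that the algorithm's dual iterates are nonnegative and that $\|B e_k\|_1 \le B$, both of which are immediate from the definitions.
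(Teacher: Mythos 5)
Your proposal is correct, and it is essentially the paper's own argument: the paper cites Lemma~\ref{lemma2msr} and Lemma~\ref{lemma3msr} (imported from \cite{agarwal}) as black boxes, and your proof simply unfolds the proofs of those two lemmas --- namely, the standard minimax-to-feasibility translation that derives the error bound from plugging $Q^\star$ and $\lamb=\boldsymbol{0}$ into the approximate-equilibrium inequalities, and the fairness bound from sweeping the vertices $B e_k$ of $\Lambda$. The details (nonnegativity of $\lambtilde$, $\rhat(Q^\star)\le\boldsymbol{0}$, $\errhat\in[0,1]$, and the signed-pair structure of $\rhat$) are all tracked correctly.
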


\begin{remark}\label{btradeoff}
Notice the bounds stated above reveal a tradeoff between accuracy and fairness violation that we may control through the parameter $B$. As $B$ gets increased, the upper bound on error will get looser while the one on fairness violation gets tighter. We will consider a setting in the next subsection where we can remove this extra tradeoff and choose $B$ as small as possible --- at the cost of requiring that the classifiers be able to use protected attributes at test time.
\end{remark}
\medskip
We assumed so far in this section that the protected attribute $A$ is not available to the classifiers in $\Hs$ (``$A$-blind" classification) and stated all our bounds in terms of $d_\Hs$. In the more general setting where classifiers in $\Hs$ could depend on $A$ (``$A$-aware" classification), similar results hold. The only change to make is to replace $\ln \, (m^{d_\Hs})$ with $\ln \, (|\Hs|)$ in Algorithm \ref{dpalgo} (when computing the number of iterations $T$) and in the bounds. See Theorem \ref{theorem3} for this generalization.

\medskip
\begin{theorem}[Error-Privacy, Fairness-Privacy Tradeoffs]\label{theorem3}
Suppose $|\Hs| < \infty$ and let $(\Qtilde, \lambtilde)$ be the output of Algorithm~\ref{dpalgo} that runs for
$$
T =\frac{B \sqrt{\ln(4 |\A| -3) } \, m \, \epsilon}{2 \left(2|\A|B+1 \right) \sqrt{\ln \left(1/\delta \right)} \left( \ln \left(|\Hs| \right) + \ln \left(2 /\beta \right) \right)}
$$
iterations, and let $Q^\star$ be the solution to the non-private $\gamma$-fair ERM problem~\ref{box:fairerm}. We have that with probability at least $1-\beta$,
\begin{align*}
\errhat \, (\Qtilde) \, \le \, \errhat \, (Q^\star) + 2 \nu
\end{align*}
and for all $ a \neq 0$,
\begin{align*}
&\Delta \fprhat_a \, (\Qtilde) \, \le \, \gamma + \frac{1+2 \nu}{B} \\
&\Delta \tprhat_a \, (\Qtilde) \, \le \, \gamma + \frac{1+2 \nu}{B}
\end{align*}
where
\begin{align*}
\nu = \widetilde{O} \left( \frac{B}{\min_{a,y} \{ \qhat_{ay} \}} \sqrt{\frac{|\A| \sqrt{ \ln \left(1/\delta \right)} \left( \ln \left(|\Hs|/\beta \right) \right)}{m \, \epsilon}} \right)
\end{align*}
\end{theorem}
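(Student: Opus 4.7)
The proof will follow the same three-stage structure that establishes Theorem \ref{theorem2}, and the key observation is that the VC assumption $d_\Hs<\infty$ enters that proof only in one place: bounding the suboptimality of the Learner's best response. There, the exponential-mechanism best responder $\besth^{\epsilon'}$ is run over the finite effective range $\Hs(S)$, whose cardinality is bounded by $m^{d_\Hs}$ via Sauer's lemma, and Theorem \ref{expthm} therefore contributes an additive error term of order $\ln(m^{d_\Hs})/\epsilon' = d_\Hs\ln(m)/\epsilon'$. Under the assumption $|\Hs|<\infty$, we instead run the exponential mechanism directly over $\Hs$, so the corresponding accuracy term becomes $\ln(|\Hs|)/\epsilon'$. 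Crucially, since the cost-sensitive loss defined in Subroutine \ref{privatebesth} has the same per-record sensitivity regardless of whether the range is $\Hs(S)$ or $\Hs$, the same privacy budget $\epsilon'=\epsilon/(4\sqrt{T\ln(1/\delta)})$ suffices, and advanced composition across $T$ rounds still yields $(\epsilon,\delta)$-DP in the sensitive attributes. Moreover, because the mechanism's range is now $\Hs$ (no pre-labelling on $S$ is needed), the exponential mechanism's output at each round is an $\epsilon'$-DP function of the \emph{entire} dataset, so this version of the algorithm additionally protects the privacy of the unprotected attributes $X$.

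Second, propagating this substitution through the proof of Lemma \ref{regretq} would give the analogous bound on the Learner's regret with $d_\Hs\ln(m)+\ln(2T/\beta)$ replaced by $\ln(|\Hs|)+\ln(2T/\beta)$. The Auditor's regret bound (Lemma \ref{regretlambda}) is untouched because it does not involve $\Hs$ at all; it depends only on the Laplace noise scale and on $|\A|$ and $B$. Balancing the two regrets in $T$ (as in the derivation of the step size $\eta$ and of $T$ in Algorithm \ref{dpalgo}) then gives the specific value of $T$ stated in Theorem \ref{theorem3}, and plugging this choice into the sum of the two regret bounds yields, mirroring Theorem \ref{theorem1}, that $(\Qtilde,\lambtilde)$ is a $\nu$-approximate equilibrium of the Lagrangian game for the claimed
\[
\nu = \widetilde{O}\!\left(\frac{B}{\min_{a,y}\{\qhat_{ay}\}}\sqrt{\frac{|\A|\sqrt{\ln(1/\delta)}\,(\ln(|\Hs|/\beta))}{m\,\epsilon}}\right).
\]

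Third, from the approximate-equilibrium property, the error and fairness guarantees on $\Qtilde$ are derived exactly as in the proof of Theorem \ref{theorem2}, which in turn mirrors the Freund--Schapire--style reduction of \cite{agarwal}. For accuracy, I would compare $L(\Qtilde,\lambtilde)\le L(Q^\star,\lambtilde)+\nu$ to the fact that any feasible $Q$ (in particular $Q^\star$) has $\rhat(Q^\star)\le 0$ coordinatewise, so $L(Q^\star,\lambtilde)\le \errhat(Q^\star)$, while on the other side $L(\Qtilde,\lambtilde)\ge \errhat(\Qtilde)-\nu$ follows from the Auditor's side of the equilibrium by taking $\lamb=\boldsymbol 0$, giving $\errhat(\Qtilde)\le \errhat(Q^\star)+2\nu$. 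For fairness, I would invoke the Auditor side $L(\Qtilde,\lambtilde)\ge \sup_{\lamb\in\Lambda} L(\Qtilde,\lamb)-\nu$ and instantiate the supremum by placing all $B$ units of $\ell_1$-mass on the largest coordinate of $\rhat(\Qtilde)$; combined with the accuracy bound and $\errhat(\Qtilde)\in[0,1]$ this yields $\max_k \rhat_k(\Qtilde)\le (1+2\nu)/B$, which is exactly the stated $\gamma+(1+2\nu)/B$ bound on each $\Delta\fprhat_a(\Qtilde)$ and $\Delta\tprhat_a(\Qtilde)$.

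\textbf{Main obstacle.} There is no essentially new technical obstacle relative to Theorem \ref{theorem2}; everything except the one accuracy substitution goes through verbatim. The one point that warrants care is bookkeeping the privacy accounting when the exponential mechanism's range is enlarged from $\Hs(S)$ to $\Hs$: I need to verify that the per-round sensitivity of the cost-sensitive loss used in Subroutine \ref{privatebesth} is bounded independently of the range (it is, since the loss depends on the data only coordinate-wise through $(X_i,A_i,Y_i)$ and the $\qhat_{A_iY_i}$ normalizations), so that the same $\epsilon'$ and hence the same advanced-composition bound still deliver overall $(\epsilon,\delta)$-DP, now simultaneously in both sensitive and unprotected attributes.
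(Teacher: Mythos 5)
Your proposal is correct and matches the paper's approach: the paper establishes Theorem~\ref{theorem3} via the single remark preceding it that one need only replace $\ln(m^{d_\Hs})$ by $\ln(|\Hs|)$ in the choice of $T$ and in the bounds, precisely because $d_\Hs$ enters the analysis only through the exponential mechanism's effective range $\Hs(S)$ with $|\Hs(S)|\le O(m^{d_\Hs})$ by Sauer's lemma. Your write-up simply spells out that substitution through Lemma~\ref{accuracy}, Lemma~\ref{regretq}, and Theorem~\ref{theorem1}, and correctly notes (as the paper does) that the per-round sensitivity and hence the advanced-composition privacy accounting are unaffected by enlarging the range from $\Hs(S)$ to $\Hs$, and that using $\Hs$ as the range also yields privacy in the unprotected attributes.
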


\subsection{An Extension: Better Tradeoffs for $A$-aware Classification}\label{sec:extension}
In this subsection we show that if we only ask for equalized false positive rates (instead of equalized odds, which also requires equalized true positive rates), and moreover, if we assume $\Hs$ includes all ``maximally discriminatory" classifiers (see Assumption \ref{ass5}), the fairness violation guarantees given in Theorem \ref{theorem3} can be improved. As a consequence, the tradeoff discussed in Remark \ref{btradeoff} will be no longer an issue. Thus, in this subsection, we are interested in solving the $\gamma$-fair ERM Problem \ref{fairerm2} which now only has false positive parity constraints.
\begin{tcolorbox}[title=$\gamma$-fair ERM Problem]
\begin{equation}\label{fairerm2}
\begin{aligned}
& \ \ \ \min_{Q \, \in \, \Delta(\Hs)}  & & \errhat (Q) \\
& \text{ s.t. $\forall \underset{a \neq 0}{a \in \A}$:} & & \Delta \fprhat_a (Q) \le \gamma
\end{aligned}
\end{equation}
\end{tcolorbox}
\medskip
\begin{assumption}\label{ass5}
$\Hs$ includes all maximally discriminatory classifiers (i.e. group indicator functions): $\{ h_a (X,A) = \1_{A=a}, \, \bar{h}_a(X,A) = \1_{A \neq a} \, \vert \, a \in \A \} \subseteq \Hs$.
\end{assumption}
\medskip
\begin{theorem}[Error-Privacy, Fairness-Privacy Tradeoffs]\label{theorem4}
Suppose $|\Hs| < \infty$, $ B > |A| - 1$, and let Assumption \ref{ass5} hold. Let $(\Qtilde, \lambtilde)$ be the output of Algorithm~\ref{dpalgo}, and let $Q^\star$ be the solution to the $\gamma$-fair ERM problem~\ref{fairerm2}. We have that with probability at least $1-\beta$,
\begin{align*}
\errhat \, (\Qtilde) \, \le \, \errhat \, (Q^\star) + 2 \nu
\end{align*}
and for all $ a \neq 0$,
\begin{align*}
&\Delta \fprhat_a \, (\Qtilde) \, \le \, \gamma + \frac{2\nu}{B - (|\A|-1)}
\end{align*}
\end{theorem}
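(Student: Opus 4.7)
The plan is to follow the same high-level outline as Theorem~\ref{theorem3}---deriving error and fairness guarantees from the $\nu$-approximate equilibrium characterization of $(\Qtilde, \lambtilde)$ in Theorem~\ref{theorem1} (specialized to the FPR-only Lagrangian arising from \eqref{fairerm2})---with the key improvement in the fairness bound coming from Assumption~\ref{ass5}. First, the error bound $\errhat(\Qtilde) \le \errhat(Q^\star) + 2\nu$ is obtained exactly as in Theorem~\ref{theorem3}: combine the Learner's side of the $\nu$-approximate equilibrium applied at $Q = Q^\star$ with the Auditor's side applied at $\lamb = \mathbf{0}$, using $\rhat(Q^\star) \le \mathbf{0}$ and $\lambtilde \ge \mathbf{0}$ so that $\lambtilde^\top \rhat(Q^\star) \le 0$. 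This step makes no use of the structure of $\Hs$.

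The new ingredient is the following key lemma, enabled by Assumption~\ref{ass5}: for any $Q \in \Delta(\Hs)$ with maximum constraint violation $\delta := \max\bigl(0, \max_{a \neq 0} |\fprhat_a(Q) - \fprhat_0(Q)| - \gamma\bigr)$, there exists $Q' \in \Delta(\Hs)$ that is $\gamma$-fair (i.e.\ $\rhat(Q') \le \mathbf{0}$) with $\errhat(Q') \le \errhat(Q) + (|\A|-1)\delta$. I would construct $Q'$ as a mixture $Q' = \beta Q + \sum_{a \text{ upper-viol.}} \alpha_a \bar{h}_a + \sum_{a \text{ lower-viol.}} \mu_a h_a$, exploiting that $\bar{h}_a$ has FPR $0$ on group $a$ and $1$ elsewhere---so adding weight $\alpha_a$ on $\bar{h}_a$ cleanly reduces $\fprhat_a(Q') - \fprhat_0(Q')$ by $\alpha_a$ (the contribution of $\bar{h}_a$ to $\fprhat_0$ cancels the contributions of other $\bar{h}_c$'s to $\fprhat_a$)---and symmetrically $h_a$ increases $\fprhat_a - \fprhat_0$ by $\mu_a$ without affecting any other $(a', 0)$ difference. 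Saturating each fairness constraint at its boundary forces $\alpha_a, \mu_a \ge \beta(\gamma + \delta_a) - \gamma$, where $\delta_a \le \delta$ is the per-group violation; solving the resulting system gives $1 - \beta = \sum_a \delta_a / (1 + k\gamma + \sum_a \delta_a) \le \sum_a \delta_a \le (|\A|-1)\delta$, since at most $|\A|-1$ groups $a \neq 0$ can violate. By linearity of $\errhat$ in the mixture and the trivial bound $\errhat \in [0,1]$, this yields $\errhat(Q') - \errhat(Q) \le 1 - \beta \le (|\A|-1)\delta$.

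Given the lemma, the fairness bound follows from the equilibrium conditions. From the Auditor's side, setting $\lamb$ to place all weight $B$ on the most violated coordinate of $\rhat(\Qtilde)$ gives $\lambtilde^\top \rhat(\Qtilde) \ge B\delta - \nu$ where $\delta = \max_i \rhat_i(\Qtilde)^+$. From the Learner's side applied to $Q^\star$, combined with $\rhat(Q^\star) \le \mathbf{0}$ and the lemma (which bounds $\errhat(Q^\star) \le \errhat(\Qtilde) + (|\A|-1)\delta$ by optimality of $Q^\star$ among $\gamma$-fair classifiers), we obtain $\lambtilde^\top \rhat(\Qtilde) \le \errhat(Q^\star) - \errhat(\Qtilde) + \nu \le (|\A|-1)\delta + \nu$. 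Combining yields $B\delta \le (|\A|-1)\delta + 2\nu$, hence $\delta \le 2\nu/(B - (|\A|-1))$, which is exactly the claimed bound $\Delta\fprhat_a(\Qtilde) \le \gamma + \delta$.

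The main obstacle is verifying the lemma's construction carefully---in particular, that the total corrective mixing mass really is at most $(|\A|-1)\delta$ rather than something scaling with $\gamma + \delta$, as a naive ``mix with a globally fair classifier'' approach would yield (and which would only recover the weaker Theorem~\ref{theorem3}-style bound). The key insight is twofold: we do not need to drive the FPRs to exact equality---only within the $\gamma$ slack---and the maximally discriminatory classifiers permit targeted, group-specific adjustments whose cross-contributions to the reference group $0$ cancel out exactly when computing $\fprhat_a - \fprhat_0$. A short case analysis over the number and direction (upper vs.\ lower) of violating groups confirms that the worst case---a single group $a^\star$ with $|\fprhat_{a^\star}(Q) - \fprhat_0(Q)| = \gamma + \delta$ and all others matching $\fprhat_0$---is the one that saturates the $(|\A|-1)\delta$ bound, with all other violation patterns giving strictly less.
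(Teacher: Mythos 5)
Your proposal is correct and takes essentially the same route as the paper: the key lemma you identify---mixing with the maximally discriminatory classifiers $h_a,\bar{h}_a$ to repair false-positive-rate violations at total extra error at most $(|\A|-1)\delta$---is precisely the paper's construction, and your combination of the Learner-side and Auditor-side equilibrium inequalities to solve for $\delta \le 2\nu/(B-(|\A|-1))$ matches the paper's argument. The only cosmetic difference is that you route the Learner-side bound through $Q^\star$ via its optimality among $\gamma$-fair classifiers, whereas the paper plugs the constructed $\gamma$-fair mixture of $\Qtilde$ with $h_a,\bar{h}_a$ directly into the Learner's side of the equilibrium condition (and in a similarly minor vein, the paper's mixing weights overshoot the fairness boundary rather than saturating it exactly, but both yield the same $(|\A|-1)\delta$ bound on the corrective mass).
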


As an immediate consequence of Theorem \ref{theorem4}, we have the following Corollary where $B = |\A|$ can be chosen to get bounds which are now free of $B$.
\medskip
\begin{corollary}
Under assumptions stated in Theorem \ref{theorem4}, one can choose $B=|\A|$ in Algorithm \ref{dpalgo}, in which case with probability at least $1-\beta$,
\begin{align*}
\errhat \, (\Qtilde) \, \le \, \errhat \, (Q^\star) + 2 \nu
\end{align*}
and for all $ a \neq 0$,
\begin{align*}
&\Delta \fprhat_a \, (\Qtilde) \, \le \, \gamma + 2 \nu
\end{align*}
where
\begin{align*}
\nu = \widetilde{O} \left( \frac{|\A|}{\min_{a,y} \{ \qhat_{ay} \}} \sqrt{\frac{|\A| \sqrt{ \ln(1/\delta)} \ln (|\Hs|/\beta) }{m \, \epsilon}} \right)
\end{align*}
\end{corollary}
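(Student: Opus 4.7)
The proof has two parts built on the $\nu$-approximate equilibrium from Theorem~\ref{theorem1}, specialized to the FPR-only setting (so $\rhat$ has only the $2(|\A|-1)$ FPR-parity constraints and $\lamb \in \R^{2(|\A|-1)}$ with $\|\lamb\|_1 \le B$). The accuracy bound is standard: plugging $Q = Q^\star$ into the Learner's near-best-response inequality and using $\rhat(Q^\star) \le 0$ with $\lambtilde \ge 0$ gives $L(\Qtilde, \lambtilde) \le \errhat(Q^\star) + \nu$; plugging $\lamb = \boldsymbol{0}$ into the Auditor's inequality gives $L(\Qtilde, \lambtilde) \ge \errhat(\Qtilde) - \nu$; combining yields $\errhat(\Qtilde) \le \errhat(Q^\star) + 2\nu$, exactly as in Theorem~\ref{theorem3}.

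For the fairness bound, let $\tau = \max_k \rhat_k(\Qtilde)^+$; the goal is $\tau \le 2\nu/(B-(|\A|-1))$. Plugging $\lamb = B \cdot \mathbf{e}_{k^\star}$ (all mass on the most-violated constraint) into the Auditor's inequality gives $L(\Qtilde, \lambtilde) \ge \errhat(\Qtilde) + B\tau - \nu$, and combining with the Learner's upper bound produces
\[
B\tau \;\le\; \errhat(Q^\star) - \errhat(\Qtilde) + 2\nu.
\]
The proof of Theorem~\ref{theorem3} closes out here with the trivial $\errhat(Q^\star) \le 1$; the improvement in Theorem~\ref{theorem4} comes entirely from sharpening this to $\errhat(Q^\star) - \errhat(\Qtilde) \le (|\A|-1)\tau$, which is where Assumption~\ref{ass5} is crucial.

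I would prove the sharpening by explicitly constructing, from $\Qtilde$, a $\gamma$-fair classifier $Q' \in \Delta(\Hs)$ with $\errhat(Q') \le \errhat(\Qtilde) + (|\A|-1)\tau$; optimality of $Q^\star$ then forces $\errhat(Q^\star) \le \errhat(Q')$. The construction processes the $|\A|-1$ groups $a \ne 0$ sequentially, maintaining a current classifier $Q^{(i)}$ with $Q^{(0)} = \Qtilde$. For each $a$, examine $\delta_a := \fprhat_a(Q^{(i)}) - \fprhat_0(Q^{(i)})$: if $\delta_a > \gamma$, set $Q^{(i+1)} = (1-c_a)Q^{(i)} + c_a \bar{h}_a$ with $c_a = (\delta_a - \gamma)/(1+\delta_a)$; if $\delta_a < -\gamma$, use $h_a$ with $c_a = (-\delta_a - \gamma)/(1-\delta_a)$; otherwise do nothing. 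Both $h_a, \bar{h}_a \in \Hs$ by Assumption~\ref{ass5}. A direct calculation shows (i) this choice of $c_a$ brings the $a$-th gap to exactly $\pm\gamma$, and (ii) for every other group $b \ne a, 0$, the gap $\fprhat_b - \fprhat_0$ gets multiplied by $(1-c_a) \in (0,1)$, because both $h_a$ and $\bar{h}_a$ act identically on all groups $b \ne a$ (and in particular on group $0$). So previously-fixed gaps are not disturbed, no new violations appear, and the process terminates with a $\gamma$-fair $Q'$. Each $c_a \le \tau_a \le \tau$, and the per-step error increase is at most $c_a$ (errors lie in $[0,1]$), so the total error increase telescopes to at most $\sum_{a \ne 0} c_a \le (|\A|-1)\tau$.

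Plugging the sharpened inequality back in gives $(B - (|\A|-1))\tau \le 2\nu$, which is positive by the hypothesis $B > |\A|-1$, yielding $\tau \le 2\nu/(B-(|\A|-1))$ and hence the claimed fairness bound. The main obstacle is the sequential construction: verifying carefully that, across both mixing directions ($h_a$ versus $\bar{h}_a$), every non-target gap $\fprhat_b - \fprhat_0$ rescales by the same factor $(1-c_a)$, so the argument is monotone and terminates in exactly $|\A|-1$ steps. This uniform rescaling is precisely what produces the linear-in-$|\A|$ error accounting and thus the improved trade-off free of the $1/B$ additive fairness term.
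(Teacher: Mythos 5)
Your proof is correct and uses the same key mechanism as the paper's proof of Theorem~\ref{theorem4}, of which this corollary is simply an immediate instantiation with $B=|\A|$. The core idea in both is to sharpen the trivial bound $\errhat(Q^\star) - \errhat(\Qtilde) \le 1$ (which, via Lemma~\ref{lemma3msr}, yields only the $(1+2\nu)/B$ additive fairness term of Theorem~\ref{theorem3}) to $\errhat(Q^\star) - \errhat(\Qtilde) \le (|\A|-1)\tau$, by constructing a $\gamma$-fair mixture of $\Qtilde$ with the maximally discriminatory classifiers whose error exceeds $\errhat(\Qtilde)$ by at most $(|\A|-1)\tau$. The difference is in how that mixture is built. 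The paper's proof does it in one shot: set $\beta_a = (\fprhat_0(\Qtilde)-\fprhat_a(\Qtilde)-\gamma)_+$, $\bar\beta_a = (\fprhat_a(\Qtilde)-\fprhat_0(\Qtilde)-\gamma)_+$, and $Q = \frac{1}{1+\sum_a(\beta_a+\bar\beta_a)}\bigl(\Qtilde + \sum_a \beta_a h_a + \bar\beta_a \bar h_a\bigr)$, then verify feasibility and $\errhat(Q)\le\errhat(\Qtilde)+(|\A|-1)\tau$ directly. You instead mix in $h_a$ or $\bar h_a$ one group at a time with per-step renormalization. Both constructions hinge on exactly the fact you emphasize --- that $h_a$ and $\bar h_a$ act identically on every group $b \ne a$, so the $b$-vs-$0$ gap is either left unchanged (paper's simultaneous accounting, after the cancellation $\beta_a - \bar\beta_a$) or uniformly rescaled by $(1-c_a)$ (your sequential accounting) --- and both deliver $(B-(|\A|-1))\tau \le 2\nu$. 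Your sequential version is a bit more verbose but arguably makes the $|\A|-1$ overhead and the termination argument more transparent; the paper's one-shot version is more compact. Either way, plugging in $B=|\A|$ gives the stated bounds and the stated $\nu$ (with $\ln|\Hs|$ replacing $d_\Hs\ln m$ per Theorem~\ref{theorem3}).
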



\subsection{A Separation: $A$-blind vs. $A$-aware Classification}
\label{sec:separation}
In this subsection we show that the sensitivity of the accuracy of the optimal classifier subject to fairness constraints can be substantially higher if it is prohibited from using sensitive attributes at test time. This implies that higher error must be introduced when estimating this accuracy subject to differential privacy. This shows a fundamental tension between the goals of trading off privacy and approximate equalized odds, with the goal of preventing disparate treatment. 
 Given a data set $D$ of $m$ individuals, define $f(D)$ to be the optimal error rate in the $\gamma$-fair ERM problem \ref{fairerm2} which is constrained to have a false positive rate disparity of at most $\gamma$. 

Consider the following problem instance. Let $X$ be the unprotected attribute taking value in $\X = \{U,V\}$, and let $A$ be the protected attribute taking value in $\A = \{R,B\}$. Suppose $\Hs$ consists of two classifiers $h_0$ and $h_U$ where $h_0(X,A) = 0$ and $h_U(X,A) = \1_{X=U}$. Notice that both $h_0$ and $h_U$ depend only on the unprotected attribute. Consider two other classifiers $h_R$ and $h_B$ that depend on the protected attribute: $h_R(X,A) = \1_{A=R}$ and $h_B(X,A) = \1_{A=B}$.
\medskip
\begin{theorem}\label{thm:separation}
Consider $\gamma > 1/m$ and data sets with $\min_a \qhat_{a0} \geq C$ for some constant $C > 0$. If $\Hs = \{h_0, h_U\}$, the sensitivity of $f$ is $\Omega(1/(\gamma m))$. If the ``maximally discriminatory" classifier $h_R$ and $h_B$ are included in $\Hs$ as well, i.e. $\Hs = \{h_0,h_U,h_R,h_B\}$, the sensitivity of $f$ is $O(1/ m)$.
\end{theorem}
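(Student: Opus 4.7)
In both parts the key step is to characterize $f(D)$ as the optimum of a small LP over mixing weights and then to track how this optimum moves under a single-point perturbation of $D$. Let $e_i(D) := \errhat(h_i)$ for $h_i \in \Hs$, and for $a \in \{R, B\}$ let $p_a(D) := \Pshat[X=U \mid A=a, Y=0]$. The assumption $\min_a \qhat_{a0} \ge C$ ensures that each $e_i$ and each $p_a$ has single-point sensitivity $O(1/m)$.

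For the $\Omega(1/(\gamma m))$ lower bound with $\Hs = \{h_0, h_U\}$, I parameterize $Q = \alpha h_U + (1-\alpha) h_0$. Then $\fprhat_R(Q) - \fprhat_B(Q) = \alpha (p_R - p_B)$, and the $\gamma$-fair ERM reduces to minimizing $\alpha e_U + (1-\alpha) e_0$ over $\alpha \in [0,1]$ subject to $\alpha|p_R - p_B| \le \gamma$. Arranging $e_U < e_0$, the optimum is $\alpha^\star = \min\bigl(1, \gamma/|p_R - p_B|\bigr)$ with value $f(D) = e_U + (1-\alpha^\star)(e_0 - e_U)$. I build $D$ with $|p_R - p_B|(D) = \gamma$ (for instance, put a constant fraction of points in each $(A,Y)$ cell, let the $(R,0)$ block contain exactly a $\gamma$-fraction with $X=U$ and the $(B,0)$ block none, and let all $Y=1$ points take $X=U$ so that $e_0 - e_U = \Omega(1)$), and flip one point in the $(R,0)$ block from $X=V$ to $X=U$ to produce $D'$ with $|p_R-p_B|(D') = \gamma + \Theta(1/m)$. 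A direct computation gives $f(D) = e_U(D)$ and
\begin{align*}
f(D') - f(D) \; = \; \bigl(e_U(D') - e_U(D)\bigr) + \frac{\Theta(1/m)}{\gamma + \Theta(1/m)} \bigl(e_0(D') - e_U(D')\bigr) \; = \; \Omega\!\left(\frac{1}{\gamma m}\right),
\end{align*}
where both summands are positive and the second one, of order $\Theta(1/(\gamma m))$, dominates whenever $\gamma < 1$.

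For the $O(1/m)$ upper bound with $\Hs = \{h_0, h_U, h_R, h_B\}$, I write $Q = \alpha_0 h_0 + \alpha_U h_U + \alpha_R h_R + \alpha_B h_B$ on the simplex and compute $F(\alpha; D) := \fprhat_R(Q) - \fprhat_B(Q) = \alpha_U(p_R - p_B) + \alpha_R - \alpha_B$. Let $\alpha^\star$ be optimal for $D$. Moving to $D'$, the objective drifts by $|\sum_i \alpha^\star_i (e_i(D') - e_i(D))| = O(1/m)$, and $|F(\alpha^\star; D') - F(\alpha^\star; D)| \le \alpha_U^\star \cdot O(1/m) = O(1/m)$, so $\alpha^\star$ is at most $\epsilon = O(1/m)$ fairness-infeasible on $D'$. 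To restore feasibility (WLOG $F > \gamma$), I transfer $O(\epsilon)$ mass along one of the directions $(R\to B)$, $(U\to B)$, or $(0\to B)$, whose effects on $F$ per unit mass $t$ are $-2$, $-(1+p_R-p_B)$, and $-1$ respectively, and whose effects on the objective are $O(t)$ since every $e_i \in [0,1]$. A case split on which component of $\alpha^\star$ is $\ge 1/4$ (always at least one) guarantees that a transfer of scale $t = O(1/m)$ is available. The resulting $\tilde\alpha$ is $D'$-feasible with $\sum_i \tilde\alpha_i e_i(D') \le f(D) + O(1/m)$, giving $f(D') \le f(D) + O(1/m)$; by symmetry $|f(D) - f(D')| = O(1/m)$.

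The main obstacle is the case analysis above: I must check that for every optimal $\alpha^\star$ at least one transfer direction has both the right sign and enough source mass relative to $\epsilon = O(1/m)$. The delicate subcase is when $p_R - p_B$ is near $-1$, which collapses the rate of the $(U\to B)$ transfer; however, $F > \gamma > 0$ in this regime forces $\alpha_R^\star > \gamma > 1/m$, so the high-rate $(R \to B)$ transfer then has enough source mass to absorb the $O(1/m)$ correction. Verifying that the hypothesis $\gamma > 1/m$ handles every such corner case is the main technical content of the upper bound.
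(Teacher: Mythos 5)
Your lower-bound argument is essentially the paper's: both build a dataset $D$ on which the fairness constraint $\alpha\lvert p_R-p_B\rvert\le\gamma$ is exactly tight at $\alpha^\star=1$, then move a single unprotected coordinate so that the feasible range for $\alpha$ shrinks by $\Theta(1/m)$, pushing $\Theta(1/(\gamma m))$ extra weight onto the worse classifier. Your parameterization $Q=\alpha h_U+(1-\alpha)h_0$ and the reduction to $\alpha^\star=\min(1,\gamma/\lvert p_R-p_B\rvert)$ is a clean way to organize the same computation the paper carries out with explicit counts.

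For the upper bound the high-level plan matches the paper (perturb the $D$-optimal $Q^\star$ to restore $D'$-feasibility at cost $O(1/m)$), but your execution has a gap. You propose to transfer $O(1/m)$ probability mass from a single source classifier to $h_B$, and justify availability of a source via a case split on which coordinate of $\alpha^\star$ is $\ge 1/4$. This case split does not cover the case where the $\ge 1/4$ coordinate is $\alpha_B^\star$, which is a destination, not a source. Moreover, even for a legitimate single source, you need that source to carry mass at least on the order of $\epsilon = \Theta(1/(Cm))$; the hypothesis $\gamma>1/m$ gives $\alpha_R^\star>\gamma>1/m$ in your delicate subcase, but this does not dominate $1/(Cm)$ when $C<1$, so the constant does not obviously close. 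Both issues are repairable (from $F>\gamma>0$ one can show $\alpha_B^\star<1/2$, hence a constant fraction of mass sits in high-rate sources $\{h_0,h_R\}$ or, when $p_R-p_B\ge -1/2$, also $h_U$), but the resulting case analysis is delicate. The paper sidesteps all of this: it takes the convex combination $Q'=\frac{1}{1+\gamma+\alpha}\bigl((1+\gamma)Q^\star+\alpha h_B\bigr)$, which is always in $\Delta(\Hs)$, shrinks \emph{all} non-$B$ coordinates proportionally (so no source-mass issue can arise), restores $\Delta\fprhat$ to exactly $\gamma$, and increases error by at most $\alpha+1/m=O(1/m)$. I'd recommend replacing the single-source transfer with this proportional rescaling; it eliminates the case split entirely and makes the constant management trivial.
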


\section{Experimental Evaluation}

\begin{figure}
\centering
\includegraphics[scale = 0.55]{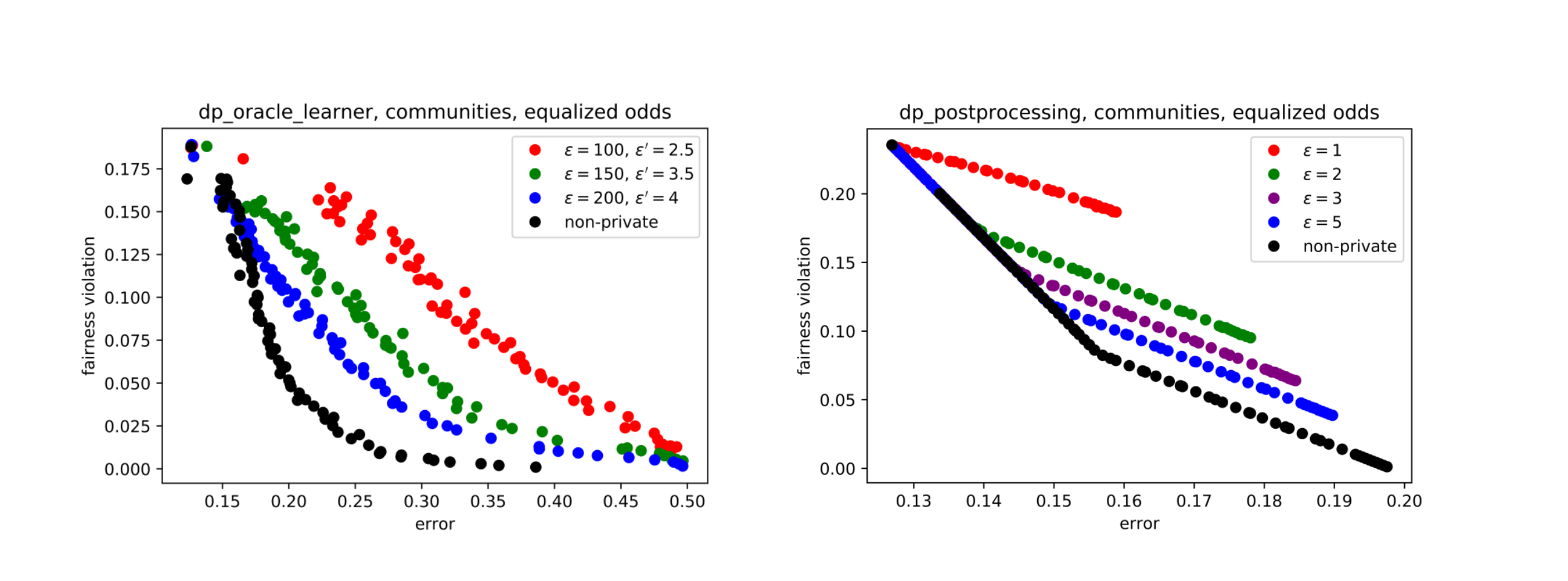}
\caption{Left figure shows the Pareto frontier of error and (equalized odds) fairness violation for the DP-oracle-learner algorithm on the Communities dataset across different privacy parameters . Right figure shows the corresponding Pareto curves for the DP-postprocessing algorithm. Each point on the private curves is averaged over many rounds to reduce the effect of noise variance. See text for details.}
\label{fig:experiment}
\end{figure}

As a proof of concept, we empirically evaluate our two algorithms on a common fairness benchmark dataset: 
the Communities and Crime dataset\footnote{Briefly, each record in this dataset 
summarizes aggregate socioeconomic
information about both the citizens and police force in a particular U.S. community, and the problem is to predict
whether the community has a high rate of violent crime.}
from the UC Irvine Machine Learning Repository. We refer the reader to \cite{empgerry} for an outline of potential fairness concerns present in the dataset. We clean and preprocess the data identically to \cite{empgerry}. Our main experimental goal is to obtain, for both algorithms, the Pareto frontier of error and fairness violation tradeoffs for different levels of differential privacy. To elaborate, for a given setting of input parameters, we start with the target fairness violation bound $\gamma = 0$ and then increase it over a rich pre-specified subset of $[0,1]$ while recording for each $\gamma$ the error and the (realized) fairness violation of the classifier output by the algorithm. We take $\Hs$ to be the class of linear threshold functions, $\beta = 0.05$, and $\delta = 10^{-7}$.

Logistic regression is used as the base classifier of the \textbf{DP-postprocessing} algorithm in our experiments. To implement the Learner's cost-sensitive classification oracle used in the \textbf{DP-oracle-learner} algorithm, following \cite{empgerry}, we build a regression-based linear predictor for each vector of costs ($C_0$ and $C_1$), and classify a point according to the lowest predicted cost. We made this private following the method of \cite{smith2017interaction}: computing each regression as $(X^T X)^{-1} X^T C_{b}$, and adding appropriately scaled Laplace noise to both $X^T X$ and $X^T C_{b}$. Note when the sensitive attribute $A$ is not included in $X$ (the $A$-blind case, as in our experiments) noise need not be added to $X^T X$ as we only need to guarantee the privacy of $A$.

The theory is ambiguous in its predictions about which algorithm should perform better: the ``privacy cost'' is higher for the in-processing algorithm, but the benchmark that the post-processing algorithm competes with is weaker. We would generally expect therefore that on sufficiently large datasets, the in-processing algorithm would obtain better tradeoffs, but on small datasets, the post-processing algorithm would.

Our experimental results appear in Fig. \ref{fig:experiment}. Indeed, on our relatively small dataset ($m \approx 2$K), the post-processing algorithm can obtain good tradeoffs between accuracy and fairness at meaningful levels of $\epsilon$, whereas the in-processing algorithm cannot. Nevertheless, we can empirically obtain the ``shape'' of the Pareto curve trading off accuracy and fairness for unreasonable levels of $\epsilon$ using our algorithm. This is still valuable, because the value of $\epsilon$ obtained by our algorithms predictably decreases as the dataset size $m$ increases without otherwise changing the dynamics of the algorithm. For example, if we ``upsampled'' our dataset by a factor of 10 (i.e. taking 10 copies of the dataset), the result would be a reasonably sized dataset of $m \approx 20$K. Our algorithm run on this upsampled dataset would obtain the same tradeoff curve but now with meaningful values of $\epsilon$. In the left panel of Fig. \ref{fig:experiment}, $\epsilon$ is the actual privacy parameter used in the experiments; while $\epsilon'$ is the value that the privacy parameter would take on the dataset that was upsampled by a factor of 10. 

Recall that the post-processing approach requires the use of the protected attribute at test time, but the in-processing approach does not. Our results therefore suggest that the requirement that we {\em not\/} use the protected attribute at test time (i.e. that we be avoid ``disparate treatment'') might be extremely burdensome if we also want the protections of differential privacy and have only small dataset sizes. In contrast, it can be overcome with the in-processing algorithm at larger dataset sizes.

\section*{Acknowledgements}
AR is supported in part by NSF grants AF-1763307 and CNS-1253345. JU is supported by NSF grants CCF-1718088, CCF-1750640, and CNS-1816028, and a Google Faculty Research Award.

\bibliographystyle{apalike}
\bibliography{fairdp}

\begin{thebibliography}{}

\bibitem[ACM, 2019]{fatstar19}
ACM (2019).
\newblock {ACM} {C}onference on {F}airness, {A}ccountability and
  {T}ransparency.

\bibitem[Agarwal et~al., 2018]{agarwal}
Agarwal, A., Beygelzimer, A., Dudik, M., Langford, J., and Wallach, H. (2018).
\newblock A reductions approach to fair classification.
\newblock \url{arXiv:1803.02453v3}.

\bibitem[Chouldechova and Roth, 2018]{survey}
Chouldechova, A. and Roth, A. (2018).
\newblock The frontiers of fairness in machine learning.

\bibitem[Dwork et~al., 2006a]{DworkKMMN06}
Dwork, C., Kenthapadi, K., McSherry, F., Mironov, I., and Naor, M. (2006a).
\newblock Our data, ourselves: Privacy via distributed noise generation.
\newblock In Vaudenay, S., editor, {\em Advances in Cryptology - EUROCRYPT
  2006}, pages 486--503, Berlin, Heidelberg. Springer Berlin Heidelberg.

\bibitem[Dwork et~al., 2006b]{dwork}
Dwork, C., McSherry, F., Nissim, K., and Smith, A. (2006b).
\newblock Calibrating noise to sensitivity in private data analysis.
\newblock In Halevi, S. and Rabin, T., editors, {\em Theory of Cryptography},
  pages 265--284, Berlin, Heidelberg. Springer Berlin Heidelberg.

\bibitem[Dwork and Roth, 2014]{aaron}
Dwork, C. and Roth, A. (2014).
\newblock The algorithmic foundations of differential privacy.
\newblock {\em Foundations and Trends® in Theoretical Computer Science},
  9(3–4):211--407.

\bibitem[Dwork et~al., 2010]{DworkRV10}
Dwork, C., Rothblum, G.~N., and Vadhan, S. (2010).
\newblock Boosting and differential privacy.
\newblock In {\em Proceedings of the 2010 IEEE 51st Annual Symposium on
  Foundations of Computer Science}, FOCS '10, pages 51--60, Washington, DC,
  USA. IEEE Computer Society.

\bibitem[Freund and Schapire, 1996]{freund}
Freund, Y. and Schapire, R.~E. (1996).
\newblock Game theory, on-line prediction and boosting.
\newblock In {\em Proceedings of the Ninth Annual Conference on Computational
  Learning Theory}, COLT '96, pages 325--332, New York, NY, USA. ACM.

\bibitem[Hardt et~al., 2016]{hardt}
Hardt, M., Price, E., , and Srebro, N. (2016).
\newblock Equality of opportunity in supervised learning.
\newblock In Lee, D.~D., Sugiyama, M., Luxburg, U.~V., Guyon, I., and Garnett,
  R., editors, {\em Advances in Neural Information Processing Systems 29},
  pages 3315--3323. Curran Associates, Inc.

\bibitem[Kearns et~al., 2018a]{empgerry}
Kearns, M., Neel, S., Roth, A., and Wu, Z.~S. (2018a).
\newblock An empirical study of rich subgroup fairness for machine learning.

\bibitem[Kearns et~al., 2018b]{gerrymandering}
Kearns, M., Neel, S., Roth, A., and Wu, Z.~S. (2018b).
\newblock Preventing fairness gerrymandering: Auditing and learning for
  subgroup fairness.

\bibitem[Kilbertus et~al., 2018]{weller}
Kilbertus, N., Gascón, A., Kusner, M.~J., Veale, M., Gummadi, K.~P., and
  Weller, A. (2018).
\newblock Blind justice: Fairness with encrypted sensitive attributes.
\newblock \url{arXiv:1806.03281v1}.

\bibitem[McSherry and Talwar, 2007]{McSherryT07}
McSherry, F. and Talwar, K. (2007).
\newblock Mechanism design via differential privacy.
\newblock In {\em Proceedings of the 48th Annual IEEE Symposium on Foundations
  of Computer Science}, FOCS '07, pages 94--103, Washington, DC, USA. IEEE
  Computer Society.

\bibitem[Shalev-Shwartz, 2012]{shalev}
Shalev-Shwartz, S. (2012).
\newblock Online learning and online convex optimization.
\newblock {\em Foundations and Trends® in Machine Learning}, 4(2):107--194.

\bibitem[Smith et~al., 2017]{smith2017interaction}
Smith, A., Thakurta, A., and Upadhyay, J. (2017).
\newblock Is interaction necessary for distributed private learning?
\newblock In {\em Security and Privacy (SP), 2017 IEEE Symposium on}, pages
  58--77. IEEE.

\bibitem[Song et~al., 2017]{song2017machine}
Song, C., Ristenpart, T., and Shmatikov, V. (2017).
\newblock Machine learning models that remember too much.
\newblock In {\em Proceedings of the 2017 ACM SIGSAC Conference on Computer and
  Communications Security}, pages 587--601. ACM.

\bibitem[Veale and Binns, 2017]{veale2017fairer}
Veale, M. and Binns, R. (2017).
\newblock Fairer machine learning in the real world: Mitigating discrimination
  without collecting sensitive data.
\newblock {\em Big Data \& Society}, 4(2):2053951717743530.

\end{thebibliography}

\appendix

\section{Appendix for Models and Preliminaries: Differential Privacy}
\label{privacyapp}
An important property of differential privacy is that it is robust to \textit{post-processing}. The post-processing of an $(\epsilon,\delta)$-DP algorithm output remains $(\epsilon,\delta)$-DP.
\medskip
\begin{lemma}[Post-Processing \cite{dwork}]\label{postproc}
Let $M: \D^m \to \mathcal{O}$ be a $(\epsilon, \delta)$-DP algorithm and let $f: \mathcal{O} \to \mathcal{R}$ be any randomized function. We have that the algorithm $f \, o \, M: \D^m \to \mathcal{R}$ is $(\epsilon, \delta)$-DP.
\end{lemma}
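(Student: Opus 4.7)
The plan is to show directly from the definition of $(\epsilon,\delta)$-differential privacy that the composition $f \circ M$ satisfies the same inequality on every neighboring pair. Fix neighboring databases $D \sim D' \in \D^m$ and an arbitrary measurable event $S \subseteq \mathcal{R}$. The goal is to prove
\begin{equation*}
\Ps[f(M(D)) \in S] \le e^{\epsilon}\, \Ps[f(M(D')) \in S] + \delta,
\end{equation*}
with the probability taken over the internal randomness of both $M$ and $f$.

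The key reduction is to the case of a deterministic post-processor. I would model the randomized function $f$ as a deterministic function $g: \mathcal{O} \times \mathcal{R}' \to \mathcal{R}$ together with an independent random seed $r$ drawn from some distribution on $\mathcal{R}'$, so that $f(o)$ has the same distribution as $g(o, r)$. For each fixed seed $r$, define $T_r = \{o \in \mathcal{O} : g(o,r) \in S\}$; this is a measurable subset of the output space of $M$. Applying the $(\epsilon,\delta)$-DP guarantee of $M$ to the event $T_r$ yields
\begin{equation*}
\Ps[M(D) \in T_r] \le e^{\epsilon}\, \Ps[M(D') \in T_r] + \delta.
\end{equation*}

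To conclude, I would integrate this pointwise inequality against the distribution of $r$. Using independence of $r$ from $M$'s randomness and Fubini's theorem,
\begin{equation*}
\Ps[f(M(D)) \in S] = \E_r\!\left[\Ps[M(D) \in T_r]\right] \le \E_r\!\left[e^{\epsilon}\,\Ps[M(D') \in T_r] + \delta\right] = e^{\epsilon}\, \Ps[f(M(D')) \in S] + \delta,
\end{equation*}
which is exactly the $(\epsilon,\delta)$-DP condition for $f \circ M$. Since $D, D'$ and $S$ were arbitrary, the claim follows.

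The only real subtlety, and the step I would be most careful about, is the representation of the randomized map $f$ as a measurable deterministic function of its input and an independent random seed, so that the sets $T_r$ are well-defined and the exchange of expectation and probability via Fubini is justified. Once $f$ is written in this canonical form, the proof is a one-line application of the DP guarantee of $M$ followed by integration over $r$; no composition theorem, no loss in parameters, and no dependence on the structure of $f$ or $\mathcal{R}$ beyond measurability is needed.
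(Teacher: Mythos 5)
Your proof is correct and is the standard argument for the post-processing property: reduce the randomized post-processor to a deterministic function of an independent seed, apply the $(\epsilon,\delta)$-DP inequality of $M$ to each preimage event $T_r$, and average over the seed. The paper states this lemma as a known result from \cite{dwork} without proof, and your argument matches the canonical one given in that reference, including the correct handling of the only subtle point (the seed representation and the exchange of expectation and probability).
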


Another important property of differential privacy is that DP algorithms can be composed adaptively with a graceful degradation in their privacy parameters.
\medskip
\begin{theorem}[Composition \cite{DworkRV10}]\label{composition}
Let $M_t$ be an $(\epsilon_t, \delta_t)$-DP algorithm for $t \in [T]$. We have that the composition $M = (M_1, \ldots, M_T)$ is $(\epsilon, \delta)$-DP where $\epsilon = \sum_{t} \epsilon_t$ and $\delta = \sum_t \delta_t$.
\end{theorem}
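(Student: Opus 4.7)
\textbf{Proof plan for Theorem \ref{composition} (basic composition).}

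The plan is to proceed by induction on $T$, with the base case $T=1$ being immediate from the definition of $(\epsilon_1,\delta_1)$-DP. For the inductive step, I fix neighboring datasets $D \sim D' \in \mathcal{D}^m$ and an output set $O \subseteq \mathcal{O}_1 \times \cdots \times \mathcal{O}_T$. I want to bound $\Pr[M(D) \in O]$ in terms of $\Pr[M(D') \in O]$ by splitting the composed output into the first $T-1$ mechanisms' output $\mathbf{o}_{<T} = (o_1,\ldots,o_{T-1})$ and the final output $o_T$, then applying the inductive hypothesis to the prefix and the DP guarantee of $M_T$ to the last coordinate, conditioning on $\mathbf{o}_{<T}$ (which is how one handles the potentially adaptive dependence).

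For the pure-DP special case $\delta_t \equiv 0$, the argument is clean: for any fixed outcome $(o_1,\ldots,o_T)$, the density ratio factorizes as
\begin{equation*}
\frac{\Pr[M(D) = (o_1,\ldots,o_T)]}{\Pr[M(D') = (o_1,\ldots,o_T)]} = \prod_{t=1}^{T} \frac{\Pr[M_t(D)=o_t \mid \mathbf{o}_{<t}]}{\Pr[M_t(D')=o_t \mid \mathbf{o}_{<t}]},
\end{equation*}
and since each factor is bounded by $e^{\epsilon_t}$ (the conditioning just fixes the algorithm $M_t$ being run at round $t$), the product is bounded by $e^{\sum_t \epsilon_t}$. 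Integrating this pointwise bound over any $O$ yields the required DP inequality.

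For the general $\delta_t > 0$ case, the standard device is to use the equivalent ``exceptional event'' characterization of $(\epsilon,\delta)$-DP: $M_t$ being $(\epsilon_t,\delta_t)$-DP means there exist events $E_t^D, E_t^{D'}$ in the output space of $M_t$ with probability at most $\delta_t$ under $M_t(D)$ and $M_t(D')$ respectively, such that on the complements the pure $\epsilon_t$-DP inequality holds. I would let $E$ denote the union over $t \in [T]$ of the pullback of these exceptional events into the full output space; a union bound gives $\Pr[M(D) \in E] \le \sum_t \delta_t$, and outside $E$ I can apply the pure-DP telescoping argument above with parameter $\sum_t \epsilon_t$. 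Combining the two parts yields $\Pr[M(D) \in O] \le e^{\sum_t \epsilon_t}\Pr[M(D') \in O] + \sum_t \delta_t$, as desired.

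The main subtlety (and the main obstacle) is the adaptive setting, where $M_t$ is itself selected based on $\mathbf{o}_{<t}$: one must verify that the $\delta_t$ guarantee and the pointwise $e^{\epsilon_t}$ bound apply uniformly over every possible realization of the history, so that the union bound and the telescoping product both go through. This is handled by noting that for each fixed history $\mathbf{o}_{<t}$, the algorithm run at step $t$ is a deterministic choice of an $(\epsilon_t,\delta_t)$-DP mechanism on the same pair $D \sim D'$, so its DP guarantee holds conditionally; nothing in the proof requires independence across rounds.
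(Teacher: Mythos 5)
The paper does not actually prove Theorem \ref{composition}: it is stated in Appendix \ref{privacyapp} as a standard background fact cited from the literature, so there is no in-paper argument to compare against. Your plan is the standard textbook proof and is essentially sound, with two cautions. First, the ``clean'' induction that peels off the last mechanism and applies its $(\epsilon_T,\delta_T)$ guarantee conditionally on the history actually yields $\delta = \delta_T + e^{\epsilon_T}\delta_{<T}$ rather than $\sum_t \delta_t$ (the additive slack from earlier rounds gets multiplied by the later $e^{\epsilon}$ factors), so the inductive route alone does not deliver the stated constant; you correctly sidestep this by switching to the exceptional-event argument, which is the right device. Second, that characterization needs to be stated with care: $(\epsilon_t,\delta_t)$-DP does not literally provide measurable events $E_t^D, E_t^{D'}$ in the output space on whose complements the conditional laws satisfy pure $\epsilon_t$-DP. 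The correct form is a mixture decomposition $M_t(D) = (1-\delta_t)P_0 + \delta_t P_1$ and $M_t(D') = (1-\delta_t)Q_0 + \delta_t Q_1$ with $P_0, Q_0$ being $(\epsilon_t,0)$-indistinguishable (the Kasiviswanathan--Smith / Dwork--Rothblum--Vadhan lemma), the ``bad event'' being a draw from the $\delta_t$-weighted component. With that lemma in hand, composing the good components via your telescoping product and bounding the total bad mass by $1-\prod_t(1-\delta_t)\le\sum_t\delta_t$ completes the proof exactly as you outline; your handling of adaptivity (for each realized history the round-$t$ mechanism is a fixed $(\epsilon_t,\delta_t)$-DP mechanism on the same neighboring pair) is also correct.
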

Following the Composition Theorem~\ref{composition}, if for instance, an iterative algorithm that runs in $T$ iterations is to be made private with target privacy parameters $\epsilon$ and $\delta = 0$, each iteration must be made $\epsilon/T$-DP. This may lead to a huge amount of per iteration noise if $T$ is too large. The Advanced Composition Theorem \ref{advcomposition} instead allows the privacy parameter at each step to scale with $O(\epsilon/\sqrt{T})$.
\medskip
\begin{theorem}[Advanced Composition \cite{DworkRV10}]\label{advcomposition}
Suppose $0 < \epsilon < 1$ and $\delta > 0$ are target privacy parameters. Let $M_t$ be a $(\epsilon', \delta')$-DP algorithm for all $t \in [T]$. We have that the composition $M = (M_1, \ldots, M_T)$ is $(\epsilon, T\delta' + \delta)$-DP where $\epsilon = 2 \epsilon' \sqrt{2 T \ln(1/\delta)}$.
\end{theorem}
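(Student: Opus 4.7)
The plan is to prove this via the standard \emph{privacy loss random variable} (PLRV) technique of Dwork--Rothblum--Vadhan. Fix neighboring databases $D \sim D'$ and write the composed mechanism as an adaptive sequence producing outputs $y_1, \ldots, y_T$. Define the per-step privacy loss
\[
Z_t(y_{1:t}) \;=\; \ln \frac{\Pr[M_t(D, y_{1:t-1}) = y_t]}{\Pr[M_t(D', y_{1:t-1}) = y_t]},
\]
so that the composition is $(\epsilon,\delta)$-DP whenever $\Pr_{y_{1:T} \sim M(D)}\bigl[\sum_t Z_t > \epsilon\bigr] \leq \delta$ (this is the standard equivalence between high-probability bounds on the PLRV and the $(\epsilon,\delta)$-DP definition; it is routine and follows by integrating the likelihood ratio over the complement of the bad event).

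First I would reduce to the case $\delta' = 0$. For each $t$, the $(\epsilon',\delta')$-DP guarantee of $M_t$ implies (by the standard ``$\delta'$-shaving'' lemma) that there is an event $E_t$ of probability at least $1 - \delta'$ under which the pointwise likelihood ratio $e^{Z_t}$ lies in $[e^{-\epsilon'}, e^{\epsilon'}]$. By a union bound over the $T$ steps, the joint good event $E = \bigcap_t E_t$ has probability at least $1 - T\delta'$. Outside $E$ I simply absorb the failure mass into the additive privacy parameter, contributing the $T\delta'$ term in the final statement. Inside $E$ the problem reduces to analyzing an adaptive composition of $(\epsilon',0)$-DP mechanisms.

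Next I would set up the martingale argument. Conditioning on the history $y_{1:t-1}$, a short calculation using $(\epsilon',0)$-DP yields two facts: $|Z_t| \le \epsilon'$ pointwise, and $\mathbb{E}[Z_t \mid y_{1:t-1}] \le \epsilon'(e^{\epsilon'} - 1)/2 \le (\epsilon')^2$ for $\epsilon' \le 1$ (this is the standard expected-privacy-loss bound derived by Taylor-expanding the log-ratio and using $\mathbb{E}[e^{Z_t}] = 1$ under $D$'s distribution). Consequently $\{Z_t - \mathbb{E}[Z_t \mid y_{1:t-1}]\}_{t=1}^T$ is a bounded martingale difference sequence. Azuma--Hoeffding then gives
\[
\Pr\Bigl[\sum_{t=1}^T Z_t \;>\; T(\epsilon')^2 + \epsilon'\sqrt{2T \ln(1/\delta)} \Bigr] \;\le\; \delta.
\]
Choosing parameters so that the mean contribution $T(\epsilon')^2$ is absorbed into the fluctuation term (e.g., whenever $\epsilon'\sqrt{T} \lesssim \sqrt{\ln(1/\delta)}$, the mean is at most $\epsilon'\sqrt{2T\ln(1/\delta)}$) yields $\sum_t Z_t \le 2\epsilon'\sqrt{2T\ln(1/\delta)} = \epsilon$ with probability $\ge 1 - \delta$. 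Combining with the $T\delta'$ slack from the first step gives the claimed $(\epsilon, T\delta' + \delta)$-DP guarantee.

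The main obstacle, and the reason advanced composition is nontrivial, is handling adaptivity: one must be careful that the choice of $M_t$ can depend on $y_{1:t-1}$ but that the \emph{worst-case} neighboring pair $(D,D')$ is fixed in advance, so the martingale structure is with respect to the randomness in $y_{1:T}$ under $M(D)$ only. Once this filtration is pinned down correctly, the expected-privacy-loss bound and Azuma are both clean; the rest is essentially bookkeeping to merge the $\delta'$-per-step failures with the $\delta$-probability concentration failure.
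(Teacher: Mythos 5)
This theorem is stated in the paper as a quoted preliminary from \cite{DworkRV10}; the paper gives no proof of its own, so there is nothing internal to compare against. Your proposal correctly reconstructs the standard argument from that reference: reduce to the pure-DP case at a cost of $T\delta'$, bound the privacy loss random variable pointwise by $\epsilon'$ and in conditional expectation by $O((\epsilon')^2)$, apply Azuma--Hoeffding to the resulting martingale, and convert the high-probability bound on $\sum_t Z_t$ back into an $(\epsilon,\delta)$-DP guarantee. The parameter bookkeeping also checks out: the hypothesis $\epsilon = 2\epsilon'\sqrt{2T\ln(1/\delta)} < 1$ forces $\epsilon'\sqrt{T} \le \sqrt{2\ln(1/\delta)}$, which is exactly the condition under which the mean term $T(\epsilon')^2$ is absorbed by the fluctuation term, yielding the stated $\epsilon$.

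One point to tighten if you were to write this out in full: the $\delta'$-shaving step is not literally that $(\epsilon',\delta')$-DP gives an event of mass $1-\delta'$ on which the likelihood ratio of the \emph{original} pair of output distributions lies in $[e^{-\epsilon'},e^{\epsilon'}]$ (the one-sided inequalities in the DP definition do not directly yield such an event). The correct form of the lemma produces auxiliary distributions within total variation $\delta'$ of the originals that are $(\epsilon',0)$-indistinguishable pointwise, and the martingale argument is run on those; the $T\delta'$ term then comes from coupling back to the true outputs. Since you explicitly invoke this as "the standard $\delta'$-shaving lemma," this is a presentational imprecision rather than a gap in the approach.
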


\section{Appendix for DP Fair Learning: Post-processing}\label{postprocapp}

\subsection{Fair Learning Approach of \cite{hardt}}\label{subsec:hardt}
We briefly review the fair learning approach of \cite{hardt} in this subsection. Suppose there is an arbitrary base classifier $\Yhat$ which is trained on the set of training examples $\{ \left( X_i, Y_i \right) \}_{i=1}^m$. The goal is to make the classifications of the base classifier $\gamma$-fair with respect to the sensitive attribute $A$ by post-processing the predictions given by $\Yhat$. With slight abuse of notation, let $\Yhat_p$ denote the derived optimal $\gamma$-fair randomized classifier where $p = (p_{\yhat a})_{\yhat,a}$ is a vector of probabilities describing $\Yhat_p$ and that $p_{\yhat a} := \Ps \, [ \Yhat_p = 1 \, | \, \Yhat = \yhat, A = a ]$. Among all fair $\Yhat_p$'s, the one with minimum error can be found by solving the optimization problem $\text{LP}$ (\ref{lp}). Once the optimal solution $p^\star$ is found, one would then use this vector of probabilities, along with the estimate $\Yhat$ given by the base classifier and the sensitive attribute $A$, to make further predictions. See Fig.~\ref{model} for a visual presentation of the adopted model.
\begin{figure}
\centering
\includegraphics[scale = 0.75]{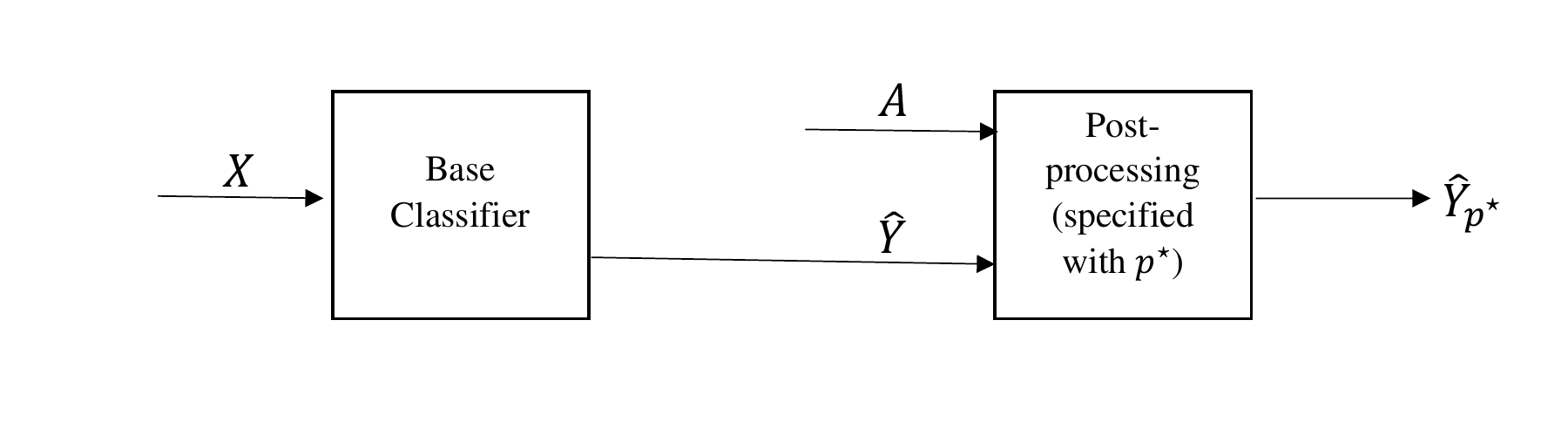}
\caption{The post-processing technique. In the training phase, training examples are used to train the base classifier and find the optimal $p^\star$ by solving $\text{LP}$ (\ref{lp}).}
\label{model}
\end{figure}
\begin{tcolorbox}[title=$\text{LP}$: Linear Program]
\begin{equation}\label{lp}
\begin{aligned}
& \ \ \ \argmin_{p}  & & \err \left(\Yhat_p \right) \\
& \text{ s.t. $\forall \underset{a \neq 0}{a \in \A}$} & & \Delta \fpr_a \left(\Yhat_p \right)  \le \gamma \\
& & &  \Delta \tpr_a \left(\Yhat_p \right)  \le \gamma \\
& & & 0 \le p_{\yhat a} \le 1 \quad \forall \yhat,a
\end{aligned}
\end{equation}
\end{tcolorbox}
Since the true underlying distribution $\Px$ is not known, in practice the empirical distribution $\Pxhat$ is used to estimate the quantities appearing in $\text{LP}$ (\ref{lp}). Using simple probability techniques, one can expand the empirical quantities $\errhat(\Yhat_p)$, $\Delta \fprhat_a (\Yhat_p)$, and $\Delta \tprhat_a (\Yhat_p)$ in a linear form in $p$ with coefficients being a function of $\qhat_{\yhat a y}$ and $\qhat_{a y}$ quantities (see $\widehat{\text{LP}}$ (\ref{lphat})).
\begin{tcolorbox}[title=$\widehat{\text{LP}}$: Empirical Linear Program]
\begin{equation}\label{lphat}
\begin{aligned}
& \ \ \  \argmin_{p} && \errhat \left(\Yhat_p \right) \\
& \text{ s.t. $ \forall \underset{a \neq 0}{a \in \A}$}   && \Delta \fprhat_a \left(\Yhat_p \right) \le \gamma \\
&&&  \Delta \tprhat_a \left(\Yhat_p \right)  \le \gamma \\
&&& 0 \le p_{\yhat a} \le 1 \quad \forall \yhat,a
\footnotetext[0]{$\errhat \left(\Yhat_p \right) = \sum_{\yhat, a} \left( \qhat_{\yhat a 0} - \qhat_{\yhat a 1} \right) \cdot p_{\yhat a} + \sum_{\yhat,a} \qhat_{\yhat a 1}$}
\footnotetext[0]{$\Delta \fprhat_a \left(\Yhat_p \right) = \Big\vert \fprhat_a \left(\Yhat \right) \cdot p_{1a} + \left(1 - \fprhat_a \left(\Yhat \right) \right) \cdot p_{0a} - \fprhat_0 \left(\Yhat \right) \cdot p_{10} - \left(1 - \fprhat_0 \left(\Yhat \right) \right) \cdot p_{00} \Big\vert$}
\footnotetext[0]{$\Delta \tprhat_a \left(\Yhat_p \right) = \Big\vert \tprhat_a \left(\Yhat \right) \cdot p_{1a} + \left(1 - \tprhat_1 \left(\Yhat \right) \right) \cdot p_{0a}  - \tprhat_0 \left(\Yhat \right) \cdot p_{10} - \left(1 - \tprhat_0 \left(\Yhat \right) \right) \cdot p_{00}  \Big\vert$}
\end{aligned}
\end{equation}
\end{tcolorbox}

\subsection{Proof of Theorem \ref{theorem}}\label{subsec:postprocproof}
The proof of Theorem \ref{theorem} relies on some facts which are stated here.
\medskip
\begin{claim}[$\ell_1$-Sensitivity of $\boldsymbol{\qhat}$ to $A$]\label{sensitivity1}
Let $\boldsymbol{\qhat} = \left[ \qhat_{\yhat a y} \right]_{\yhat, a, y}$ be the empirical distribution of $\{ \Yhat, A, Y\}$ and let $\Delta \boldsymbol{\qhat}$ be the $\ell_1$-sensitivity of $\boldsymbol{\qhat}$ to $A$. We have that
$$
\Delta \boldsymbol{\qhat} = \max_{\overset{A,A' \, \in \, \A^m}{A \sim A'}} \left\| \, \boldsymbol{\qhat} \left(A \right) - \boldsymbol{\qhat} \left(A' \right)  \right\|_1 = \frac{2}{m}
$$
\end{claim}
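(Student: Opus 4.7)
The plan is to unpack the definition of $\qhat_{\yhat a y}$ as an average of indicator functions over the $m$ data points, and then carefully track which entries of the vector $\boldsymbol{\qhat}$ change when we swap a single person's sensitive attribute. The crucial structural observation is that the base classifier $\Yhat$ is trained only on $\{(X_i, Y_i)\}_{i=1}^m$ and never sees $A$; hence $\Yhat_i$ is a function of $X_i$ alone, so modifying only the $A$-coordinate of one individual leaves every $\Yhat_i$ (and every $Y_i$) unchanged.

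Concretely, I would write
\[
\qhat_{\yhat a y}(A) \;=\; \frac{1}{m} \sum_{i=1}^m \1\!\left[\Yhat_i = \yhat,\, A_i = a,\, Y_i = y\right],
\]
and consider two neighboring protected-attribute vectors $A \sim A'$ that differ in exactly one coordinate, say index $i^\star$, with $A_{i^\star} \neq A'_{i^\star}$. Since the $i^\star$-th term is the only one that can differ between $\boldsymbol{\qhat}(A)$ and $\boldsymbol{\qhat}(A')$, the coordinate-wise difference
\[
\qhat_{\yhat a y}(A) - \qhat_{\yhat a y}(A') \;=\; \frac{1}{m}\Bigl(\1[\Yhat_{i^\star}=\yhat,\,A_{i^\star}=a,\,Y_{i^\star}=y] - \1[\Yhat_{i^\star}=\yhat,\,A'_{i^\star}=a,\,Y_{i^\star}=y]\Bigr)
\]
is nonzero for at most two triples $(\yhat,a,y)$: the triple $(\Yhat_{i^\star}, A_{i^\star}, Y_{i^\star})$ where the difference equals $+1/m$, and the triple $(\Yhat_{i^\star}, A'_{i^\star}, Y_{i^\star})$ where it equals $-1/m$. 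Summing absolute values over all coordinates gives $\|\boldsymbol{\qhat}(A) - \boldsymbol{\qhat}(A')\|_1 \le 2/m$, and this upper bound is in fact attained (as the above computation shows) whenever such an $i^\star$ with $A_{i^\star} \neq A'_{i^\star}$ exists. Therefore $\Delta \boldsymbol{\qhat} = 2/m$.

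I do not anticipate any real obstacle: the only subtlety to flag explicitly is that $\Yhat_i$ depends solely on $X_i$, so the ``prediction'' coordinate of $\boldsymbol{\qhat}$ is not perturbed when we change $A_{i^\star}$. Once this is noted, the bound is a one-line consequence of the fact that modifying one record changes each empirical probability by at most $1/m$, and only two entries of the joint histogram move at all.
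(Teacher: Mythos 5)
Your argument is correct, and in fact the paper states Claim~\ref{sensitivity1} without a written proof, so your write-up supplies the elided reasoning in precisely the way the authors must have intended. You correctly identify the two essential points: that each $\qhat_{\yhat a y}$ is an average of indicators so a single record change moves the joint histogram in at most two cells, each by $1/m$; and, more importantly, that $\Yhat$ is trained only on $(X,Y)$ and hence $\Yhat_{i^\star}$ is unaffected by replacing $A_{i^\star}$ with $A'_{i^\star}$ --- this is what guarantees that only the $A$-coordinate of the affected triple moves, not the $\Yhat$-coordinate, and is exactly the structural assumption the paper bakes in when it insists the base classifier not use the protected attribute. Your remark that the bound $2/m$ is attained (since $A_{i^\star}\neq A'_{i^\star}$ under the paper's definition of neighboring sensitive-attribute vectors) is also right and is needed to justify the equality rather than a mere $\le$. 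Nothing to add.
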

\medskip
\begin{lemma}\label{lemma2}
Suppose $\min\limits_{a,y} \{ \qhat_{ay}\} >4 \ln \left(4|\A|/\beta \right)/ \left( m \, \epsilon \right)$. we have that with probability $\ge 1 - \beta$,
\begin{enumerate}
\item $\left\vert \errtilde \left(\Yhat_p \right) - \errhat \left(\Yhat_p \right) \right\vert \le \frac{12 |\A| \ln \left(4|\A|/\beta \right)}{m\epsilon} \quad ; \forall \, p.$

\item $ \qtilde_{a y} > 0 \quad ; \forall \, a,y$.

\item $\left\vert \fprtilde_a \left(\Yhat \right) - \fprhat_a \left(\Yhat \right) \right\vert \le \frac{2 \ln \left(4|\A|/\beta \right) }{\qtilde_{a 0}  \, m  \epsilon}$, \quad  $\left\vert \tprtilde_a \left(\Yhat \right) - \tprhat_a \left(\Yhat \right) \right\vert \le \frac{2 \ln \left(4|\A|/\beta \right) }{ \qtilde_{a 1}  \, m \epsilon}  \quad ; \forall \, a.$

\item $\left\vert \Delta \fprtilde_a \left(\Yhat_p \right) - \Delta \fprhat_a \left(\Yhat_p \right) \right\vert \le  \frac{4 \ln \left(4|\A|/\beta \right) }{\min \{ \qtilde_{a0}, \qtilde_{00}\} \, m \epsilon}, \, \left\vert \Delta \tprtilde_a \left(\Yhat_p \right) - \Delta \tprhat_a \left(\Yhat_p \right) \right\vert \le  \frac{4 \ln \left(4|\A|/\beta \right) }{\min \{ \qtilde_{a1}, \qtilde_{01}\} \, m \epsilon} \ ; \forall \, a,p.$

\item $\hat{p}^\star$, the optimal solution of $\widehat{\text{LP}}$ (\ref{lphat}), is feasible in $\widetilde{\text{LP}}$ (\ref{lptilde}).

\end{enumerate}
\end{lemma}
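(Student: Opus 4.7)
The plan is to condition on a single high-probability event controlling all $4|\A|$ Laplace noise variables simultaneously, and then derive each of the five bullets by elementary algebraic manipulation. Specifically, each $W_{\yhat a y} \sim \text{Lap}(2/(m\epsilon))$ satisfies $|W_{\yhat a y}| \le c := 2\ln(4|\A|/\beta)/(m\epsilon)$ with probability at least $1 - \beta/(4|\A|)$ by the standard Laplace tail bound. A union bound over the $4|\A|$ variables gives, with probability at least $1-\beta$, that every $|W_{\yhat a y}| \le c$ simultaneously. The rest of the proof is deterministic on this event.

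For (1), I would expand
\[
\errtilde(\Yhat_p) - \errhat(\Yhat_p) = \sum_{\yhat,a}(W_{\yhat a 0} - W_{\yhat a 1})\, p_{\yhat a} + \sum_{\yhat,a} W_{\yhat a 1},
\]
use $p_{\yhat a} \in [0,1]$ and $|W_{\yhat a y}| \le c$, and count $2|\A|$ terms in each sum, for a total bound of $6|\A|c = 12|\A|\ln(4|\A|/\beta)/(m\epsilon)$. For (2), write $\qtilde_{ay} = \qhat_{ay} + W_{1ay} + W_{0ay}$; then $|W_{1ay}+W_{0ay}| \le 2c$ and the assumption $\min_{a,y}\qhat_{ay} > 2c$ give $\qtilde_{ay} > 0$.

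The main subtlety is (3). Naively $|\fprtilde_a - \fprhat_a| = |\qtilde_{1a0}\qhat_{a0} - \qhat_{1a0}\qtilde_{a0}|/(\qtilde_{a0}\qhat_{a0})$, but the coarse bound has both $\qhat_{a0}$ and $\qtilde_{a0}$ in the denominator, which is weaker than claimed. The key algebraic step is to rewrite the numerator as $W_{1a0}\,\qhat_{0a0} - \qhat_{1a0}\, W_{0a0}$, whose absolute value is at most $c\,(\qhat_{0a0}+\qhat_{1a0}) = c\,\qhat_{a0}$; the $\qhat_{a0}$ then cancels, yielding $|\fprtilde_a - \fprhat_a| \le c/\qtilde_{a0}$. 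The same calculation with $y=1$ handles the TPR bound. This cancellation is the part I expect to be most error-prone and is the crux of the argument.

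For (4), I would apply the reverse triangle inequality to the definitions of $\Delta\fprtilde_a(\Yhat_p)$ and $\Delta\fprhat_a(\Yhat_p)$: since both are absolute values of the same linear form in $\{p_{\yhat a}\}$ with coefficients built from $\fpr_a(\Yhat)$ and $\fpr_0(\Yhat)$, their difference is at most $|\fprtilde_a(\Yhat) - \fprhat_a(\Yhat)|\cdot|p_{1a}-p_{0a}| + |\fprtilde_0(\Yhat) - \fprhat_0(\Yhat)|\cdot|p_{10}-p_{00}|$, which by $p_{\yhat a} \in [0,1]$ and (3) is at most $c/\qtilde_{a0} + c/\qtilde_{00} \le 2c/\min\{\qtilde_{a0},\qtilde_{00}\}$, as required; TPR is identical. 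Finally, (5) is immediate from (4): if $\hat{p}^\star$ satisfies $\Delta\fprhat_a(\Yhat_{\hat p^\star}) \le \gamma$, then $\Delta\fprtilde_a(\Yhat_{\hat p^\star}) \le \gamma + 4\ln(4|\A|/\beta)/(\min\{\qtilde_{a0},\qtilde_{00}\}m\epsilon)$, matching the slackened constraint of $\widetilde{\text{LP}}$, and analogously for TPR; the box constraints $0 \le p_{\yhat a} \le 1$ carry over unchanged.
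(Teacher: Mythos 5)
Your proposal is correct and follows essentially the same approach as the paper: condition on the simultaneous Laplace tail event via a union bound over all $4|\A|$ coordinates, and the key algebraic cancellation in part (3) — rewriting the numerator as $\qhat_{0a0}(\qtilde_{1a0}-\qhat_{1a0}) - \qhat_{1a0}(\qtilde_{0a0}-\qhat_{0a0})$ so that the $\qhat_{a0}$ factor cancels against the denominator — is exactly the manipulation used in the paper's proof. Parts (2), (4), and (5) likewise match the paper's reasoning step for step.
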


\begin{proof}[Proof of Lemma~\ref{lemma2}] By Claim~\ref{sensitivity1} and Theorem \ref{laplacethm}, we have that with probability at least $1 - \beta$, $|| \boldsymbol{\qhat} - \boldsymbol{\qtilde} ||_{\infty} \le \ln \left( 4|\A| / \beta \right) \cdot \left( 2/m\epsilon \right)$. Hence with probability $\ge 1 - \beta$,
\begin{enumerate}

\item $\forall \, p$,
$$
\left\vert \errtilde (\Yhat_p) - \errhat (\Yhat_p) \right\vert \le \sum_{\yhat,a,y} \left\vert \qtilde_{\yhat a y} - \qhat_{\yhat a y}\right\vert + \sum_{\yhat,a} \left\vert \qtilde_{\yhat a 1} - \qhat_{\yhat a 1}\right\vert \le \frac{12 |\A| \ln (4|\A|/\beta)}{m\epsilon}
$$

\item For all $a,y$,
\begin{align*}
\left\vert \qtilde_{ay} - \qhat_{ay} \right\vert &= \left\vert \qtilde_{1ay} + \qtilde_{0ay} - \qhat_{1ay} - \qhat_{0ay}\right\vert \\
& \le \left\vert \qtilde_{1ay} - \qhat_{1ay} \right\vert + \left\vert \qtilde_{0ay} - \qhat_{0ay} \right\vert \\
& \le \frac{4 \ln (4|\A|/\beta) }{ m \epsilon}
\end{align*}
But by the stated assumption, $\qhat_{ay} > \frac{4 \ln (4|\A|/\beta) }{ m \epsilon}$ implying that $\qtilde_{ay} > 0$.

\item $\forall \, a$,
\begin{align*}
\left\vert \fprtilde_a (\Yhat) - \fprhat_a (\Yhat) \right\vert &= \left\vert \frac{\qtilde_{1a0}}{\qtilde_{1a0} + \qtilde_{0a0}} - \frac{\qhat_{1a0}}{\qhat_{1a0} + \qhat_{0a0}}\right\vert \\
&= \left\vert \frac{\qtilde_{1a0} \, \qhat_{0a0} - \qhat_{1a0} \, \qtilde_{0a0}}{(\qtilde_{1a0} + \qtilde_{0a0})(\qhat_{1a0} + \qhat_{0a0})}\right\vert \\
&= \left\vert \frac{\qhat_{0a0}(\qtilde_{1a0} - \qhat_{1a0}) - \qhat_{1a0} (\qtilde_{0a0} - \qhat_{0a0})}{(\qtilde_{1a0} + \qtilde_{0a0})(\qhat_{1a0} + \qhat_{0a0})}\right\vert \\
&\le \frac{2 \ln (4|\A|/\beta) }{|\qtilde_{a 0}| \, m \epsilon} \\
&= \frac{2 \ln (4|\A|/\beta) }{\qtilde_{a 0} \, m \epsilon} \quad (\text{by Part 2 of this Lemma})
\end{align*}
And similarly,
$$
\left\vert \tprtilde_a (\Yhat) - \tprhat_a (\Yhat) \right\vert \le \frac{2 \ln (4|\A|/\beta) }{\qtilde_{a 1} \, m \epsilon}
$$

\item Observe that $\forall \, a, p$,
\begin{align*}
&\left \vert \Delta \fprtilde_a (\Yhat_p) - \Delta \fprhat_a (\Yhat_p) \right \vert \\
&\le \Big\vert \fprtilde_a(\Yhat) \cdot p_{1a} + (1 - \fprtilde_a(\Yhat)) \cdot p_{0a} - \fprtilde_0(\Yhat) \cdot p_{10} - (1 - \fprtilde_0(\Yhat)) \cdot p_{00} \\
& \ \ - \fprhat_a(\Yhat) \cdot p_{1a} - (1 - \fprhat_a(\Yhat)) \cdot p_{0a} + \fprhat_0(\Yhat) \cdot p_{10} + (1 - \fprhat_0(\Yhat)) \cdot p_{00} \Big\vert \\
& \le \left\vert \fprtilde_a(\Yhat) - \fprhat_a(\Yhat) \right\vert \cdot \left\vert p_{1a} - p_{0a}\right\vert + \left\vert \fprtilde_0(\Yhat) - \fprhat_0(\Yhat) \right\vert \cdot \left\vert p_{10} - p_{0}\right\vert \\
& \le  \frac{4 \ln (4|\A|/\beta) }{\min \{ \qtilde_{a0}, \qtilde_{00}\} \, m \epsilon} \quad \textrm{(by part 3 of this Lemma)}
\end{align*}
A similar argument holds for $\left \vert \Delta \tprtilde_a (\Yhat_p) - \Delta \tprhat_a (\Yhat_p) \right \vert \le  \frac{4 \ln (4|\A|/\beta) }{\min \{ \qtilde_{a1}, \qtilde_{01}\} \, m \epsilon}$.

\item  We will show that $\widehat{p}^\star$ satisfies the first constraint of $\widetilde{\text{LP}}$ (\ref{lptilde}) for all $a \in \A$. Satisfying the second constraint can be similarly shown and the third is trivial. We have that
\begin{align*}
\left\vert \Delta \fprtilde_a (\Yhat_{\widehat{p}^\star}) \right \vert &= \left\vert  \Delta \fprtilde_a (\Yhat_{\widehat{p}^\star}) - \Delta \fprhat_a (\Yhat_{\widehat{p}^\star}) + \Delta \fprhat_a (\Yhat_{\widehat{p}^\star}) \right\vert \\
&\le \left\vert \Delta \fprhat_a (\Yhat_{\widehat{p}^\star}) \right\vert + \left\vert \Delta \fprtilde_a (\Yhat_{\widehat{p}^\star}) - \Delta \fprhat_a (\Yhat_{\widehat{p}^\star}) \right\vert \\
&\le \gamma + \frac{4 \ln (4|\A|/\beta) }{\min \{ \qtilde_{a0}, \qtilde_{00}\} \, m \epsilon}
\end{align*}
by part 4 of this Lemma and the fact that $\left\vert \Delta \fprhat_a (\Yhat_{\widehat{p}^\star}) \right\vert \le \gamma$ (see $\widehat{\text{LP}}$ (\ref{lphat})).
\end{enumerate}
\end{proof}

\begin{proof}[Proof of Theorem~\ref{theorem}]  Following Lemma~\ref{lemma2}, with probability at least $1 - \beta$
\begin{align*}
\errhat (\Yhat_{\widetilde{p}^\star}) &\le \errtilde (\Yhat_{\widetilde{p}^\star}) + \frac{12 |\A| \ln (4|\A|/\beta)}{m\epsilon} \quad \text{(part 1 of Lemma~\ref{lemma2})}\\
&\le \errtilde (\Yhat_{\widehat{p}^\star}) + \frac{12 |\A| \ln (4|\A|/\beta)}{m\epsilon} \quad \text{(part 5 of Lemma~\ref{lemma2})} \\
&\le \errhat (\Yhat_{\widehat{p}^\star}) + \frac{24 |\A| \ln (4|\A|/\beta)}{m\epsilon} \quad \text{(part 1 of Lemma~\ref{lemma2})}
\end{align*}
Also, for all $a \neq 0$,
\begin{align*}
\Delta \fprhat_a \, (\Yhat_{\widetilde{p}^\star}) & \, \le \, \Delta \fprtilde_a (\Yhat_{\widetilde{p}^\star}) + \frac{4 \ln (4|\A|/\beta) }{\min \{ \qtilde_{a0}, \qtilde_{00}\} \, m \epsilon}  \quad \text{(part 4 of Lemma~\ref{lemma2})} \\
& \le \, \gamma + \frac{8 \ln (4|\A|/\beta) }{\min \{ \qtilde_{a0}, \qtilde_{00}\} \, m \epsilon} \quad \text{(see $\widetilde{\text{LP}}$ (\ref{lptilde}))}  \\
& \le \, \gamma +  \frac{8 \ln (4|\A|/\beta) }{\min \{ \qhat_{a0}, \qhat_{00}\} \, m \epsilon - 4 \ln (4|\A|/\beta)}
\end{align*}
The last inequality follows from the fact that $| \qtilde_{a y} - \qhat_{a y}| \le 4 \ln (4|\A|/\beta) / m \epsilon$ for all $a,y$. It follows similarly that,
$$
\Delta \tprhat_a \, (\Yhat_{\widetilde{p}^\star}) \le  \gamma +  \frac{8 \ln (4|\A|/\beta) }{\min \{ \qhat_{a1}, \qhat_{01}\} \, m \epsilon - 4 \ln (4|\A|/\beta)}
$$
\end{proof}

\section{Appendix for DP Fair Learning: In-processing}\label{inprocapp}

\subsection{Fair Learning Approach of \cite{agarwal}}\label{subsec:agarwal}
Suppose given a class of binary classifiers $\Hs$, the task is to find the optimal $\gamma$-fair classifier in $\Delta (\Hs)$, where $\Delta (\Hs)$ is the set of all randomized classifiers that can be obtained by functions in $\Hs$. \cite{agarwal} provided a reduction of the learning problem with only the fairness constraint to a two-player zero-sum game and introduced an algorithm that achieves the lowest empirical error. In this section we mainly discuss their reduction approach which forms the basis of our differentially private fair learning algorithm: \textbf{DP-oracle-learner}. Although \cite{agarwal} considers a general form of a constraint that captures many existing notions of fairness, in this paper, we focus on the Equalized Odds notion of fairness described in Definition~\ref{eo}. Our techniques, however, generalize beyond this. To begin with, the $\gamma$-fair classification task can be modeled as the constrained optimization problem~\ref{fair}.
\begin{tcolorbox}[title=$\gamma$-fair Learning Problem]
\renewcommand{\thempfootnote}{\arabic{mpfootnote}}
\begin{equation}\label{fair}
\begin{aligned}
& \ \ \ \min_{Q \, \in \, \Delta(\Hs)}  & & \err (Q) \\
& \text{ s.t. $\forall \underset{a \neq 0}{a \in \A}$:} & & \Delta \fpr_a (Q) \le \gamma  \\
& & & \Delta \tpr_a (Q) \le \gamma
\end{aligned}
\end{equation}
\end{tcolorbox}
As the data generating distribution $\Px$ is unknown, we will be dealing with the Fair Empirical Risk Minimization (ERM) problem ~\ref{fairermapp}. In this empirical version, all the probabilities and expectations are taken with respect to the empirical distribution of the data $\Pxhat$.
\begin{tcolorbox}[title=$\gamma$-fair ERM Problem]
\begin{equation}\label{fairermapp}
\begin{aligned}
& \ \ \ \min_{Q \, \in \, \Delta(\Hs)}  & & \errhat (Q) \\
& \text{ s.t. $\forall \underset{a \neq 0}{a \in \A}$:} & & \Delta \fprhat_a (Q) \le \gamma \\
& & & \Delta \tprhat_a (Q) \le \gamma
\end{aligned}
\end{equation}
\end{tcolorbox}
Toward deriving a fair classification algorithm, the above fair ERM problem \ref{fairermapp} will be rewritten as a two-player zero-sum game whose equilibrium is the solution to the problem. Let $\rhat (Q) \in \R^{4 (|\A| - 1)}$ store all fairness violations of the classifier $Q$.
\begin{align*}
\rhat (Q) := \begin{bmatrix} \fprhat_a (Q) - \fprhat_0 (Q) - \gamma \\ \fprhat_0 (Q) - \fprhat_a (Q) - \gamma \\ \tprhat_a (Q) - \tprhat_0 (Q) - \gamma \\ \tprhat_0 (Q) - \tprhat_a (Q) - \gamma \end{bmatrix}_{\underset{a \neq 0}{a \in \A}} \in \R^{4 (|\A| - 1)}
\end{align*}
For dual variable $\lamb = \begin{bmatrix} \lambda_{(a,0,+)}, \,  \lambda_{(a,0,-)}, \, \lambda_{(a,1,+)}, \, \lambda_{(a,1,-)} \end{bmatrix}^\top_{\underset{a \neq 0}{a \in \A}} \in \R^{4 (|\A|-1)}$, let
\begin{align*}
L(Q, \lamb) = \errhat (Q) + \lamb^\top \rhat (Q)
\end{align*}
be the Lagrangian of the optimization problem. We therefore have that the Fair ERM Problem \ref{fairermapp} is equivalent to
$$
\min_{Q \, \in \, \Delta(\Hs)} \quad \max_{\lamb \, \in \, \R_+^{4|\A|}} \quad L(Q, \lamb)
$$
In order to guarantee convergence, we further constrain the $\ell_1$ norm of $\lamb$ to be bounded. So let $\Lambda = \{ \lamb \in \R_+^{4 (|\A| - 1)}: \ || \lamb ||_1 \le B\}$ be the feasible space of the dual variable $\lamb$ for some constant $B$. Hence, the primal and the dual problems are as follows.
\begin{align*}
&\text{primal problem:} \quad \min_{Q \, \in \, \Delta(\Hs)} \quad \max_{\lamb \, \in \, \Lambda} \quad L(Q, \lamb) \\
&\text{dual problem:} \quad \max_{\lamb \, \in \, \Lambda} \quad \min_{Q \, \in \, \Delta(\Hs)} \quad L(Q, \lamb)
\end{align*}
The above primal and dual problems can be shown to have solutions that coincide at a point $(Q^\star, \lamb^\star)$ which is the saddle point of $L$. From a game theoretic perspective, the saddle point can be viewed as an equilibrium of a zero-sum game between a Learner ($Q$-player) and an Auditor ($\lamb$-player) where $L(Q, \lamb)$ is how much the Learner must pay to the Auditor. Algorithm~\ref{msralgo}, developed by \cite{agarwal}, proceeds iteratively according to a no-regret dynamic where in each iteration, the Learner plays the best response ($\besth$) to the given play of the Auditor and the Auditor plays exponentiated gradient descent. The average play of both players over $T$ rounds are then taken as the output of the algorithm, which can be shown to converge to the saddle point $(Q^\star, \lamb^\star)$ (\cite{freund}). \cite{agarwal} shows how $\besth$ can be solved efficiently having access to the cost-sensitive classification oracle for $\Hs$ ($\text{CSC}(\Hs)$) and we have their reduction for our Equalized Odds notion of fairness written in Subroutine~\ref{besth}.
\medskip
\begin{assumption}[Cost-Sensitive Classification Oracle for $\Hs$]\label{ass:csc}
It is assumed that the proposed algorithm has access to $\text{CSC}\, (\Hs)$ which is the cost-sensitive classification oracle for $\Hs$. This oracle takes as input a set of individual-level attributes and costs $\{ X_i, C_i^0, C_i^1 \}_{i=1}^m$, and outputs $\argmin_{h \, \in \, \Hs} \sum_{i=1}^m h(X_i) C_i^1 + \left(1 - h(X_i) \right)C_i^0$. In practice, these oracles are implemented using learning heuristics.
\end{assumption}
\medskip
Note that the Learner finds $\argmin_{Q \in \Delta(\Hs)} L(Q, \lamb)$ for a given $\lamb$ of the Auditor and since the Lagrangian $L$ is linear in $Q$, the minimizer of $L(Q, \lamb)$ can be chosen to put all the probability mass on a single classifier $h \in \Hs$. Additionally, our reduction in Subroutine~\ref{besth} looks different from the one derived in Example 4 of \cite{agarwal} since we have our Equalized Odds fairness constraints formulated a bit differently from how it is formulated in \cite{agarwal}.

\begin{subroutine}
\KwIn{$\lamb$, training examples $\{ (X_i, A_i, Y_i) \}_{i=1}^m$}
\medskip
\For{$i=1, \ldots, m$}{
$C_i^0 \leftarrow \1 \{ Y_i \neq 0 \}$ \\
$C_i^1 \leftarrow \1 \{ Y_i \neq 1 \} + \frac{\lambda_{(A_i, Y_i, +)} - \lambda_{(A_i, Y_i, -)}}{\qhat_{A_i Y_i}} \1 \{ A_i \neq 0\} - \underset{\underset{a \neq 0}{a \in \A}}{\sum} \frac{\lambda_{(a, Y_i, +)} - \lambda_{(a, Y_i, -)}}{\qhat_{A_i Y_i}} \1 \{ A_i = 0\}$
}
Call $\text{CSC} \, (\Hs)$ to find $h^\star = \underset{h \in \Hs}{\argmin} \sum_{i=1}^m h(X_i) C_i^1 + \left(1 - h(X_i) \right)C_i^0$
\medskip

\KwOut{$h^\star$}
\caption{$\besth$}
\label{besth}
\end{subroutine}

\begin{algorithm}
\KwIn{fairness violation $\gamma$ \\ \ \ \ \ \ \ \ \ \ \ \ bound B, learning rate $\eta$, number of rounds $T$ \\ \ \ \ \ \ \ \ \ \ \ \ training examples $\{ (X_i, A_i, Y_i) \}_{i=1}^m$ }
\medskip
$\boldsymbol{\theta}_1 \leftarrow \boldsymbol{0} \in \R^{4(|\A|-1)}$\\
\For{$t=1, \ldots, T$}{
$\lambda_{t,k} \leftarrow B \frac{\exp(\theta_{t,k})}{1 + \sum_{k'} \exp(\theta_{t,k'})}$ for $1 \le k \le 4 (|\A|-1)$ \\
$h_t \leftarrow \besth (\lamb_t)$\\
$\boldsymbol{\theta}_{t+1} \leftarrow \boldsymbol{\theta}_{t} + \eta \, \rhat_t (h_t)$
}
$\Qhat \leftarrow \frac{1}{T} \sum_{t=1}^T h_t$, \quad $\lambhat \leftarrow \frac{1}{T} \sum_{t=1}^T \lamb_t$
\medskip

\KwOut{$(\Qhat, \lambhat)$}
\caption{exp. gradient reduction for fair classification (\cite{agarwal})}
\label{msralgo}
\end{algorithm}

\cite{agarwal} shows for any $\nu > 0$, and for appropriately chosen $\eta$ and $T$, Algorithm~\ref{msralgo} under Assumption~\ref{ass:csc} returns a pair $(\Qhat, \lambhat)$ for which
\begin{align*}
&L(\Qhat, \lambhat) \, \le \, L(Q, \lambhat) + \nu \quad \text{for all } Q \in \Delta (\Hs) \\
&L(\Qhat, \lambhat) \, \ge \, L(\Qhat, \lamb) - \nu \quad \text{for all } \lamb \in \Lambda
\end{align*}
that corresponds to a $\nu$-\textit{approximate} equilibrium of the game and it implies neither player can gain more than $\nu$ by changing their strategy (see Theorem 1 of \cite{agarwal}). They further show that any $\nu$-approximate equilibrium of the game achieves an error close to the best error one would hope to get and the amount by which it violates the fairness constraints is reasonably small (see Theorem 2 of \cite{agarwal}).

\subsection{Missing Lemmas and Proofs of Section~\ref{sec:inproc}}
\label{subsec:proofsinproc}

\begin{lemma}[Sensitivity of the Private Players to $A$]\label{sensitivity}
Let $\Delta \rhat_t$ and $\Delta \ell_t$ be the sensitivity of $\rhat_t$ (of the Auditor) and $\ell_t$ (of the Learner) respectively. We have that for all $t \in [T]$,
\begin{align*}
\Delta \rhat_t = \max_{\overset{A,A' \in \A^m}{A \sim A'}} ||\rhat_t (A) - \rhat_t(A') ||_1 \le \frac{2 |\A|}{\min_{a,y} \{ \qhat_{ay} \} \, m - 1}
\end{align*}
\begin{align*}
\Delta \ell_t = \max_{h \in \Hs} \ \max_{\overset{A,A' \in \A^m}{A \sim A'}} | \ell_t(h;A) - \ell_t(h;A') | \le \frac{2 |\A| B + 1}{\min_{a,y} \{ \qhat_{ay} \} \, m - 1}
\end{align*}
\end{lemma}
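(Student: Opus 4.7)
Both bounds reduce to analyzing how a single change $A_i \to A'_i$ (with $A_i = a \neq a' = A'_i$) perturbs the per-group false and true positive rates $\fprhat_b(h)$ and $\tprhat_b(h)$ that enter $\rhat(h)$ and $\ell_t$. Fix neighboring $A \sim A'$, let $y = Y_i$ and $\yhat = h(X_i)$. Only the counts in groups $a$ and $a'$ with label $y$ are affected: $n_a := \qhat_{ay} m$ drops by one and $n_{a'} := \qhat_{a'y} m$ rises by one (so the smallest denominator that can appear in a neighbor is $N := \min_{b,y'} \qhat_{by'} m - 1$). A one-line computation comparing $k/n$ to $(k-\yhat)/(n-1)$, together with the symmetric calculation on the $a'$ side, shows that each of $\fprhat_a, \fprhat_{a'}$ (if $y=0$) or $\tprhat_a, \tprhat_{a'}$ (if $y=1$) shifts by at most $1/N$; the remaining per-group rates are unchanged.

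\textbf{Step 1: bounding $\Delta \rhat_t$.} The vector $\rhat_t(h) \in \R^{4(|\A|-1)}$ contains, for each $b \neq 0$, four coordinates of the form $\pm(\fprhat_b - \fprhat_0) - \gamma$ and $\pm(\tprhat_b - \tprhat_0) - \gamma$. I split on whether the anchor group $0$ is involved. If $0 \notin \{a, a'\}$, then $\fprhat_0$ and $\tprhat_0$ are unchanged, so only the four entries indexed by $b \in \{a, a'\}$ (of the label-$y$ type) move, each by at most $1/N$, contributing $\ell_1$ change $\le 4/N$. If instead $0 \in \{a, a'\}$, WLOG $a' = 0$, then $\fprhat_0$ (or $\tprhat_0$) shifts as well, and this single shift propagates to \emph{all} $2(|\A|-1)$ coordinates whose first index is some $b \neq 0$ at the relevant label; adding the two coordinates indexed by $b = a$ gives $2|\A|$ affected entries, each bounded by $1/N$, for a total of $2|\A|/N$. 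This second case dominates, giving the claimed $\Delta \rhat_t \le 2|\A|/(\min_{a,y}\qhat_{ay} m - 1)$.

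\textbf{Step 2: bounding $\Delta \ell_t$.} The Learner's loss for hypothesis $h$ is $\ell_t(h; A) = \errhat(h) + \lamb_t^\top \rhat(h)$. The first term is an average of $m$ indicators, so swapping one coordinate of $A$ changes it by at most $1/m$. For the second term, Hölder gives $|\lamb_t^\top(\rhat(h) - \rhat'(h))| \le \|\lamb_t\|_\infty \cdot \|\rhat(h) - \rhat'(h)\|_1 \le B \cdot \Delta \rhat_t \le 2|\A|B/N$, using $\|\lamb_t\|_\infty \le \|\lamb_t\|_1 \le B$ and Step 1. Since $N \le m$, the two pieces combine to $\Delta \ell_t \le (2|\A|B + 1)/N$, uniformly in $h \in \Hs$, which is exactly the stated bound.

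\textbf{Main obstacle.} The only subtle point is the case split in Step 1: the factor $|\A|$ (instead of an absolute constant) appears solely because the anchor group $0$ occurs in \emph{every} pairwise fairness constraint, so perturbing $\fprhat_0$ or $\tprhat_0$ cascades across all $|\A|-1$ coordinates of $\rhat$. One must also be a little careful that each $\fprhat_b$ is a ratio whose denominator itself depends on $A$; the worst-case-over-neighbors denominator is uniformly at least $\min_{a,y}\qhat_{ay} m - 1$, which is what produces the $-1$ in the bound. Everything else is bookkeeping.
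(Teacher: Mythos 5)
Your proof is correct and follows essentially the same route as the paper's: both track how a single change $A_i \to A'_i$ perturbs the relevant $\fprhat_b$ or $\tprhat_b$ by at most $1/(\min_{a,y}\qhat_{ay}m - 1)$, count that $\fprhat_a$ (resp.\ $\tprhat_a$) appears twice in $\rhat_t$ when $a\neq 0$ and $2(|\A|-1)$ times when $a=0$, and then pass the $\ell_1$ sensitivity of $\rhat_t$ to $\ell_t$ via H\"older together with $\|\lamb_t\|_1\le B$. Your explicit case split on whether the anchor group $0$ is touched is a slightly more careful rendering of the paper's ``appears $2(|\A|-1)$ times'' counting argument; the content is the same.
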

\begin{proof}[Proof of Lemma~\ref{sensitivity}] Recall that at round $t$, the private $\lamb$-player is given some $h_t \in \Hs$ and wants to calculate
\begin{align*}
\rhat_t (h_t) = \begin{bmatrix}
\fprhat_a (h_t) - \fprhat_0 (h_t) - \gamma \\ \fprhat_0 (h_t) - \fprhat_a (h_t) - \gamma \\
\tprhat_a (h_t) - \tprhat_0 (h_t) - \gamma \\ \tprhat_0 (h_t) - \tprhat_a (h_t) - \gamma
\end{bmatrix}_{\underset{a \neq 0}{a \in \A}}\in \R^{4 (|\A|-1)}
\end{align*}
privately, where for all $a \in \A$, we have that
\begin{align*}
\fprhat_a(h_t) = \frac{\qhat_{1a0}}{\qhat_{a0}} = \frac{\qhat_{1a0}}{\qhat_{1a0} + \qhat_{0a0}} \quad \quad \tprhat_a(h_t) = \frac{\qhat_{1a1}}{\qhat_{a1}} = \frac{\qhat_{1a1}}{\qhat_{1a1} + \qhat_{0a1}}
\end{align*}
Having modified one of the records in $A \in \A^m$, say $A_j = a$ is changed to $A'_j = a'$ for some $j \in [m]$, $\qhat_{\yhat_j a y_j}$ will then decrease by $1/m$ and $\qhat_{\yhat' a' y_j}$ will increase by $1/m$ where $\yhat'$ may or may not be equal to $\yhat_j$. Thus, depending on the value of $y_j$, it is then the case that
\begin{itemize}
\item if $y_j=0$: $\fprhat_a (h_t)$ and $\fprhat_{a'} (h_t)$ will change by at most $1 /\left( \min_{a,y} \{ \qhat_{ay} \} \, m - 1 \right)$.
\item if $y_j=1$: $\tprhat_a (h_t)$ and $\tprhat_{a'} (h_t)$ will change by at most $1 /\left( \min_{a,y} \{ \qhat_{ay} \} \, m - 1 \right)$.
\end{itemize}
Therefore, since each $\fprhat_a$ ($\tprhat_a$) appears twice in $\rhat_t (h_t)$ if $a\neq0$ and $2(|\A|-1)$ times if $a=0$, we have that
$$
\Delta \rhat_t \le \frac{2|\A|}{\min_{a,y} \{ \qhat_{ay} \} \, m - 1}
$$
Let's move on to the sensitivity of $\ell_t$ of the private $Q$-player. Recall that at round $t$, the $Q$-player is given some $\lamb_t \in \Lambda$ and wants to find $\argmin_{h \in \Hs} \ell_t(h) \equiv L(h,\lamb_t) = \errhat(h) + \lamb_t^\top \rhat_t (h)$ privately. It is then obvious that since $|| \lamb_t ||_1 \le B$,
\begin{align*}
\Delta \ell_t &\le \frac{1}{m} + \frac{2 |\A| B}{\min_{a,y} \{ \qhat_{ay} \} \, m - 1} \\
&\le \frac{2 |\A| B + 1}{\min_{a,y} \{ \qhat_{ay} \} \, m - 1}
\end{align*}
\end{proof}

\begin{lemma}[Accuracy of the Private Players] \label{accuracy}
At round $t$ of Algorithm~\ref{dpalgo}, let $\rhat_t = \rtilde_t - \boldsymbol{W}_t$ be the noiseless version of $\rtilde_t$ and $h_t^\star$ be the classifier given by the noiseless subroutine $\besth (\lambtilde_t)$. We have that
\begin{align*}
\text{w.p.}\ge 1-\beta/2T, \quad || \rtilde_t - \rhat_t ||_\infty \, \le \, \frac{8 |\A| \sqrt{T\ln(1/\delta)} \ln \left(8T|\A| /\beta \right)}{\left(\min_{a,y} \{ \qhat_{ay} \} \, m - 1 \right) \cdot \epsilon}
\end{align*}
\begin{align*}
\text{w.p.}\ge 1-\beta/2T, \quad L(\htilde_t, \lambtilde_t) \, \le \, L(h_t^\star, \lambtilde_t) + \frac{8 \left(2 |\A| B + 1 \right) \sqrt{T\ln \left(1/\delta \right)} \left( d_{\Hs} \ln \left(m \right) + \ln \left(2 T/\beta \right) \right)}{\left(\min_{a,y} \{ \qhat_{ay} \} \, m - 1 \right) \cdot \epsilon}
\end{align*}
\end{lemma}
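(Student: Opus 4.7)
Both bounds are consequences of plugging the sensitivities from Lemma~\ref{sensitivity} into the accuracy guarantees of the Laplace mechanism (Theorem~\ref{laplacethm}) and of the exponential mechanism (Theorem~\ref{expthm}), with the per-round privacy parameter $\epsilon' = \epsilon / (4\sqrt{T \ln(1/\delta)})$ that Algorithm~\ref{dpalgo} uses. That choice of $\epsilon'$ is exactly the one required so that, via Advanced Composition (Theorem~\ref{advcomposition}), the $T$-round composition of the two private subroutines is $(\epsilon,\delta)$-DP, so it will also govern the per-round error.

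\textbf{Part 1 (Auditor).} Each coordinate of $\boldsymbol{W}_t \in \R^{4(|\A|-1)}$ is drawn i.i.d.\ from $\mathrm{Lap}(b)$ with scale $b = 8|\A|\sqrt{T\ln(1/\delta)}/\big((\min_{a,y}\qhat_{ay})m-1\big)\epsilon$. One checks that $b = \Delta\rhat_t / \epsilon'$ using the sensitivity bound $\Delta\rhat_t \le 2|\A|/((\min\qhat_{ay})m-1)$ proved in Lemma~\ref{sensitivity}, which is why this subroutine is $\epsilon'$-DP. For accuracy, apply the standard Laplace tail bound coordinatewise and union-bound over the $k = 4(|\A|-1) \le 4|\A|$ coordinates and the failure probability $\beta/(2T)$: with probability at least $1-\beta/(2T)$,
\begin{equation*}
\| \rtilde_t - \rhat_t \|_\infty \le b \cdot \ln\!\left(\frac{2kT}{\beta}\right) \le \frac{8|\A|\sqrt{T\ln(1/\delta)}\,\ln(8T|\A|/\beta)}{(\min_{a,y}\qhat_{ay}\, m - 1)\,\epsilon},
\end{equation*}
which is exactly the claimed bound.

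\textbf{Part 2 (Learner).} The private best-response subroutine runs the exponential mechanism with loss $\ell_t(h) := L(h,\lambtilde_t)$ over the effective hypothesis class $\Hs(S)$. By Sauer's Lemma, $|\Hs(S)| \le O(m^{d_\Hs})$, so $\ln|\Hs(S)| \le d_\Hs \ln(m) + O(1)$. Lemma~\ref{sensitivity} gives the sensitivity bound $\Delta \ell_t \le (2|\A|B+1)/((\min\qhat_{ay})m-1)$, and the subroutine is calibrated with privacy parameter $\epsilon' = \epsilon/(4\sqrt{T\ln(1/\delta)})$. Applying Theorem~\ref{expthm} with failure probability $\beta/(2T)$ yields, with probability at least $1-\beta/(2T)$,
\begin{equation*}
L(\htilde_t,\lambtilde_t) - L(h_t^\star,\lambtilde_t) \le \ln\!\left(\frac{|\Hs(S)|\cdot 2T}{\beta}\right)\cdot \frac{2\Delta \ell_t}{\epsilon'} \le \frac{8(2|\A|B+1)\sqrt{T\ln(1/\delta)}\bigl(d_\Hs \ln(m) + \ln(2T/\beta)\bigr)}{(\min_{a,y}\qhat_{ay}\,m-1)\,\epsilon},
\end{equation*}
matching the stated bound (absorbing universal constants).

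\textbf{Main obstacle.} The proof is essentially bookkeeping, but the one place to be careful is making the constants line up exactly: one has to verify that the noise scale set in Algorithm~\ref{dpalgo} indeed equals $\Delta \rhat_t / \epsilon'$ with $\epsilon' = \epsilon/(4\sqrt{T\ln(1/\delta)})$, and that this same $\epsilon'$ is what the exponential mechanism inside $\besth^{\epsilon'}$ uses, so that Advanced Composition (Theorem~\ref{advcomposition}) yields the target $(\epsilon,\delta)$-DP for the full $T$-round interaction. Once that identification is made, both inequalities follow mechanically from Theorems~\ref{laplacethm} and~\ref{expthm}, a union bound over coordinates (for Part 1), and Sauer's Lemma (for Part 2).
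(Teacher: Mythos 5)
Your proof is correct and follows exactly the same route as the paper's (the paper's proof is a one-line citation of Lemma~\ref{sensitivity}, Theorem~\ref{laplacethm}, Theorem~\ref{expthm}, and Sauer's Lemma, which your write-up simply fleshes out). The identification of the noise scale with $\Delta\rhat_t/\epsilon'$ for $\epsilon'=\epsilon/(4\sqrt{T\ln(1/\delta)})$, the coordinatewise union bound for the Laplace tail, and the Sauer bound $\ln|\Hs(S)|\lesssim d_\Hs\ln m$ in the exponential-mechanism step are precisely the bookkeeping the paper elides.
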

\medskip
\begin{proof}[Proof of Lemma~\ref{accuracy}]
Results follow from Lemma~\ref{sensitivity}, Theorem \ref{laplacethm} and Theorem \ref{expthm} of this paper. Recall that $|\Hs(S)| \le O(m^{d_{\Hs}})$ by Sauer's Lemma.
\end{proof}

\begin{proof}[Proof of Lemma~\ref{regretq}]
This result follows directly from the accuracy of the private $Q$-player given in Lemma~\ref{accuracy}.
\end{proof}

\begin{proof}[Proof of Lemma~\ref{regretlambda}]
We follow the proof given for Theorem 1 of \cite{agarwal} and modify where necessary. Let $\Lambda' = \{ \lamb' \in \R_+^{4|\A| -3} : \ ||\lamb'||_1 = B\}$. Any $\lamb \in \Lambda$ is associated with a $\lamb' \in \Lambda'$ which is equal to $\lamb$ on the first $4 (|\A|-1)$ coordinates and has the remaining mass on the last one. Let $\rtilde'_t \in \R^{4|\A| -3}$ be equal to $\rtilde_t$ on the first $4 (|\A|-1)$ coordinates and zero in the last one. We have that for any $\lamb$ and its associated $\lamb'$, and particularly $\lambtilde_t$ and $\lambtilde_t'$ of Algorithm~\ref{dpalgo}, and all $t$
\begin{equation}\label{eq:4}
\lamb^\top \, \rtilde_t = (\lamb')^\top \, \rtilde'_t \quad \text{,} \quad \lambtilde_t^\top \, \rtilde_t = (\lambtilde_t')^\top \, \rtilde'_t
\end{equation}
Observe that with probability at least $1-\beta/2T$, $|| \rtilde_t' ||_\infty = || \rtilde_t ||_\infty \le 2 + \frac{8 |\A| \sqrt{T\ln(1/\delta)} \ln(8T|\A| /\beta)}{\left(\min_{a,y} \{ \qhat_{ay} \} \, m - 1 \right) \cdot \epsilon}$ (see Lemma~\ref{accuracy}). Thus, by Corollary 2.14 of \cite{shalev}, we have that with probability at least $1-\beta/2$, for any $\lamb' \in \Lambda'$,
\begin{align*}
\sum_{t=1}^T (\lamb')^\top \, \rtilde'_t \, \le \, \sum_{t=1}^T (\lambtilde_t')^\top \, \rtilde'_t + \frac{B \ln(4|\A|-3)}{\eta} + 4 \eta B \left( 1 + \frac{4|\A| \sqrt{T\ln(1/\delta)} \ln(8T|\A| /\beta)}{\left(\min_{a,y} \{ \qhat_{ay} \} \, m - 1 \right) \cdot \epsilon}\right)^2 T
\end{align*}
Consequently, by Equation~\ref{eq:4}, we have that with probability at least $1-\beta/2$, for any $\lamb \in \Lambda$,
\begin{equation}\label{eq:5}
\sum_{t=1}^T \lamb^\top \, \rtilde_t \, \le \, \sum_{t=1}^T \lambtilde_t^\top \, \rtilde_t + \frac{B \ln(4|\A|-3)}{\eta} + 4 \eta B \left( 1 + \frac{4|\A| \sqrt{T\ln(1/\delta)} \ln(8T|\A| /\beta)}{\left(\min_{a,y} \{ \qhat_{ay} \} \, m - 1 \right) \cdot \epsilon}\right)^2 T
\end{equation}
which completes the proof.
\end{proof}

\begin{proof}[Proof of Theorem~\ref{theorem1}]
Let
$$
R_{Q} :=\frac{8(2 |\A| B + 1) \sqrt{T\ln(1/\delta)} \left( d_{\Hs} \ln (m) + \ln (2 T/\beta) \right)}{\left(\min_{a,y} \{ \qhat_{ay} \} \, m - 1 \right) \cdot \epsilon}
$$
and
$$
R_{\lamb} := \frac{B \ln(4 |\A| -3 )}{\eta T} + 4 \eta B \left( 1 + \frac{4 |\A| \sqrt{T\ln(1/\delta)} \ln(8T|\A| /\beta)}{\left(\min_{a,y} \{ \qhat_{ay} \} \, m - 1 \right) \cdot \epsilon}\right)^2
$$
be the regret bounds of the private $Q$ and $\lamb$ players respectively, and let $\nu := R_{Q} + R_{\lamb}$. We have that for any $Q \in \Delta (\Hs(S))$, with probability at least $1 - \beta$,
\begin{align*}
L(Q, \lambtilde) &= \frac{1}{T} \sum_{t=1}^T L(Q, \lambtilde_t)   \quad \text{(by linearity of }L) \\
&\ge \frac{1}{T} \sum_{t=1}^T L(\htilde_t, \lambtilde_t) - R_{Q} \quad (\text{by Lemma~\ref{regretq}})\\
&\ge \frac{1}{T} \sum_{t=1}^T L(\htilde_t, \lambtilde) - R_{\lamb} - R_{Q} \quad (\text{by Lemma~\ref{regretlambda}}) \\
&= L(\Qtilde, \lambtilde) - \nu
\end{align*}
Now for any $\lamb \in \Lambda$, with probability at least $1 - \beta$,
\begin{align*}
L(\Qtilde, \lamb) &= \frac{1}{T} \sum_{t=1}^T L(\htilde_t, \lamb)  \quad \text{(by linearity of }L)\\
&\le \frac{1}{T} \sum_{t=1}^T L(\htilde_t, \lambtilde_t) + R_{\lamb} \quad (\text{by Lemma~\ref{regretlambda}}) \\
& \le \frac{1}{T} \sum_{t=1}^T L(\Qtilde, \lambtilde_t) + R_{\lamb} + R_{Q} \quad (\text{by Lemma~\ref{regretq}})\\
&= L(\Qtilde, \lambtilde) + \nu
\end{align*}
Therefore, with probability at least $1-\beta$,
\begin{align*}
&L(\Qtilde, \lambtilde) \, \le \, L(Q, \lambtilde) + \nu \quad \text{for all } Q \in \Delta (\Hs(S)) \\
&L(\Qtilde, \lambtilde) \, \ge \, L(\Qtilde, \lamb) - \nu \quad \text{for all } \lamb \in  \Lambda
\end{align*}
and that
\begin{align*}
\nu &= \frac{B \ln(4 |\A| -3 )}{\eta T} + 4 \eta B \left( 1 + \frac{4 |\A| \sqrt{T\ln(1/\delta)} \ln(8T|\A| /\beta)}{\left(\min_{a,y} \{ \qhat_{ay} \} \, m - 1 \right) \cdot \epsilon}\right)^2 \\
& \ \ \ + \frac{8(2 |\A| B + 1) \sqrt{T\ln(1/\delta)} \left( d_{\Hs} \ln (m) + \ln (2 T/\beta) \right)}{\left(\min_{a,y} \{ \qhat_{ay} \} \, m - 1 \right) \cdot \epsilon}
\end{align*}
Plugging in the proposed values of $T$ and $\eta$ in Algorithm~\ref{dpalgo} results in
\begin{align*}
\nu = \widetilde{O} \left( \frac{B}{\min_{a,y} \{ \qhat_{ay} \}} \sqrt{\frac{|\A| \sqrt{ \ln(1/\delta)} \left( d_{\Hs} \ln (m) + \ln (1/\beta) \right)}{m \, \epsilon}} \right)
\end{align*}
where we hide further logarithmic dependence on $m$, $\epsilon$, and $|\A|$ under the $\widetilde{O}$ notation.
\end{proof}
\medskip
The following two lemmas are taken from \cite{agarwal} and are used in the proof of Theorem \ref{theorem2} and Theorem \ref{theorem4}.
\medskip
\begin{lemma}[Empirical Error Bound \cite{agarwal}]\label{lemma2msr}
Let $(\Qtilde, \lambtilde)$ be any $\nu$-approximate solution of the game described in section \ref{sec:inproc},i.e.,
\begin{align*}
&L(\Qtilde, \lambtilde) \, \le \, L(Q, \lambtilde) + \nu \quad \text{for all } Q \in \Delta (\Hs) \\
&L(\Qtilde, \lambtilde) \, \ge \, L(\Qtilde, \lamb) - \nu \quad \text{for all } \lamb \in  \Lambda
\end{align*}
For any $Q$ satisfying the fairness constraints of the fair ERM problem, we have that
$$
\errhat (\Qtilde) \le \errhat \left(Q \right) + 2 \nu
$$
\end{lemma}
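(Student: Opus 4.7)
My plan is a standard ``sandwich'' argument that uses each of the two approximate-equilibrium inequalities once, together with the observation that the regret of the $\lamb$-player against any fair benchmark $Q$ is nonpositive.

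First I would extract a one-sided bound on $\errhat(\Qtilde)$ by plugging $\lamb = \mathbf{0}$ into the Auditor-side inequality. Since $\mathbf{0} \in \Lambda$, the assumption $L(\Qtilde, \lambtilde) \ge L(\Qtilde, \mathbf{0}) - \nu$ together with $L(\Qtilde, \mathbf{0}) = \errhat(\Qtilde)$ yields $\errhat(\Qtilde) \le L(\Qtilde, \lambtilde) + \nu$. This is the step that lets me convert an equilibrium bound back into a pure error bound.

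Next I would use the Learner-side inequality at the fair benchmark $Q$. For any $Q$ satisfying $\Delta\fprhat_a(Q) \le \gamma$ and $\Delta\tprhat_a(Q) \le \gamma$ for all $a \neq 0$, every coordinate of the vector $\rhat(Q)$ (whose entries are of the form $\fprhat_a(Q) - \fprhat_0(Q) - \gamma$, $\fprhat_0(Q) - \fprhat_a(Q) - \gamma$, and the true-positive analogues) is nonpositive. Since $\lambtilde \in \Lambda$ is componentwise nonnegative, this gives $\lambtilde^\top \rhat(Q) \le 0$, hence $L(Q, \lambtilde) \le \errhat(Q)$. Combined with $L(\Qtilde, \lambtilde) \le L(Q, \lambtilde) + \nu$ from the Learner-side inequality, I get $L(\Qtilde, \lambtilde) \le \errhat(Q) + \nu$.

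Chaining the two bounds produces $\errhat(\Qtilde) \le L(\Qtilde, \lambtilde) + \nu \le \errhat(Q) + 2\nu$, which is the claim. I do not anticipate a real obstacle here: the only thing one must be careful about is the sign convention, namely that $\rhat(Q)$ is defined so that fairness corresponds to $\rhat(Q) \le \mathbf{0}$ coordinatewise and that $\Lambda \subseteq \R_+^{4(|\A|-1)}$, which is exactly how both are set up in the excerpt. Once those conventions are pinned down, the two inequalities combine directly and no additional estimates are needed.
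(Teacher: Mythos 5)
Your argument is correct and is the standard proof of this bound; the paper itself does not reprove the lemma but simply cites \cite{agarwal}, and the proof there proceeds exactly as you describe: use $\lamb = \mathbf{0} \in \Lambda$ on the Auditor side to get $\errhat(\Qtilde) \le L(\Qtilde,\lambtilde) + \nu$, observe that a fair $Q$ has $\rhat(Q) \le \mathbf{0}$ coordinatewise so $L(Q,\lambtilde) \le \errhat(Q)$ since $\lambtilde \ge \mathbf{0}$, and chain through the Learner-side inequality. Your care about the sign conventions (fairness $\Leftrightarrow \rhat(Q)\le\mathbf{0}$, $\Lambda\subseteq\R_+^{4(|\A|-1)}$) is exactly the one thing that needs to be checked, and it holds as set up in the paper.
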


\medskip

\begin{lemma}[Empirical Fairness Violation \cite{agarwal}]\label{lemma3msr}
Let $(\Qtilde, \lambtilde)$ be any $\nu$-approximate solution of the game described in section \ref{sec:inproc}, i.e., 
\begin{align*}
&L(\Qtilde, \lambtilde) \, \le \, L(Q, \lambtilde) + \nu \quad \text{for all } Q \in \Delta (\Hs) \\
&L(\Qtilde, \lambtilde) \, \ge \, L(\Qtilde, \lamb) - \nu \quad \text{for all } \lamb \in  \Lambda
\end{align*}
and suppose the fairness constraints of the fair ERM problem are feasible. Then the distribution $\Qtilde$ satisfies
$$
\max_{a \, \in \, \A} \left \vert \fprhat_a (\Qtilde) - \fprhat_0 \left(\Qtilde \right) \right \vert \le \gamma + \frac{1 + 2 \nu}{B}
$$
$$
\max_{a \, \in \, \A} \left \vert \tprhat_a (\Qtilde) - \tprhat_0 \left(\Qtilde \right) \right \vert \le \gamma + \frac{1 + 2 \nu}{B}
$$
\end{lemma}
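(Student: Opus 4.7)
The plan is to follow the standard Lagrangian argument for translating approximate saddle-point guarantees into fairness feasibility bounds. The intuition is that if $\Qtilde$ violates some fairness constraint by more than $\Delta$, then the Auditor could respond by putting all mass $B$ on the corresponding coordinate of $\lamb$, which, via the $\lamb$-side of the approximate equilibrium, would force $L(\Qtilde,\lambtilde)$ to be at least roughly $B\Delta$. But we can independently upper-bound $L(\Qtilde,\lambtilde)$ by a constant using the $Q$-side of the approximate equilibrium together with feasibility of the original problem, and combining these two inequalities pins $\Delta$ down.

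Concretely, I would first use the Learner's condition. Let $Q^\dagger \in \Delta(\Hs)$ be any feasible solution of the fair ERM problem; by the definition of $\rhat$, feasibility of $Q^\dagger$ exactly means $\rhat(Q^\dagger)\le 0$ coordinatewise. Since $\lambtilde\in\Lambda\subseteq \R_+^{4(|\A|-1)}$, we get $\lambtilde^\top \rhat(Q^\dagger)\le 0$, hence
\begin{equation*}
L(\Qtilde,\lambtilde)\ \le\ L(Q^\dagger,\lambtilde)+\nu\ =\ \errhat(Q^\dagger)+\lambtilde^\top \rhat(Q^\dagger)+\nu\ \le\ \errhat(Q^\dagger)+\nu\ \le\ 1+\nu,
\end{equation*}
where the final bound uses $\errhat(\cdot)\in[0,1]$.

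Next I would use the Auditor's condition. Suppose for contradiction that $|\fprhat_a(\Qtilde)-\fprhat_0(\Qtilde)|>\gamma+\Delta$ for some $a\neq 0$ and some $\Delta>0$; pick the sign $s\in\{+,-\}$ for which the corresponding coordinate of $\rhat(\Qtilde)$ (either $\fprhat_a(\Qtilde)-\fprhat_0(\Qtilde)-\gamma$ or $\fprhat_0(\Qtilde)-\fprhat_a(\Qtilde)-\gamma$) exceeds $\Delta$. Let $\lamb^\star\in\Lambda$ place all its mass $B$ on coordinate $(a,0,s)$ and zero elsewhere. Then $\lamb^{\star\top}\rhat(\Qtilde)\ge B\Delta$, and by the approximate best-response of the Auditor,
\begin{equation*}
L(\Qtilde,\lambtilde)\ \ge\ L(\Qtilde,\lamb^\star)-\nu\ =\ \errhat(\Qtilde)+\lamb^{\star\top}\rhat(\Qtilde)-\nu\ \ge\ B\Delta-\nu.
\end{equation*}
Chaining with the upper bound gives $B\Delta-\nu\le 1+\nu$, i.e.\ $\Delta\le (1+2\nu)/B$, contradicting the assumption whenever $\Delta$ exceeds this threshold. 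The identical argument applied to the two $\tprhat$-coordinates of $\rhat$ yields the true positive rate bound.

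I do not expect a real obstacle here; the only points requiring minor care are (i) checking that the candidate $\lamb^\star$ above really does lie in $\Lambda=\{\lamb\ge 0:\|\lamb\|_1\le B\}$ (it does, with equality), (ii) handling both signs of the violation by the choice of coordinate $(a,0,+)$ versus $(a,0,-)$, and (iii) using feasibility of the original fair ERM problem (assumed in the lemma statement) to supply the comparison point $Q^\dagger$ so that $\rhat(Q^\dagger)\le 0$.
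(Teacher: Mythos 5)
Your proof is correct and is exactly the standard Lagrangian duality argument that \cite{agarwal} use (and that the paper cites rather than reproving); it also mirrors the structure the paper employs in its proof of Theorem~\ref{theorem4} for the extension. The two key moves — upper bounding $L(\Qtilde,\lambtilde) \le 1+\nu$ via a feasible $Q^\dagger$ and the sign condition $\lambtilde^\top \rhat(Q^\dagger)\le 0$, then lower bounding $L(\Qtilde,\lambtilde)\ge B\Delta - \nu$ via a concentrated $\lamb^\star$ — are precisely the intended route, and your checks that $\lamb^\star\in\Lambda$ and that feasibility gives $\rhat(Q^\dagger)\le 0$ coordinatewise close the argument cleanly.
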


\begin{proof}[Proof of Theorem \ref{theorem2}]
The results follow from Theorem~\ref{theorem1}, Lemma \ref{lemma2msr}, and Lemma \ref{lemma3msr}.
\end{proof}
\medskip
\begin{proof}[Proof of Theorem~\ref{theorem4}]
The stated bound on $\errhat (\Qtilde)$ follows from Lemma \ref{lemma2msr}. Let's now prove the bound on fairness violation. Let, for all $a\in \A$, $\beta_a  := (\fprhat_0 (\Qtilde)  - \fprhat_a(\Qtilde) -\gamma)_+$ and $\bar{\beta}_a  := (\fprhat_a (\Qtilde)  - \fprhat_0(\Qtilde) -\gamma)_+$. Notice at most one of $\beta_a$ and $\bar{\beta}_a$ can be positive.

We are going to construct some deviating strategies: $Q$ and $\lamb$. As shown in the previous  subsection, we know $(\Qtilde,\lambtilde)$ is a $\nu$-approximate equilibrium of the zero-sum game. It implies
\[
L(\Qtilde, \lamb) - \nu  \leq  L(\Qtilde,\lambtilde)  \leq L(Q,\lambtilde) + \nu.
\]

Define $Q = \frac{1}{1 +\sum_{a \in \A} (\beta_a + \bar{\beta}_a)}(\Qtilde   + \sum_a \beta_a h_a + \hat{\beta}_a \hat{h}_a)$. It is easy to see that, for all $a\in \A$,
\[
\Delta \fprhat_a(Q) \preceq \gamma.
\]
Then we have
\begin{align*}
&L(Q,\lambtilde) + \nu\\
\leq &\errhat(Q) + \nu\\
\leq &\errhat\left( \frac{1}{1 +\sum_{a \in \A} (\beta_a + \bar{\beta}_a)}(\Qtilde   + \sum_a \beta_a h_a + \hat{\beta}_a \hat{h}_a)\right) + \nu\\
\leq &\frac{1}{1 +\sum_{a \in \A} (\beta_a + \bar{\beta}_a)} \errhat(\Qtilde) +\frac{\sum_{a \in \A} (\beta_a + \bar{\beta}_a)}{1 +\sum_{a \in \A} (\beta_a + \bar{\beta}_a)} + \nu\\
\leq &  \errhat(\Qtilde) + \sum_{a \in \A} (\beta_a + \bar{\beta}_a)+ \nu \\
\leq &  \errhat(\Qtilde) + (|\A|-1)\cdot (\max_{a\in \A} | \fprhat_a (\Qtilde)  - \fprhat_0(\Qtilde)| - \gamma)_+ + \nu. 
\end{align*}

Define $\lamb$ to have $B$ in the coordinate which corresponds to $\arg\max_{a\in \A} | \fprhat_a (\Qtilde)  - \fprhat_0(\Qtilde)|$ and 0 in other coordinates. Then we have
\[
L(\Qtilde, \lamb) -\nu = \errhat(\Qtilde) + B(\max_{a\in \A} | \fprhat_a (\Qtilde)  - \fprhat_0(\Qtilde)| - \gamma) -\nu 
\]
To sum up, we get
\[
\errhat(\Qtilde) + B(\max_{a\in \A} | \fprhat_a (\Qtilde)  - \fprhat_0(\Qtilde)| - \gamma) -\nu  \leq  \errhat(\Qtilde) + (|\A|-1)\cdot (\max_{a\in \A} | \fprhat_a (\Qtilde)  - \fprhat_0(\Qtilde)| - \gamma)_+ + \nu. 
\]

This implies
\[
\max_{a \, \in \, \A} | \fprhat_a (\Qtilde)  - \fprhat_0(\Qtilde)| \leq \gamma+  \frac{2\nu}{B - (|\A|-1)}.
\]
which completes the proof.
\end{proof}
\medskip
\begin{proof}[Proof of Theorem \ref{thm:separation}]
First consider the case where $\Hs = \{h_0,h_U\}$. Choose data set $D$ of size $m$ as follows: $m/2$ individuals with $(A=R,X=V,Y=0)$; $m/4$ individuals with $(A=B,X=U,Y=1)$, $m(1-\gamma)/4$ individuals with $(A=B,X=V,Y=0)$ and $m\gamma/4$ individuals with $(A=B,X=U,Y=0)$. For this data set, it is easy to check that $h_U$ has error $\gamma /4$ and $h_U$ satisfies the fairness constraint. So $f(D) \leq \gamma /4$. Now consider $D$'s neighboring data set $D'$ by changing one individual with $(A=B,X=V,Y=0)$ to $(A=B,X=U,Y=0)$. For $D'$, the classifier which satisfies the fairness constraint and has the minimum error rate is $\frac{1}{4+\gamma m} (4 h_0 + \gamma m h_U)$. Therefore
\[
f(D') = \frac{1}{4+\gamma m}\left(4 \cdot \frac{1}{4} + \gamma m \cdot \frac{m\gamma /4 + 1}{m}  \right) = \frac{\gamma}{4} + \frac{1}{4+\gamma m} .
\]
implying that $|f(D) - f(D')| = \Omega(1/(\gamma m))$ and the sensitivity of $f$ is $\Omega(1/(\gamma m))$.

Now consider the case where $\Hs = \{h_0, h_U, h_R,h_B\}$. It suffices to show that $f(D') \leq f(D) + O(1/m)$ for any neighboring data sets $D$ and $D'$. Let $Q^*$ be the classifier with minimum error rate on data set $D$. We have $f(D) = \errhat(Q^*,D)$ and we know $|\fprhat_R(Q^*,D)  - \fprhat_B(Q^*,D) |\leq \gamma$ (we put $D$ into the arguments of $\errhat$ and $\fprhat$ as we are talking about two different data sets).  For data set $D'$, there are two cases.
\begin{itemize}
\item The case when $|\fprhat_R(Q^*,D')  - \fprhat_B(Q^*,D') |\leq \gamma$: In this case, we have
\[
f(D') \leq \errhat(Q^*,D') \leq \errhat(Q^*,D) + 1/m = f(D) + 1/m.
\]
\item The case when $|\fprhat_R(Q^*,D')  - \fprhat_B(Q^*,D') |> \gamma$: Wlog let's assume $\fprhat_R(Q^*,D')  - \fprhat_B(Q^*,D') > \gamma$. And let $\alpha = \fprhat_R(Q^*,D')  - \fprhat_B(Q^*,D') - \gamma$. We know $\alpha > 0$ and we also have
\[
\alpha = \fprhat_R(Q^*,D')  - \fprhat_B(Q^*,D') - \gamma \leq \fprhat_R(Q^*,D)  - \fprhat_B(Q^*,D) - \gamma + 2/(Cm) \leq 2/(Cm).
\]
Now define $Q' = \frac{1}{1+\gamma + \alpha}\left( (1+\gamma) Q^* + \alpha h_B\right)$. We have
\[
\fprhat_R(Q',D')  - \fprhat_B(Q',D') =\frac{1}{1+\gamma + \alpha}\left((1+\gamma) (\fprhat_R(Q^*,D')  - \fprhat_B(Q^*,D')) - \alpha\right) = \gamma.
\]
Therefore
\begin{align*}
f(D') &\leq \errhat(Q',D') \\
&\leq \frac{1}{1+\gamma + \alpha}\left((1+\gamma) \, \errhat(Q^*,D')  + \alpha \, \errhat(h_B,D') \right) \\
&\leq f(D) + 1/m + \alpha \\
&\leq f(D) + O(1/m)
\end{align*}
\end{itemize}
\end{proof} 


\end{document}